\documentclass[12pt, a4paper]{article}
\usepackage[utf8]{inputenc}
\usepackage[ruled,vlined]{algorithm2e}
\usepackage[title]{appendix}
\usepackage[T1]{fontenc} 
\usepackage[margin=1.15in, bottom =1.15in]{geometry}
\geometry{a4paper}
\usepackage{graphicx} 
\usepackage{amssymb, amsmath, amsthm}
\usepackage{color}

\usepackage{soul}
\usepackage{tikz}
\usepackage[compatibility=false,labelfont=it,textfont={it}]{caption}
\captionsetup{labelfont=bf}
\usetikzlibrary{mindmap,shadows}
\usepackage{lscape}
\newtheoremstyle{mystyle}
  {}{}
  {\itshape}{}
  {\bfseries}{.}
  { }{\thmname{#1}\thmnumber{ #2}\thmnote{ (#3)}}

\theoremstyle{mystyle}

\newcommand{\prob}{\rho}

\newcommand{\Lie}[2]{\pounds_{#1}{#2}}  
\newcommand{\Ucal}{\mathcal{U}}
\numberwithin{equation}{section}
\def\contract{\makebox[1.2em][c]{\mbox{\rule{.6em}
{.01truein}\rule{.01truein}{.6em}}}}

\RequirePackage{color}
\definecolor{dullred}{RGB}{0,0,190}   
\usepackage{tikz}
\usepackage{tikz-cd}
\usetikzlibrary{intersections, backgrounds}
\PassOptionsToPackage{colorlinks, breaklinks%
     ,urlcolor=dullred
           ,citecolor=dullred
           ,linkcolor=dullred
           ,pagecolor=white
     }{hyperref}
\IfFileExists{hyperref.sty}{\RequirePackage{hyperref}}%
 {\typeout{hyperref.sty is not loaded so you will not get active hyperlinks}%
 \providecommand{\href}[2]{#2}%
}
\setcounter{MaxMatrixCols}{20}
\newtheorem{theorem}{Theorem}[section]
\newtheorem{corollary}[theorem]{Corollary}

\newtheorem{lemma}[theorem]{Lemma}
\newtheorem{definition}[theorem]{Definition}

\newtheorem{problem}[theorem]{Problem}
\newtheorem{remark}[theorem]{Remark}

\def\contrac{\mathbin{\hbox to 6pt{%
                 \vrule height0.4pt width5pt depth0pt
                 \kern-.4pt
                 \vrule height6pt width0.4pt depth0pt\hss}}}

\usepackage{tikz}
\usetikzlibrary{bayesnet}

\title{Multisymplectic Formulation of Deep Learning Using Mean--Field Type Control and Nonlinear Stability of Training Algorithm}
 
\author{Nader T. Ganaba \thanks{\href{mailto:nganaba@outlook.com}{\texttt{nganaba@outlook.com}}}}
\date{}

\begin{document}

\maketitle

\begin{abstract}
As it stands, a robust mathematical framework to analyse and study various topics in deep learning is yet to come to the fore. Nonetheless, viewing deep learning as a dynamical system allows the use of established theories to investigate the behaviour of deep neural networks.  In order to study the stability of the training process, in this article, we formulate training of deep neural networks as a hydrodynamics system, which has a multisymplectic structure. For that, the deep neural network is modelled using a stochastic differential equation and, thereby, mean--field type control is used to train it. The necessary conditions for optimality of the mean--field type control reduce to a system of Euler--Poincar\'{e} equations, which has the a similar geometric structure to that of compressible fluids. The mean--field type control is solved numerically using a multisymplectic numerical scheme that takes advantage of the underlying geometry.  Moreover, the numerical scheme, yields an approximated solution which is also an exact solution of a hydrodynamics system with a multisymplectic structure and it can be analysed using backward error analysis.  Further, nonlinear stability yields the condition for selecting the number of hidden layers and the number of nodes per layer, that makes the training stable while approximating the solution of a residual neural network with a number of hidden layers approaching infinity.
\end{abstract}
 
\section{Introduction}
Machine learning has gained an incredible amount of attention in recent years, partly thanks to the recent achievement and successes in solving real-world problems. Problems range from  data-driven drug discovery such as using autoencoders \cite{gomez2018automatic} and reinforcement learning to generate new molecules \cite{olivecrona_molecular_2017}, to others for reducing the dimensionality of single-cell RNA sequencing \cite{wang_vasc_2018}. Not just computational bioinformatics, but also image recognition \cite{farabet_learning_2013,szegedy_going_2014, krizhevsky_imagenet_2017}, and controlling vehicles \cite{zhu_safe_2020,folkers_controlling_2019}  to name a few. As architectures become more elaborate, training them becomes more challenging and this has motivated the influx of techniques from other mathematical disciplines, such as optimal control \cite{weinan2017proposal, li2017maximum,weinan2019mean} and optimal transport \cite{salimans_improving_2018}. Despite that, there is still no mathematical framework in which to examine the behaviour of deep networks, or the convergence of training methods.   In \cite{benning2019deep}, optimal control theory was nominated as a framework for studying the stability of training, and in \cite{Haber_2017}, it was studied by examining the stability discretisation methods for ordinary differential equations. As optimal control problems are linked to fluid dynamics \cite{bloch2000optimal}, we extrapolate on the ideas of \cite{Haber_2017,benning2019deep} by considering the optimal control problem from the perspective of multisymplectic formulation of hydrodynamics.  Along with \cite{ganaba_deep_2021-2}, the aim is to translate deep learning into the language of geometric mechanics in order to gain more insight into the comportment of deep networks. Profiting from the multisymplectic nature formulation, in this article, we introduce a training algorithm based on a multisymplectic numerical scheme. This type of schemes is the exact solution of another dynamical system whose Lagrangian and Hamiltonian functions are found using backward error analysis. Knowing the Hamiltonian for the numerical scheme, we use energy--Casimir method to find the conditions for the nonlinear stability of the numerical scheme. One of the benefits of studying the nonlinear stability is that it provides a method to heuristically select the network's parameters such number of hidden layers and nodes per layer.
\\

The route to this formulation starts with the fact that, as pointed out in \cite{weinan2017proposal}, residual neural networks, also referred to as \emph{ResNets}, can be regarded as the Euler discretisation of a dynamical system. Thus, it is possible to take the continuous limit and treat it as a neural network with an infinite number of hidden layers. To train such networks, one possible way is to use optimal control.  The solution of the optimal control problem is obtained using Pontryagin's Maximum Principle \cite{doi:10.1002/zamm.19630431023}, which states that the trajectory of the dynamical system solves a Hamiltonian system.  Thus, Pontryagin's Maximum Principle provides a passage to symplectic geometry \cite{agrachev1990symplectic}. 
\\

Optimal control training methods use ordinary differential equation for ResNets \cite{li2017maximum,Haber_2017,benning2019deep}, here, we use a more general representation that accounts for uncertainties. Thus, optimal control problem becomes a stochastic one and that introduces additional challenges. There are many ways to solve stochastic optimal control problems and here we limit ourselves to only mean--field type control. The reasoning behind this choice is that mean--field type control formulates a stochastic control problem as a deterministic one, and in theory. Mean--field type control originated from mean--field games, which is a mathematical framework developed to control an ensemble of identical agents and it was formulated in \cite{huang_large_2006}. At the same time and independently, mean--field games was introduced in \cite{lasry_jeux_2006, lasry_jeux_2006-1,lasry_mean_2007} as a framework for modelling differential games of an infinite number of agents in a population. Its treatment for optimal control problem was later discussed in \cite{bensoussan_mean_2013,bensoussan_general_2013} and references therein. For deep learning, it was employed in in \cite{weinan2019mean}, where training was done using mean--field games, although consists of a system of forward and backward stochastic differential equations, known as McKean--Vlasov dynamics \cite{carmona_control_2013,carmona2015forward}.  On the contrary, the mean--field games system used here consists of a system of a coupled Fokker--Planck and Hamilton--Jacobi--Bellman equations and both are partial differential equations.  As mentioned, for the deterministic case, the optimal solution satisfies a Hamiltonian system, which is a covariant Hamiltonian system in this case. That said, symplectic geometry does not cover covariant Hamiltonian systems, and for this reason, multisymplectic geometry is required, which is a generalisation of symplectic geometry \cite{marsden1998multisymplectic}.
\\

Despite it alluded to in \cite{marsden1998multisymplectic}, the link between multisymplectic geometry and Pontryagin's Maximum Principle is not well explored. When the network parameters, treated as control, are elements of the Lie algebra associated with the group of diffeomorphisms, the optimal control problem becomes a special case known as \emph{Clebsch optimal control} \cite{gay2011clebsch}. This is vital also for reduction. In \cite{doi:10.1098/rspa.2007.1892}, it was shown that the stationary variations of the Clebsch variational principle for hydrodynamic systems yield a multysymplectic system, which is the a system of three equations: Fokker--Planck, Hamilton--Jacobi--Bellman and the optimality condition. The Fokker--Planck equation is a forward partial differential equation, while the Hamilton--Jacobi--Bellman equation is a backward partial differential equation. Using the condition that the network's weights are vector fields of the group of diffeomorphisms, the mean--field games system is reduced, by eliminating the Hamilton--Jacobi--Bellman equation, to a system of forward equations. The result is a system of hydrodynamic equations known as \emph{Euler--Poincar\'{e} equation with advected density}\cite{holm1998euler}. Many hydrodynamic equations, such as the Kortweg-de Vries equation \cite{ovsienko1987korteweg}, the Camassa-Holm equation \cite{PhysRevLett.71.1661, doi:10.1063/1.532690} are part of the Euler--Poincar\'{e} family of equations. From the calculus of variations, the trajectory that minimises a cost functional satisfies the  Euler-Lagrange equation, and this case is the mean--field system, and when there is symmetry, the variational problem is reduced and the optimal trajectory satisfies an equivalent the Euler--Poincar\'{e} equation. With that, the claim here is that the weights of the deep residual neural network satisfy a variant of the Euler--Poincar\'{e} equation with advected quantity on the group of diffeomorphisms.
\\

As the nature of data is discrete, the optimal control problem needs to be projected to a finite-dimensional space by using numerical schemes. For systems whose dynamics are described by an ordinary differential equation, the Hamiltonian equations of motion for the optimal control problems have a symplectic structure, hence geometric integrators can be used to solve optimal control problems numerically as it was done \cite{junge_discrete_2005,ober-blobaum_discrete_2011,de_leon_discrete_2007}. The advantage of using geometric integrators is their stability and preservation of the underlying geometry, thus eliminating the possibility of the emergence of unrealistic solutions.   Due to the fact that mean-field type optimal control is multisymplectic in nature, we thus use multisymplectic integrators derived using a discrete variational method \cite{marsden1998multisymplectic}, as an extension to discrete mechanics optimal control of \cite{junge_discrete_2005,ober-blobaum_discrete_2011,de_leon_discrete_2007}. This projection is important, as it provides a systematic way to train a network with a finite number of hidden layers, and concurrently simulates the behaviour of a network with an infinite number of hidden layers. To verify the presented training algorithm using a multisymplectic variational integrator, two toy problems are solved: a density matching problem from variational inference and images of handwritten digits generation of using a variational autoencoder. It is worth pointing out that a multisymplectic integrator can be applied to problems in machine learning in a different way as in \cite{barbaresco2020lie}, where the data points are elements of a Lie group. Whereas, here, the network parameters belong to a Lie algebra, and the multisymplectic structure arises from mean--field type control for stochastic optimal control. The data need not to belong to a Lie group to use our approach to multisymplectic integrators.
\\

Reducing the optimal control problem to a hydrodynamics problem and solving it using a multisymplectic scheme allows us to employ the same methods used to analyse the stability of fluids \cite{arnold_priori_2014,arnold_conditions_2014,HOLM19851,HOLM198315} to determine the stability of the training algorithm. An intermediate step is required to carry out stability analysis is backward error analysis. In short, backward error analysis seeks to find an equation, known as \emph{the modified equation}, whose exact solution is the numerical solution obtained by using the multisymplectic integrator. In \cite{islas_backward_2005}, it was shown that multisymplectic integrators do have a multisymplectic modified equations. For our analysis, the modified equation helps to derive the error equation for the numerical error and show that it is nonlinearly stable. This means if the error is small enough, then it remains bounded.  To establish nonlinear stability of a system, we use a method called \emph{energy--Casimir method} \cite{HOLM19851}. It is an algorithmic approach to constructing a norm for determining if the equilibrium solution is nonlinearly stable. This is done by adding a conserved quantity known as the Casimir function to the Hamiltonian.  Network parameters such as the number of hidden layers, number of nodes and learning rate appear in the modified equation, and nonlinear stability provides conditions for the said parameters to ensure that the training problem converges.  The main difference between the approach taken here and \cite{benning2019deep}, is that the latter is based on exponential stability and uses Lyapunov's notion of stability. 
\\

\emph{The outline of this paper:} Section \ref{sec:prelim} gives a brief review of topics, and notions that are used throughout this article. The recap starts with giving the most important definitions of multisymplectic geometry, in section \ref{sec:mul_geo}, and then moves to optimal control theory in section \ref{sec:opt_cont}. We also give a statement of the Pontryagin's Maximum Principle, without providing the proof as it is too technical to review here. The review is concluded with a thumbnail sketch of mean--field type control \ref{sec:mfg}. In section \ref{sec:mul_geo_dl}, a formulation of deep learning problem as a Fokker--Planck Hamilton--Jacobi--Bellman mean--field games system is presented and such problem is shown to be a multisymplectic system.  The mean--field games system is reduced to obtain a Euler--Poincar\'{e} equation with advected parameter. A numerical scheme for the multisymplectic system for deep learning is presented in  section \ref{sec:var_int} and it is based on multisymplectic variational integrator and it is demonstrated how it can be used for practical problems: density matching, and variational autoencoder, discussed in section \ref{sec:density} and \ref{sec:autoencoder}, respectively. Backward error analysis for the multisymplectic scheme is carried out in section \ref{sec:bea}. Section \ref{sec:conserv_dl} concerns with the study of the nonlinear stability of multisymplectic scheme, where the conserved quantities and Noether's theorem for deep learning are discussed in section \ref{sec:conserv_dl}. A Hamiltonian formulation is presented in section \ref{sec:Hamiltonian} and it is need to apply the energy--Casimir method, which is reviewed in section \ref{sec:ECM}. The main result of nonlinear stability is presented in section \ref{sec:nonlinear_stability} along with conditions on the number of nodes and hidden layers that guarantee stable training.

\section{Preliminaries}\label{sec:prelim}
\subsection{Review of multisymplectic geometry}\label{sec:mul_geo}
In this section, we provide a brief description of multisymplectic geometry to familiarise the reader with the notation and basic concepts used throughout this paper. Most of the definitions and concepts mentioned here are discussed in detail in \cite{marsden1998multisymplectic,marsden_multisymplectic_1999}. The core of the numerical algorithm of section \ref{sec:var_int} is rooted in multisymplectic geometry and the idea is to discretise the infinite-dimensional equations to finite-dimensional ones in a way that retains most of the geometry. Although this adds an extra level of complexity, it is important in understanding the behaviour of the algorithm. 
\\

The interest regarding multisymplectic geometry stemmed from the attempts at finding a formulation that would generalise classical Hamiltonian dynamics to account for covariant field-theoretic Hamiltonian \cite{de2011generalized,carinena_multisymplectic_1991,gotay_multisymplectic_1991, echeverria_enriquez_variational_1992, gotay_multisymplectic_1991-1}. One of the first encouraging examples of the promises of multisymplectic formulation was the recovery of the symplectic structure of the Korteweg-de Vries equation, which has a covariant Hamiltonian density \cite{gotay_multisymplectic_1988}. The generalisation to a larger family of Hamiltonian partial differential equations, written in an abstract form, 
\begin{align}
    M \partial_t z + K \partial_x z = \partial_z S(z),
\end{align}
was introduced in \cite{bridges1997multi}. A Lagrangian equivalent to the multisymplectic formulation based on variational principle was presented in \cite{marsden1998multisymplectic}. Most importantly, the aforementioned paper introduced variational integrators for partial differential equations. The link between both Hamiltonian and Lagrangian versions of multisymplectic theory was explored in detail in \cite{marsden_multisymplectic_1999}.
\\

Let $Q$ be a smooth manifold. In Hamiltonian mechanics, the trajectories belong to the cotangent bundle, $T^{\ast}Q$, and for configuration manifold, as opposed to the tangent bundle, $TQ$, in Lagrangian mechanics. A curve on $Q$ is parameterised by an independent variable, in many cases the independent variable is time $t$, and without the loss of generality, let $t \in B$, where $B \subset \mathbb{R}$, be the base manifold. The image generated by $q(t)$ defines a subset of manifold  $Q$. We also have the preimage $\pi: Q \to B$. When the curve $q$ depends on more than one variable, the base manifold $B$ is an $n+1$ dimensional manifold, with $n$ spatial dimensions and one temporal dimension. For example, when we have a $(1 + 1)$ partial differential equation the base manifold is chosen to be $B := [l_0, l_1]\times [t_0,t_1] \subset \mathbb{R}^2$. The configuration manifold is instead a fibre bundle $\pi_{BE}: E \to B$. The preimage $\pi^{-1}_{BE}: B \to E$  of $E$ over a point $x \in B$ denotes the \emph{fibre at point $x$}.  A continuous function $\phi: B \to E$ is the \emph{section} of $\pi_{BE}$ and we use $\Gamma(E)$ to denote all the section on $E$. In the language of partial differential equations, a section in $\Gamma \left( \pi_{BE} \right)$ corresponds to the solution of the partial differential equation. 
\\

The rate of change of the curve $q(t) \in Q$ with respect to $t$, denoted by $\dot{q}(t)$, is the derivative of $q(t)$ and it belongs to the tangent bundle $TQ$, and the covariant analogue of that is \emph{the first jet bundle} $J^1( E)$. It is defined as a bundle over $ E$, whose fibres are the mapping $\gamma: T_x  B \to T_y  E$  such that $T_{\pi_{BE} }\circ \gamma = \mathsf{Id}_{T_{x}B } $. To put it into other words, a first jet bundle is an equivalence relation between two curves evaluated at the same point that are equal up to their first-order derivatives. Using the same construction, once can construct higher jet bundles, the $(n+1)^{th}$ jet bundle is a fibre bundle on the $(n)^{th}$ jet bundle.
\begin{figure}[h]
\centering
\includegraphics[width=0.45\textwidth]{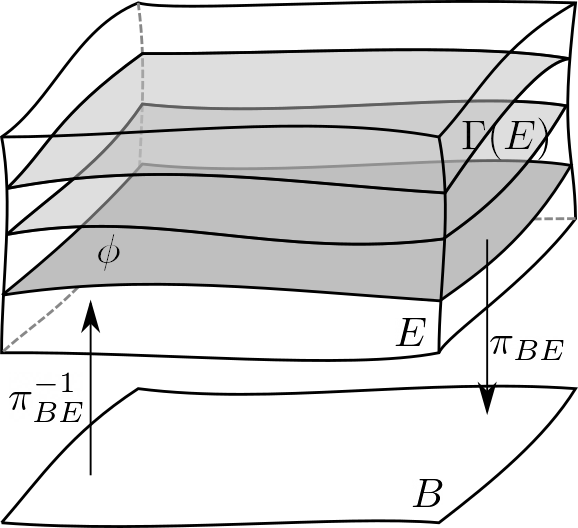}
\caption{A pictorial representation of a fibre bundle $\pi_{BE}: E \to B$. }
\end{figure}
As mentioned before, the base manifold $ B $ is an $n+1$ dimensional manifold, and its coordinates are $x^{\mu}$ where $\mu = 1, \dots, n, 0$ and the fibres on  $E$ have coordinates $y^A$, with $A = 1, \dots, N$. Meanwhile, the first jet bundle  $J^1( E)$'s coordinates are denoted by $v_{\mu}^A$. Now, we are interested in sections on $J^1( E)$ and specifically a section for $B \to J^1( E)$  and to do so, we define the section $\phi: B \to E$ and to ``lift it'' to $J^1( E)$, we take its tangent map $T_x \phi$. The section on  $J^1( E)$ over $B$ is denoted by $j^1(\phi)$ and it is known as the \emph{first jet prolongation} and in coordinates it is given by
\begin{align*}
x^{\mu} \mapsto \left(x^{\mu}, \phi^A, \frac{\partial}{\partial x^{\mu} }\phi^A \right).
\end{align*}
Cotangent bundles are regarded as the dual of the tangent bundle. The same is true here, \emph{the dual of the first jet bundle} $J^1( E)^{\ast}$ is the covariant analogue of the cotangent bundle and it is the space of maps of $J^1( E)$ to $\Lambda^{n+1}(B)$, the bundle of $n+1$-forms over $B$. The coordinates of $J^1( E)^{\ast}$ are given by
\begin{align*}
(p, p_A^{\mu}), \quad v_{\mu}^A \to (p + p_A^{\mu}v_{\mu}^A) d^{n+1}x,
\end{align*}
and here the differential form $d^{n+1}x$ is $dx^1 \wedge dx^2 \wedge \dots \wedge dx^n \wedge dt$. Next quantities that are important to the multisymplectic formulation are the $n+1$ and $n+2$ forms that are covariant counterparts of the canonical one and two-forms in symplectic geometry. These forms play a pivotal role in multisymplectic geometry as we shall see later. We first define the projection $\pi_{E(J^1( E)^{\ast})} :J^1( E)^{\ast} \to E$ is the projection from the bundle of $\Lambda^{n+1}$ forms to $E$, then we define the sub-bundle $Z$ by
\begin{align*}
Z:= \left\{ z \in \Lambda^{n+1}_y \, | \,V \contract (W \contract z)  = 0, \quad V, W, \in V_yE\right\},
\end{align*}
where $V_yE$ is the vertical sub-bundle of $TE$ defined by
\begin{align*}
V_yE := \left\{ W \in T_yE \, | \,  T\pi_{BE} \cdot W = 0 \right\},
\end{align*}
and it can be described as the space of elements of $T_yE$, which renders the derivative of $\pi_{BE}$ along their direction to be zero. The elements of $Z$ are defined as
\begin{align*}
z = p d^{n+1}x + p_A^{\mu} \mathrm{d}y^A \wedge \left( \partial_{\mu} \contract d^{n+1}x  \right).
\end{align*}
The canonical $n+1$ form $\Theta_{\Lambda}$ on $\Lambda^{n+1}$ is defined by
\begin{align*}
\Theta_{\Lambda}(z) \left( u_1, \dots, u_{n+1} \right) = \left( \pi_{E \Lambda}^{\ast} z\right) \left( u_1, \dots, u_{n+1} \right),
\end{align*}
where $\pi_{E \Lambda}: J^1( E)^{\ast} \to E$ the projection from the space of $n+1$ forms to $E$ and $u_1, \dots, u_{n+1}  \in T_{\Lambda} \Lambda^{n+1}$ and in coordinates that is
\begin{align*}
\Theta_{\Lambda} = p_A^{\mu} dy^A \wedge \left( \partial_{\mu} \contract d^{n+1}x \right) + p d^{n+1}x.
\end{align*}
The canonical $n+2$ form $\Omega$ is defined by $\Omega = - d \Theta_{\Lambda}$ and in coordinates it is given by
\begin{align}
\Omega = dy^A \wedge dp_A^{\mu}\wedge \left( \partial_{\mu} \contract d^{n+1}x \right) - dp\wedge d^{n+1}x.
\end{align}
The Lagrangian density $\mathcal{L}: J^1(E) \to \Lambda^{n+1}(B)$
\begin{align}\label{eq:lagrange_density}
\mathcal{L} = L(j^1(\phi)) d^{n+1}x,
\end{align}
and the Legendre transformation is the map $\mathbb{F}\mathcal{L} : J^1(E) \to J^1(E)^{\ast}$ and it is defined by $\mathbb{F}\mathcal{L}(\gamma) \cdot \gamma' = \mathcal{L}(\gamma) + \left. \frac{d}{d\epsilon} \right|_{\epsilon = 0} \left( \mathcal{L}( \gamma + \epsilon (\gamma' - \gamma)) \right)$, where $\gamma \in J^1(E)$. This is used to define the \emph{Cartan form}, which is the result of the variational principle. The Cartan form is known as the Lagrange one-form in the finite dimensional case. What is different regarding the Cartan form, $\Theta_{\mathcal{L}}$ known as \emph{the multisymplectic form}, is that it an $n+1$ form defined on $J^1(E)$ instead of $J^1(E)^{\ast}$
\begin{align}\label{eq:multisymplectic_form}
\Theta_{\mathcal{L}} = \left( \mathbb{F} \mathcal{L} \right)^{\ast} \Theta,
\end{align}
and in coordinates it is expressed explicitly by
\begin{align}
\Theta_{\mathcal{L}} = \left( \frac{\partial L}{\partial  \phi_{\mu}^A} \right)d\phi_{\mu}^A \wedge \left( \partial_{\mu} \contract d^{n+1}x \right) + \left( L - \frac{\partial L}{\partial  \phi_{\mu}^A} \phi_{\mu}^A \right) d^{n+1}x
\end{align}
Similarly, the $n+2$-from $\Omega_{\mathcal{L}}$ on $J^1(E)$ is defined by
\begin{align*}
\Omega_{\mathcal{L}} = - d \Theta_{\mathcal{L}} = \left( \mathbb{F} \mathcal{L} \right)^{\ast} \Omega.
\end{align*}
Using the Lagrangian density \eqref{eq:lagrange_density}, the action functional is defined as
\begin{align}\label{eq:action_multisymplectic}
\mathcal{S}(\phi) = \int \mathcal{L}(j^1(\phi \circ \phi_B^{-1})),
\end{align}
where $\phi: U \to E$,  $U$ is a smooth manifold with smooth closed boundaries  and $\phi_{B} = \pi_{BE} (\phi) : U \to B$. Let $\Phi : G \times \mathcal{F} \to \mathcal{F}$, where $\mathcal{F}$ is the set of smooth maps, defined by
\begin{align}
\Phi(\eta_{E}, \phi) = \eta_{E} \circ \phi.
\end{align}
The map $\eta_{E}: E \to E$ is a diffeomorphism and analogously, on the base space  $\eta_{B}: B \to B$. The purpose of using such diffeomorphism is that we can construct a family of sections on $E$. In this particular case, let $\eta^{\epsilon}_{E} $ be a smooth curve in $G$ parameterised by $\epsilon$ and $\eta^{0}_{E}  = e$ the identity of $G$. Also, let
\begin{align}
V = \left. \frac{d}{d\epsilon} \right|_{\epsilon = 0} \Phi(\eta^{\epsilon}_{E}, \phi), \quad \text{and} \quad V_{B} = \left. \frac{d}{d\epsilon} \right|_{\epsilon = 0} \eta^{\epsilon}_{B} \circ \phi,
\end{align}
where $V$ is a vector field for the transformation group on $E$ and $V_{B}$ is a vector field for the transformation group on $B$. With that, the variations of the action functional \eqref{eq:action_multisymplectic} is  
\begin{align}
\delta \mathcal{S}(\phi) = \left. \frac{d}{d \epsilon} \right|_{\epsilon = 0} \mathcal{S}(\Phi(\eta^{\epsilon}_{E}, \phi)) =  \left. \frac{d}{d \epsilon} \right|_{\epsilon = 0} \int \mathcal{L}(j^1(\Phi(\eta^{\epsilon}_{E}, \phi) \circ \phi_B^{-1})),
\end{align}
and using the property of diffeomorphisms $\eta^{\epsilon}_{E} $ and $\eta^{\epsilon}_{B} $ on sections 
\begin{align*}
    j^1(\Phi(\eta^{\epsilon}_{E}, \phi) \circ \phi_B^{-1}) = j^1( \eta_{E} \circ \phi \circ \phi_B^{-1} \circ \eta_{B}^{-1}) &= j^1(\Phi(\eta^{\epsilon}_{E}, \phi) \circ \phi_B^{-1}) \\
    &= j^1( \eta_{E}) \circ j^1(\phi \circ \phi_B^{-1} ) \circ \eta_{B}^{-1}
\end{align*}
and then the variations of the action along the vector field $V$ become
\begin{align*}
    d \mathcal{S}(\phi) \cdot V =  \int_{U_{B}} j^1( \phi \circ \phi_{B}^{-1})^{\ast} \pounds_{j^1(V)} \Theta_{\mathcal{L}},
\end{align*}
and using the Cartan's formula definition of Lie derivative \cite{marsden1998multisymplectic}, we arrive at
\begin{align} \label{eq:var_action_multisymplectic}
    d \mathcal{S}(\phi) \cdot V =  -\int_{U_{B}} j^1( \phi \circ \phi_{B}^{-1})^{\ast} \left[ j^1(V) \contract \Omega_{\mathcal{L}} \right] + \int_{\partial U_{B}}j^1( \phi \circ \phi_{B}^{-1})^{\ast} \left[ j^1(V) \contract \Theta_{\mathcal{L}} \right]
\end{align}
Setting the $d \mathcal{S}(\phi) \cdot V = 0$, the first integral is zero when $\phi$ satisfies the Euler-Lagrange equation
\begin{equation}\label{eq:EL_multisymplectic}
\begin{aligned}
    &j^1( \phi \circ \phi_{B}^{-1})^{\ast} \left[ j^1(V) \contract \Omega_{\mathcal{L}} \right] = 0, \\
    \text{or} \quad &\frac{\partial L}{\partial y^A}(j^1( \phi \circ \phi_{B}^{-1}) - \frac{\partial}{\partial x^{\mu}} \left( \frac{\partial L}{\partial v_{\mu}^A}(j^1( \phi \circ \phi_{B}^{-1}))\right) = 0.
\end{aligned}
\end{equation}
When $\phi$ satisfies \eqref{eq:EL_multisymplectic} and $V$ and $W$ are two vector fields generated by the one-parameter transformation group of $\phi$ and the boundary integral in \eqref{eq:var_action_multisymplectic} gives rise to the following boundary integral
\begin{align}\label{eq:multisymplectic_form_formula}
    \int_{\partial U_{B}}j^1( \phi \circ \phi_{B}^{-1})^{\ast} \left[  j^1(W) \contract j^1(V) \contract \Omega_{\mathcal{L}} \right] = 0,
\end{align}
and it is known as \emph{the multisymplectic form formula}. The derivation of this formula can be found in \cite{marsden1998multisymplectic}. The importance of it is that it shows that the solution of the Euler-Lagrange equation \eqref{eq:EL_multisymplectic} preserves the multisymplectic form \eqref{eq:multisymplectic_form}  \cite{marsden1998multisymplectic}.  

\subsection{Review of optimal control} \label{sec:opt_cont}
As optimal control is central to this work, the basic definitions are stated here in order to familiarise the reader with the notation that will be used throughout this paper. We follow the notation used in \cite{FB/ADL:04supp}. However the topic is not covered here in great detail and the reader is advised to consult \cite{sontag2013mathematical, agrachev2004control,jurdjevic2016optimal} and references therein for detailed exposition.
\\

Given the quadruple $\Sigma = (Q, f, \Ucal)$ where $Q$ denotes the configuration manifold, $f : Q \times \Ucal \to TQ $ denotes the vector field that describes the motion of the system, and $\Ucal$ denotes the space of admissible controls.  We say the quadruple $\Sigma = (Q, f, \Ucal)$ is a \emph{controlled stochastic system}. The stochastic differential equation that governs the controlled system  $\Sigma$ is  the following 
\begin{equation} \label{eq:controll_SDE}
d_tq =f(q, \theta) \, dt + \nu \sum_k \xi_k(q) \circ dW_t^k,
\end{equation}
where $q \in Q$ and $\xi_k$ are vector fields that couple the independent and identically distributed Brownian motions $W^k(t)$ to the tangent bundle $TQ$.  Before giving a more rigorous definition of optimal control, we first need to define the main quantities of interest: the cost Lagrangian and cost functional. For the stochastic control system $\Sigma = (Q, f, \Ucal)$, we say a continuous, convex  map  $\ell: \Ucal \times Q \mapsto \mathbb{R}$, integrable in the time interval $[t_0, t_1]$, is the \emph{cost Lagrangian}. In some sources, it is referred to as the \emph{cost function} and that is to avoid confusion with other Lagrangians. While the Lagrangian, $\ell$, gives the cost of the state and control at time $t$, but for the cost in the time interval $[t_0, t_1]$, a functional is used instead.  \emph{The cost functional} $\mathcal{S}_{\Sigma}: \Ucal \times Q  \mapsto \mathbb{R}$ is used as a performance metric and it is defined as
$$
\mathcal{S}_{\Sigma} = \int_{t_0}^{t_1} \ell(q(t), \theta(t)) dt.
$$ 
With that, the definition of a optimal control problem can be stated as follows:
\begin{definition}[Optimal control problem]
Given a stochastic control system $\Sigma = (Q, f, \Ucal)$, let $Q_0$ and $Q_T$ be two distinct submanifolds of $Q$, then find the pair $(q^{\ast}, \theta^{\ast})$, where $q^{\ast} \in Q$ and $\theta^{\ast} \in \Ucal$, such that the curve $q^{\ast}(0) \in Q_0$ and $q^{\ast}(T) \in Q_T$ and in the same time satisfying $\mathcal{S}_{\Sigma}(q^{\ast}, \theta^{\ast}) < \mathcal{S}_{\Sigma}(q, \theta)$ for every $q \in Q$ and $\theta \in \Ucal$. 
\end{definition}

Due to the fact that it is indispensable, we only sketch Pontryagin's Maximum Principle.  As the proof is technical and out of the scope of this paper, nevertheless, it can be found in \cite{doi:10.1002/zamm.19630431023,agrachev2004control,jurdjevic2016optimal,nla.cat-vn542048} and the supplement chapter \cite{FB/ADL:04supp} of \cite{FB-ADL:04}. One of the nuclei of Pontryagin's Maximum Principle is the Hamiltonian function for  $\Sigma$. For the control system $\Sigma = (Q, f, \Ucal)$ and the cost Lagrangian $\ell$, the \emph{Pontryagin's Hamiltonian function}, also known as \emph{control Hamiltonian}, $H_{\Sigma} : T^{\ast}Q \times \Ucal  \mapsto \mathbb{R}$, associated to the control system $\Sigma$ is defined as
$$
\begin{aligned}
H_{\Sigma}(q, \lambda, \theta) = \left\langle \lambda, f(q, \theta) \right\rangle_{ T^{\ast}Q \times  TQ } - \ell(q, \theta).
\end{aligned}
$$
Once the substituting the expression for optimal $\theta$ in terms of $(q, \lambda)$, we arrive at \emph{maximum Hamiltonian}, $H^{\ast}_{\Sigma}  :  T^{\ast}Q \mapsto \mathbb{R}$, and it is defined as
\begin{equation}\label{eq:Ham_ast}
\begin{aligned} 
H^{\ast}_{\Sigma}(q, \lambda) = \sup \left\{ H_{\Sigma}(q, \lambda, \theta) \quad | \quad \theta \in \Ucal \right\}.
\end{aligned}
\end{equation}
\begin{theorem}[Pontryagin's Maximum Principle \cite{doi:10.1002/zamm.19630431023}]\label{thm:pmp}
Let $\Sigma = (Q, f, \Ucal)$ be the control system, and let $Q_0$ and $Q_T$ be the disjoint sub-manifolds of $Q$. If the pair $(q, \theta)$ solves the optimal control problem, i.e. minimise the cost functional
\begin{equation}
\mathcal{S} = \int_0^T \ell(q(t), \theta(t)) dt
\end{equation}
subject to $\dot{q}(t) = f(q(t), \theta(t))$, with $q(0) = q_0 \in Q_0$ and $q(T) = q_T \in Q_T$, then there exists $\lambda_0 = \{0, 1 \}$ and a cotangent vector 
$$
\lambda:[0,T] \mapsto T^{\ast}Q
$$
along $q$ and the Hamiltonian $H^{\ast}_{\Sigma}( \lambda(t)) = \sup\limits_{\theta \in \mathcal{U}} H_{\Sigma}( \lambda(t))$ and $\lambda(t)$ evolves in time according to
$$
\dot{\lambda} = X_{H^{\ast}_{\Sigma}}(\lambda(t))
$$
In other words, $\lambda$ is an integral curve of a Hamiltonian vector field associated with $H^{\ast}_{\Sigma}$. Along with that, it also satisfies the following transversality condition:  $\lambda(0) \in \mathrm{ann}(T_{q(t_0)}q_0)$ and $\lambda(1) \in \mathrm{ann}(T_{q(t_1)}q_T)$ and we either have $\lambda_0 = 1$ or $\lambda(0) \neq 0$ and finally there exists a constant $C \in \mathbb{R}_{+}$ such that the maximum Hamiltonian is constant, i.e. ${H^{\ast}_{\Sigma}}(\lambda(t)) = C$.
\end{theorem}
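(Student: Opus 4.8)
The plan is to prove the principle by the classical method of needle (spike) variations combined with a separating-hyperplane argument, recast at the end in the symplectic language of $T^{\ast}Q$. First I would reduce the minimization of $\mathcal{S}$ to a pure reachability problem by augmenting the state: set $q^{0}(t) = \int_{0}^{t}\ell(q(s),\theta(s))\,ds$ and $\bar q = (q^{0},q)$, with augmented dynamics $\dot{\bar q} = \bar f(\bar q,\theta) := (\ell(q,\theta),\, f(q,\theta))$. Optimality of $(q,\theta)$ then says precisely that among all admissible controls steering $q$ from $Q_0$ to $Q_T$, no admissible first-order variation can strictly decrease $q^{0}(T)$ while respecting the endpoint constraints.

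Next I would linearize. Fix a Lebesgue point $\tau\in(0,T)$ of $\theta$ and an admissible value $w\in\mathcal{U}$, and replace $\theta$ by $w$ on $[\tau-\epsilon,\tau]$. To first order in $\epsilon$ this injects the variation vector $\bar f(\bar q(\tau),w)-\bar f(\bar q(\tau),\theta(\tau))$ into the trajectory at time $\tau$, which is then transported to the terminal time by the variational equation $\dot v = D_{\bar q}\bar f(\bar q(t),\theta(t))\,v$. The key technical lemma is that finitely many such needle variations, together with convex combinations and perturbations of the endpoints along $Q_0$ and $Q_T$, generate a convex cone $K\subset T_{\bar q(T)}\bar Q$ of attainable first-order terminal displacements, with the requisite estimates controlled by Gronwall's inequality.

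The separating hyperplane then supplies the costate. Since optimality forbids $K$ from containing $-\partial/\partial q^{0}$ in its interior, there is a nonzero covector $\bar\lambda(T)=(-\lambda_0,\lambda(T))$ separating them, with $\lambda_0\ge 0$ normalizable to $\{0,1\}$. I would define $\bar\lambda(t)$ by backward integration of the adjoint equation $\dot{\bar\lambda}=-\,(D_{\bar q}\bar f)^{\ast}\bar\lambda$; because $\bar f$ is independent of $q^{0}$, its $q^{0}$-component $\lambda_0$ is constant, yielding the abnormal/normal dichotomy and the nontriviality clause. Geometrically, the pair (state equation, adjoint equation) is exactly Hamilton's equations on $T^{\ast}Q$ for $\bar H(\bar q,\bar\lambda,\theta)=\langle\bar\lambda,\bar f\rangle$, so that $\lambda$ is an integral curve of a Hamiltonian vector field, as asserted. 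The crucial invariance $\frac{d}{dt}\langle\bar\lambda(t),v(t)\rangle=0$ along the flow converts the separation inequality at the terminal time into the pointwise inequality $\langle\lambda(\tau),f(q(\tau),w)\rangle-\lambda_0\ell(q(\tau),w)\le \langle\lambda(\tau),f(q(\tau),\theta(\tau))\rangle-\lambda_0\ell(q(\tau),\theta(\tau))$ for every $w$, which with $\lambda_0=1$ is precisely the maximization $H_{\Sigma}(q(\tau),\lambda(\tau),\theta(\tau))=H^{\ast}_{\Sigma}(\lambda(\tau))$. The transversality conditions fall out of the same separation, since $\bar\lambda(T)$ must annihilate the endpoint-variation directions tangent to $Q_0$ and $Q_T$; and the constancy $H^{\ast}_{\Sigma}(\lambda(t))=C$ follows by differentiating the maximized Hamiltonian along the flow, using Danskin's envelope theorem to handle the supremum and the absence of explicit time dependence.

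The main obstacle is precisely the step the authors flag as too technical: establishing rigorously that the first-order reachable set $K$ is convex and that the needle variations superpose additively as $\epsilon\to0$. This demands the full measurable-control machinery --- Lebesgue-point arguments, uniform Gronwall estimates, and the subtle fact that the Hamiltonian is maximized only for almost every $t$ --- and on a manifold one must further verify that the adjoint equation and the cone are chart-independent, i.e. intrinsically defined by the canonical symplectic form on $T^{\ast}Q$. For the stochastic system \eqref{eq:controll_SDE} one would additionally replace the deterministic variational equation by its mean--field counterpart, which is exactly the passage motivating the remainder of the paper.
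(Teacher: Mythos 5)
The first thing to note is that the paper does not prove Theorem \ref{thm:pmp} at all: the authors state explicitly that the proof ``is technical and out of the scope of this paper'' and defer it to the cited references \cite{doi:10.1002/zamm.19630431023,agrachev2004control,jurdjevic2016optimal,FB/ADL:04supp}. So there is no in-paper argument to compare yours against; the only meaningful benchmark is the classical proof in those sources. Measured against that benchmark, your outline follows the right route and is essentially the standard Pontryagin--Boltyanskii argument: augment the state by the running cost, generate a convex cone of first-order terminal displacements by needle variations together with endpoint variations along $Q_0$ and $Q_T$, separate that cone from the forbidden cost-decreasing direction, transport the separating covector backwards by the adjoint equation, and read off the maximum condition, transversality, the normal/abnormal dichotomy for $\lambda_0$, and constancy of the maximized Hamiltonian. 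This is precisely the structure of the proof in \cite{doi:10.1002/zamm.19630431023} and in the supplement \cite{FB/ADL:04supp} that the paper points the reader to, including the final symplectic rephrasing $\dot{\lambda} = X_{H^{\ast}_{\Sigma}}(\lambda(t))$ used in the theorem statement.

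Two steps in your sketch are, however, stated more casually than they can be carried out, and they are exactly where the ``too technical'' content lives. First, the heart of the classical proof is not the Gronwall estimate but a topological step you leave implicit: knowing that the cone $K$ avoids the direction $-\partial/\partial q^{0}$ is only useful because interior points of $K$ are actually attained, up to $o(\epsilon)$, by genuine admissible trajectories; proving this for simultaneous packets of needle variations requires a Brouwer fixed-point argument, not mere superposition of linearized displacements, since the control system is nonlinear and the variations interact. Second, deriving $H^{\ast}_{\Sigma}(\lambda(t)) = C$ from Danskin's envelope theorem presupposes a regularity of the maximizing control that measurable optimal controls need not possess; the standard argument instead shows directly that $t \mapsto H_{\Sigma}(q(t),\lambda(t),\theta(t))$ is absolutely continuous, agrees almost everywhere with the maximized Hamiltonian, and has derivative zero almost everywhere. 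Relatedly, the maximum condition itself holds only at Lebesgue points of $\theta$, i.e. for almost every $t$, a qualification suppressed both in your final reformulation and in the theorem as quoted. None of this changes the verdict that your plan is the correct one; it just marks concretely which lemmas you would still owe.
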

For a specific setting, when the manifold $Q$ is acted by a Lie group action, the optimality condition yields a momentum map and Hamilton's equations of the Pontryagin's Hamiltonian system, generated by \eqref{eq:Ham_ast}, can be reduced to the Euler--Poincar\'{e} equation. As a result the initial condition $q_0$ and $q_T$ both lie to the same $G$-orbit. This observation was pointed out in \cite{gay2011clebsch, 10.1007/978-3-030-01593-0_5} and they formulated optimal control in terms of Clebsch variables. 
\\
\begin{definition}[Clebsch optimal control \cite{gay2011clebsch}] Let $\Sigma = (Q, f, \Ucal)$ be the control system, then the Clebsch variational principle for optimal control is 
\begin{align*}
\delta \mathcal{S} = \delta \int_0^T \ell(q(t), \theta(t)) \, dt + \int_0^T\int_{Q} \lambda \cdot \left(\dot{q}(t) - f(q(t), \theta(t)) \right) \, d^nx \, dt = 0,
\end{align*}
where $\lambda$ is the Lagrange multiplier that acts as a constraint to the probability density function $\rho$ to evolve according to 
\begin{align*}
\dot{q}(t) = f(q(t), \theta(t)) ,
\end{align*}
with $q(0) = q_0 \in Q_0$ and $q(T) = q_T \in Q_T$, where $Q_0, Q_T \subset Q$. 
\end{definition}

\subsection{Mean--field type optimal control}\label{sec:mfg}
Here, we briefly overview the mean--field type control of \cite{bensoussan_mean_2013,bensoussan_general_2013}. Consider a population of $N$ identical agents on $Q$ and each agent has control over the stochastic differential equation
\begin{equation}\label{eg:mfg_sde}
dX_t^i = f(X_t^i, \theta(t)) dt + \nu \sum_k \xi_k(X_t^i)  dW^k_t, \quad X^i_0 = x^i \in Q \subset \mathbb{R}^d,
\end{equation}
where $X_t^i$ are Markov processes and the drift is denoted by $\theta(x,t)$, and $W^i(t)$ are independent and identically distributed Brownian motions. Here, $\xi_k$ are diffusion vector fields and they are used to couple $W^i(t)$ to the tangent bundle.  For mean--field type control, the goal is to find $\theta = (u(x,t), b(x,t))$ such that it steers the whole population from an initial $X^i_0 = x_0$ to $X^i_T = x_T$. The solution of mean--field games, as the size of the population of agents goes to infinity, $N \to \infty$, converges to Nash equilibrium \cite{ lasry_jeux_2006-1}. 
\begin{remark}
In the previous sections, curves on $Q$ were denoted by $q(t)$. Here, $X_t$ is used for stochastic processes on $Q$ and to make the clear distinction between them and deterministic curves. 
\end{remark}
\begin{remark}
It is important to point out that when $Q$ is not $\mathbb{R}^d$, the stochastic differential equation \eqref{eg:mfg_sde} is replaced by a Stratonovich stochastic differential equation. 
When dynamics are manifolds, It\^{o} stochastic differential equations require extra construction to define them on smooth manifolds, hence it is practical to use Stratonovich stochastic differential equation \cite{LAZAROCAMI200865, holm2015variational}. In this paper, we consider $Q \subset \mathbb{R}^d$, in order to use It\^{o} stochastic differential equation \eqref{eg:mfg_sde}. 
\end{remark}
The idea of \cite{lasry_jeux_2006} and \cite{huang_large_2006}, instead of considering individual agents, their states are substituted by a probability density function. Thus, \eqref{eg:mfg_sde} is replaced by the Fokker--Planck equation, which is a partial differential equation that describes the time evolution of the probability density function associated a Markov process $X_t$, and it is given by
\begin{align}\label{eq:FP_eqn_1}
\frac{\partial }{\partial t} \rho( x, t  )  \, d^nx + \Lie{f}{( \rho( x, t  )\, d^nx )}  -  \frac{\nu^2}{2} \sum_{i,j} \Lie{\xi_j}{\Lie{\xi_i} { ( \rho( x, t  )\, d^nx )}}  = 0.
\end{align}
The term $\pounds_{f}$ is the Lie derivative along the vector field $f(x, u)$ and it is defined by
$$
\pounds_{f}{(\rho \, d^nx)}  = \left. \frac{d}{ds}  \right|_{s = 0} (\psi(s))^{\ast} (\rho \, d^nx) ,
$$
where $\psi^{\ast} $ is the pullback along the diffeomorphism $\psi: I \times \mathrm{Den}( Q) \to \mathrm{Den}( Q)$ and it is generated by the vector flow of $f(x,u)$. Since $\rho$ is a density, the Lie derivative of along $f$ in general coordinates, is written as
\begin{align*}
\pounds_{f}{(\rho \, d^nx)} = \frac{1}{\sqrt{|\mathbb{G}|}}\partial_i \left(\sqrt{|\mathbb{G}|} \rho f^i \right) \,   d^nx ,
\end{align*}
where $\mathbb{G}$ is the space's metric tensor.  In coordinates on $\mathbb{R}^N$ the Lie derivative is
$$
\pounds_{f}{(\rho \, d^nx)}  = \mathrm{div}\left( \rho f \right) \,  d^nx .
$$ 
\begin{remark}
The Fokker--Planck equation is usually written in terms of coordinates on $\mathbb{R}^N$, but in \eqref{eq:FP_eqn_1}, it is written in terms of Lie derivatives to stress that fact that we are using a coordinate free description. It is trivial to arrive at such description. Tout court, one computes the integral
\begin{align*} \label{eq:Int_time}
\int_Q f(x) \frac{\partial \prob( x, t  | x_0, t_0) }{\partial t} d^n x,
\end{align*}
where $f$ is a scalar function $f: Q \to \mathbb{R}$, by using the definition of a time derivative as the limit of a difference between $\rho$ at time $t + \delta t $ and $t$ and take the limit as $\delta t \to 0$. The density at $t + \delta t $ obeys the Chapman-Kolmogorov equation
\begin{equation}\label{eq:Chap_Kol}
\prob( x, t + \delta t | x_0, t_0) = \int_Q  \prob( x, t + \delta t | y, t)\prob( y, t  | x_0, t_0) d^ny
\end{equation}
and then using writes It\^{o}'s lemma \cite{10.3792/pia/1195572786,oksendal2013stochastic} for a scalar function $g: Q \to \mathbb{R}$ in terms of Lie derivatives 
\begin{align*}
g(x) = g(y) +  \pounds_{dX^i} g(y)  +  \frac{1}{2} \pounds_{dX^i} \pounds_{dX^i} g(y),
\end{align*}
we then arrive at the Fokker--Planck equation \eqref{eq:FP_eqn_1}.
\end{remark}

Going back to the stochastic optimal control problem, generally, the cost Lagrangian is defined as an expected value
\begin{equation}
\ell ( \theta, \prob) = \int_Q  \prob(x, t ) \widehat{\ell}(\theta) \, d^nx .
\end{equation}
In this note, we restrict ourselves to the case when the $\widehat{\ell}(\theta)$ is a quadratic function, and this case $\ell: \Ucal \times \mathbb{R} \mapsto \mathbb{R}$ becomes
\begin{equation}\label{eq:cost_lagrangian}
\ell ( \theta, \prob) = \frac{1}{2}\int_Q  \prob(x, t ) \left\| \theta  \right\|^2 \, d^nx 
\end{equation}
We now formulate a method for solving stochastic geometric optimal control problems. Here, the Fokker--Planck equation \eqref{eq:FP_eqn_1} is a constraint, in lieu of \eqref{eq:controll_SDE}. In this setting, the problem is no longer on the state manifold $Q$, instead it is on the space of probability densities of  $Q$, denoted by $\mathrm{Den}( Q)$. The space of probability density functions on $Q$ is defined as
\begin{align}
\mathrm{Den}( Q) = \left\{ \rho \in H^1( Q) \quad | \quad \rho > 0, \int_{ Q} \rho \, d^nx  = 1 \right\}.
\end{align}
It is worth noting that densities are usually defined as volume forms, but since the probability densities are related to the diffusion processes we treat them as functions with a reference volume form. Whichever $Q$ being finite or infinite-dimensional,  $\mathrm{Den}( Q)$ is considered as an infinite-dimensional manifold for all cases \cite{Khesin6165}. 
\\

The space of admissible controls $\mathcal{U}$ remains the same, however the input data of the training and testing sets are now elements of $\mathrm{Den}( Q)$ \cite{Khesin6165}.  It is worth noting that the setting used here is the same for computational fluid mechanics formulation of optimal transport \cite{benamou2000computational}. However, in  \cite{benamou2000computational} the equation for the probability density function is continuity equation, meaning there are no double Lie derivatives as in \eqref{eq:FP_eqn_1} and control vector field $\theta$ is a potential vector field. Then the control problem for a stochastic system can be stated as the following quadruple $\Sigma = (\mathrm{Den}( Q), f, \Ucal)$ where $\mathrm{Den}( Q)$ denotes the configuration manifold, $f(x,\theta): Q \times \mathcal{U} \to  T( Q)$ denotes the vector field that describes the motion of the system, $\Ucal$ denotes the space of admissible controls. The dynamics of the control system are governed by the Fokker--Planck equation 
\begin{align}
\partial_t \rho \, d^nx + \pounds_{f}{(\rho  \, d^nx )}  - \frac{\nu^2}{2} \sum_{i,j}\pounds_{\xi_i} \pounds_{\xi_j} (\rho  \, d^nx ) = 0,
\end{align}
where $\rho(x,t) \, d^nx  \in \mathrm{Den}( Q)$ is the probability density function. 

\begin{definition}[Mean--field type optimal control problem \cite{huang_large_2006}]\label{def:mfg}
Given a stochastic control system $\Sigma = (\mathrm{Den}( Q), f, \Ucal)$, let $Q_0$ and $Q_T$ be two distinct submanifolds of $Q$, then find the pair $(\prob^{\ast}, \theta^{\ast})$, where $\prob^{\ast} \in \mathrm{Den}( Q)$ and $\theta^{\ast} \in \Ucal$, such that the curve $\prob^{\ast}(x,0) \in \prob_0$ and $\prob^{\ast}(x,T) \in \prob_T$ and in the same time satisfying $\mathcal{S}_{\Sigma}(\prob^{\ast}, \theta^{\ast}) < \mathcal{S}_{\Sigma}(\prob, \theta)$ for every $ \prob \in Q$ and $\theta \in \Ucal$. Here, $\mathcal{S}_{\Sigma}$ is the cost functional of ${\Sigma}$ and it is defined by
\begin{equation}\label{prb:opt_cont_gen}
\mathcal{S}_{\Sigma} = \int_0^T \ell( \prob(x,t), \theta(x,t)) dt + \Phi(\rho(x, T), \rho_T)
\end{equation}
where $\ell: \Ucal \times \mathbb{R} \mapsto \mathbb{R}$ is the cost Lagrangian and $\Phi: \mathrm{Dens}(Q) \times \mathrm{Dens}(Q) \to \mathbb{R}$ is the cost for the final state.  
\end{definition}
To solve the optimal control problem Pontryagin's Maximum Principle is used and the cost functional \eqref{prb:opt_cont_gen} is now written in terms of Hamilton's principle 
\begin{equation} \label{eq:hamiltons_principle}
\mathcal{S}_{\Sigma} = \int_{t_0}^{t_1} \left(   \left\langle \lambda, \dot{\prob} \right\rangle  - H_{\Sigma} ( \lambda, \theta) \right)dt
\end{equation}
According to the Maximum/Minimum Principle, we first need to define a control Hamiltonian, $H_{\Sigma}: \Ucal  \times T^{\ast}\mathrm{Den}( Q) \mapsto \mathbb{R}$, and in this particular case it is
\begin{equation}\label{eq:control_Ham}
H_{\Sigma} ( \lambda , \theta)  =  \left\langle  \lambda , \Lie{f(x,\theta)}{\prob(x, t )} - \frac{\nu^2}{2} \Lie{\xi}{\Lie{\xi}{\prob(x, t )}}\right\rangle -  \ell ( \theta, \prob).
\end{equation}
\begin{remark}
Taking things one step further, we substitute the Legendre transformation \eqref{eq:control_Ham} into the action $\mathcal{S}_{\Sigma} $ to obtain 
\begin{equation}
\mathcal{S}_{\Sigma}  = \int_{t_0}^{t_1} \left(  \ell ( \theta, \prob) + \left\langle  \lambda , \dot{\prob} + \Lie{f(x,\theta)}{\rho} - \frac{\nu^2}{2} \Lie{\xi}{\Lie{\xi} \rho} \right\rangle  \right)dt
\end{equation}
When the control vector field $\theta$ belongs to Lie algebra and the underlying the dynamical system is described by a Lie group of volume preserving diffeomorphisms action on elements of $Q$ and the noise terms are absent, then this becomes the fluid mechanics formulation of Monge--Kantorovich optimal transport problem of \cite{benamou2000computational}. Our interest in the Lagrangian version with Lagrange multipliers is due to the fact that it allows us to derive the multisymplectic structure of the problem as we shall see in section \ref{sec:mul_geo_dl} as it was introduced in \cite{doi:10.1098/rspa.2007.1892}.
\end{remark}
If the following trajectory $(\prob, \theta)$ solves the optimal control problem of the definition \ref{def:mfg}, then according to Pontryagin's Maximum Principle there exists a dual vector field $\chi :[t_0, t_1] \mapsto T^{\ast}\mathrm{Den}( Q)$ along $\prob$ such that
$$
\frac{d}{dt} \chi = X_{H^{\ast}_{\Sigma}},
$$
where $ X_{H^{\ast}_{\Sigma}}$ is the Hamiltonian vector field associated with $H^{\ast}_{\Sigma} $ and 
\begin{equation}
H^{\ast}_{\Sigma} ( \lambda) = \sup \left\{ H_{\Sigma} ( \lambda , \theta) \quad |  \quad \theta \in \Ucal  \right\}.
\end{equation}
In other words, $\chi $ is the integral curve of the Hamiltonian vector field $ X_{H^{\ast}_{\Sigma}}$. Without getting much into technical details, the maximum Hamiltonian, $H^{\ast}_{\Sigma}$, is locally integrable smooth and  the covector field $\chi$ is locally absolutely continuous. That said, in local coordinates $(\prob, \lambda)$, $ X_{H^{\ast}_{\Sigma}}$ are governed by the following set of equations
\begin{equation}\label{eq:mfg_system}
\begin{aligned}
\partial_t {\prob} \, d^nx   &=  \frac{\delta }{\delta \lambda} H^{\ast}_{\Sigma}= -\Lie{f(x,\theta)}{(\prob  \, d^nx)}    + \frac{\nu^2}{2} \Lie{\xi}{\Lie{\xi}{(\prob  \, d^nx)}},   \\
\partial_t {\lambda} &=  -\frac{\delta}{\delta \prob} H^{\ast}_{\Sigma}= \Lie{f(x,\theta)}{\lambda} - \frac{\nu^2}{2} \Lie{\xi}{\Lie{\xi}{\lambda}} + \frac{\delta \ell(\theta, \rho)}{\delta \rho}, \\
\rho(x,0) & = \rho_0(x), \\
\lambda(x, T) &= \frac{\delta \mathcal{S}_{\Sigma} }{\delta \rho_T(x)},
\end{aligned}
\end{equation}
where $\frac{\delta}{\delta \lambda}$ and $\frac{\delta}{\delta \rho}$ denote the functional derivative with respect to $\lambda$ and $\rho$ respectively. Such derivative is defined by 
$$
\frac{\delta H}{\delta \alpha} = \lim\limits_{\epsilon \to 0} \frac{H^{\ast}_{\Sigma}(\alpha + \epsilon \delta \alpha) - H^{\ast}_{\Sigma}(\alpha)}{\epsilon}.
$$
This system of equations is known as \emph{mean--field games} \cite{lasry_jeux_2006, lasry_jeux_2006-1,huang_large_2006}. As mentioned before, the first equation is the Fokker--Planck equation, and the adjoint equation is the stochastic Hamilton--Jacobi--Bellman equation \cite{1556-1801_2012_2_337}. The Lie derivative in this case is for a scalar quantity and in general coordinates it is
\begin{align*}
    \pounds_{f(x,\theta)} \lambda = f(x,\theta) \cdot \nabla \lambda.
\end{align*}
The stochastic Hamilton--Jacobi--Bellman equation is a nonlinear backward equation and it appears as the continuous limit of dynamic programming solution of stochastic optimal control problem with cost functional
\begin{align*}
    \mathcal{S} &= \int_0^t \ell(\theta) \, dt + \Phi(X_t, x_T),
\end{align*}
subject to
\begin{align*}
d_t X_t = f(X_t,\theta) \, dt + \nu \sum\limits_i \xi_i(X_t)dW^i_t, \quad& \,  X_0 = x_0, \quad \text{and} \quad X_T = x_T.
\end{align*}
The variational principle form of the mean--field games was studied in a number of publications, mainly \cite{1556-1801_2012_2_337,MESZAROS20151135,Benamou2017}. Further, such minimisation is connected to the Schr\"{o}dinger bridge problem, where it seeks to find the flow of probability density that connects an initial density and a final density distribution while minimising the Kullback-Leibler divergence. In recent years, Schr\"{o}dinger bridge was regarded as a way to regularise the optimal transport problem \cite{doi:10.1137/050631264,1078-0947_2014_4_1533, leonard2010schr,chen2016relation,doi:10.1137/16M1061382,pavon2018data}. 
\\

The system of coupled equations is difficult to solve analytically, and that leaves us with a numerical approach. Moreover, an additional challenge arises because of the backward and nonlinear nature of the stochastic Hamilton–-Jacobi-–Bellman equation. Here, the adjoint equation is not discussed in detail, as later in \ref{sec:mul_geo_dl} we demonstrate how it can be eliminated from the system of equations. One numerical solution method was introduced in \cite{achdou_mean_2010}. In section \ref{sec:var_int}, we introduce a structure-preserving numerical scheme for mean--field games. The core element of the scheme is that it relies on variational principle for its derivation. 
\\

\section{Multisymplectic geometry for deep learning} \label{sec:mul_geo_dl}
Extending the philosophy of \cite{weinan2019mean, li2017maximum}, uncertainties in the form of Wiener processes are introduced to the dynamical model of deep neural networks and rendering the training problem as a stochastic optimal control problem. It is worth pointing out that there are numerous neural network architectures that can be used for deep learning, but the focus will solely be on residual neural networks. As in \cite{weinan2017proposal}, the residual neural networks are formulated as a dynamical system, but to consider a more general and robust case, the residual network dynamics are treated as a stochastic system
\begin{align}\label{eq:resNet_eq}
d_t X_t = f(X_t, \theta) \, dt + \nu \sum\limits_{k} \xi_k(X_t) dW_t^k, \quad X_0 = x_0,
\end{align}
where the initial condition $X_0 \in Q$ is the input to the network and the final condition, $X_T$, is the output. Here, we choose the parameters $\theta = u(x,t) \in \mathfrak{X}(Q)$, i.e. the network's weight and it is a vector field that belong to $\mathfrak{X}(Q)$, which  the space of vector fields on $Q$. 
This means $\mathfrak{X}(Q)$ is the space of admissible controls $\mathcal{U}$. The flow generated by vector fields on $\mathfrak{X}(Q)$ are elements of the group of diffeomorphisms on $Q$, $\mathrm{Diff}(Q)$. The function $f:  Q \times \mathfrak{X}(Q) \to  TQ$ is the activation function. 
\begin{remark}
In the language of optimal control, here the network's weights and biases, $\theta \in \mathfrak{X}(Q)$, play the role of the control, often denoted in control theory literature by $u$. This is used in order to distinguish the network's parameters, which are coefficients, from  the usual optimal control $u$, which is a vector field. 
\end{remark}
The problem of training, supervised one, is concerned with finding a suitable $\theta$ such that output $X_T = x_T$ would be as close as possible to the chosen target datum $c \in \mathcal{C}$, where $\mathcal{C}$ is the space of target data, with respect to a certain metric. Thus, as introduced in \cite{li2017maximum}, the supervised training can be formulated as an optimal control problem, with the dataset $ \{(x_0^{(i)}, c^{(i)}) \}_{i = 0, \dots, N}$ specifies the initial conditions and the final conditions. That means one needs to find $\theta$ that would minimise $\sum\limits_{i=0}^N \Phi( \pi(X^{(i)}_T), c^{(i)} )$, where $\Phi: \mathcal{C} \times \mathcal{C} \to \mathbb{R}$ is a loss function and $\pi : Q \to \mathcal{C}$ acts as projection from the state manifold $Q$ to the space of target data. The supervised learning problem can then be stated as:
\begin{problem}\label{prob:mfg_deep_0}
Let $(Q, \mathcal{F}, \mathbb{P})$ be a probability space and the stochastic control system $\Sigma = (Q, f, \mathfrak{X}(Q) )$. Let $x_0^{(i)} \in Q$ and $c^{(i)} \in \mathcal{C}$ for $i=0, \dots, N$. The cost functional is chosen as
\begin{align}
\mathcal{S}_{\Sigma} = \sum_{i=0}^N \Phi( \pi(X^{(i)}_T),c^{(i)} ) + \int_0^t \int_{Q} \widehat{\ell}(\theta)\, d\rho \, dt,
\end{align}
subject to the following constraint
\begin{align}
d_t X_t^{(i)} = f(X_t^{(i)}, \theta) \,dt + \nu \sum_k \xi_k(X_t^{(i)}) dW_t^k, \quad X^{(i)}_0 = x^{(i)}_0,
\end{align}
where $\widehat{\ell}: \mathfrak{X}(Q) \to \mathbb{R}$ is the cost Lagrangian and $\xi_k \in TQ$ are vector fields used to couple the independent and identically distributed Brownian motions $W^k_t$ to the tangent bundle $TQ$. The training of a deep residual network is done by finding the controlled trajectories $(x^{(i), \ast}, \theta^{\ast})$ with $X^{(i), \ast}_0 = x^{(i)}_0 \in Q$ and $\theta \in \mathfrak{X}(Q)$ such that 
$$
\mathcal{S}_{\Sigma}( \theta^{\ast}) < \mathcal{S}_{\Sigma}( \theta), \quad \forall ( \theta) \in  \mathfrak{X}(Q).
$$
\end{problem}
This stochastic optimal control problem can be formulated in terms of mean--field games as in \cite{weinan2019mean}. Although, as mentioned earlier, in \cite{weinan2019mean}, the McKean--Vlasov approach to mean--field games was used.  Here, we use the Fokker--Planck and Hamilton--Jacobi--Bellman approach, which seeks to control the probability density of the stochastic dynamical system directly. With that, problem \ref{prob:mfg_deep_0} is equivalent to
\begin{problem}\label{prob:mfg_deep_2}
Let $(Q, \mathcal{F}, \mathbb{P})$ be a probability space and the stochastic control system $\Sigma_{\rho} = (\mathrm{Dens}(Q), f , \mathfrak{X}(Q))$. Let $\rho^i_0 \in \mathrm{Dens}(Q)$ and $c^{(i)} \in \mathcal{C}$ for $i=0, \dots, N$. The cost functional is chosen as
\begin{align}\label{eq:control_action_section3}
\mathcal{S}_{\Sigma_{\rho}} = \sum\limits_{i=0}^N \int_{Q} \Phi( \pi(\rho(x,T)), c^{(i)} )d\rho + \int_0^t \ell(\rho, \theta) \, dt, \quad \text{with} \quad \ell(\rho, \theta) =  \int_{Q} \widehat{\ell}(\theta)\, d\rho,
\end{align}
subject to the following constraint
\begin{align}
\partial_t \rho   \, d^nx   + \pounds_{f(x,\theta)} \rho   \, d^nx   - \frac{\nu^2}{2} \sum_{i,j}\pounds_{\xi_i} \pounds_{\xi_j} \rho  \, d^nx  , \quad \rho(x,0) = \rho_0(x),
\end{align}
where $\widehat{\ell}: \mathfrak{X}(Q) \to \mathbb{R}$ is the cost Lagrangian. Then, the training of a deep residual network is done by finding the controlled trajectories $(\rho^{\ast}, \theta^{\ast})$ with $\rho^{ \ast}(x,0) = \rho_0 \in \mathrm{Dens}(Q)$ and $\theta \in \mathfrak{X}(Q)$ such that 
$$
\mathcal{S}_{\Sigma_{\rho}}(\rho^{ \ast}, \theta^{\ast}) < \mathcal{S}_{\Sigma_{\rho}}(\rho, \theta), \quad \forall (\rho, \theta) \in \mathrm{Dens}(Q) \times \mathfrak{X}(Q).
$$
\end{problem}
To obtain the weights $\theta$ that would yield the trajectory $(\rho^{ \ast}, \theta^{\ast})$, we apply Pontryagin's Maximum Principle, with the  Hamiltonian defined by
\begin{align}
&H_{\Sigma_{\rho}} \quad : \quad  T^{\ast}\mathrm{Dens}(Q) \times \mathfrak{X}(Q)  \mapsto \mathbb{R}, \nonumber\\
&H_{\Sigma_{\rho}}(\rho, \lambda, v) = \ell( \rho, \theta)+\left\langle \lambda,  \pounds_{f(x,\theta)} \rho - \frac{\nu^2}{2} \sum_{i,j}\pounds_{\xi_i} \pounds_{\xi_j} \rho  \right\rangle . \label{eq:hamilonian_generic}
\end{align}
Here $\langle \cdot, \cdot \rangle$ is pairing and it is a mapping $\langle \cdot, \cdot \rangle :  T^{\ast}\mathrm{Dens}(Q)\times  T\mathrm{Dens}(Q)  \to \mathbb{R}$. However, we apply Legendre transformation \cite{marsden2013introduction} to switch to the Lagrangian picture as it is needed when constructing numerical schemes based on variational principle.  The Lagrangian is then 
\begin{align}
     \ell(\rho, \theta) + \left\langle \lambda, \partial_t \rho - \pounds_{f(x,\theta)} \rho - \frac{\nu^2}{2} \sum_{i,j}\pounds_{\xi_i} \pounds_{\xi_j} \rho  \right\rangle_{ T^{\ast}\mathrm{Dens}(Q)\times  T\mathrm{Dens}(Q) },
\end{align}
and it is known as Clebsch Lagrangian \cite{holm_poisson_1983}. The stationary variations of the action \eqref{eq:control_action_section3} with constraints
\begin{align*}
    \delta \mathcal{S}_{\Sigma_{\rho}} = \sum_{i=0}^N \delta\int_{Q} \Phi( \pi(\rho(x,T)), c^{(i)}) )d\rho + \delta \int_0^t \ell(\rho, \theta)  \, dt, \\
    &+ \delta \int_0^t \left\langle \lambda, \partial_t \rho - \pounds_{f(x,\theta)} \rho - \frac{\nu^2}{2} \sum_{i,j}\pounds_{\xi_i} \pounds_{\xi_j} \rho  \right\rangle \, dt,  
\end{align*}
as in \eqref{eq:var_action_multisymplectic} yields the following Euler-Lagrange equations:
\begin{equation}\label{eq:mfg_system_1}
\begin{aligned}
\delta \lambda &: \quad \partial_t \rho  \, d^nx    +  \pounds_{f(x,\theta)}(\rho   \, d^nx)   - \frac{\nu^2}{2} \sum_{i,j}\pounds_{\xi_i} \pounds_{\xi_j}( \rho  \, d^nx )  = 0, \\
\delta \rho  &: \quad \partial_t \lambda +  \pounds_{f(x,\theta)}\lambda +  \frac{\nu^2}{2} \sum_{i,j}\pounds_{\xi_i} \pounds_{\xi_j} \lambda + \frac{\delta \ell}{\delta \rho} = 0, \\
\delta \theta &: \quad \frac{\delta \ell}{\delta \theta} + \rho \nabla\lambda \, dx \otimes d^nx = 0,
\end{aligned}
\end{equation}
with $\quad \rho(x,0) = \rho_0$ and $\lambda(x,T) = \frac{\delta \Phi}{\delta \rho_T}$.  The last equation is the optimality condition. The variational derivative $\frac{\delta \ell}{\delta \theta}$ is in fact a one-form density with a differential form  $dx \otimes d^nx $. Substituting the optimality condition into the Hamiltonian $H_{\Sigma_{\rho}}$ \eqref{eq:hamilonian_generic}, gives us the following Hamiltonian $H_{\Sigma_{\rho}}^{\ast}: T^{\ast}\mathrm{Dens}(Q) \to \mathbb{R}$ which is now a function in terms of canonical coordinates of the cotangent bundle $T^{\ast}\mathrm{Dens}(Q)$ and it recovers the system of equations \eqref{eq:mfg_system}.
\begin{remark}
The nomenclature for the partial differential equation for the evolution Lagrange multiplier $\lambda$ might cause confusion. Despite it being called the stochastic Hamilton--Jacobi--Bellman, in fact the equation is deterministic. The reason for the naming is that it is the continuous limit of dynamic programming when applied to stochastic optimal control problems. 
\end{remark}
\begin{remark}\label{rem:monge_ampere}
The mean--field games system \eqref{eq:mfg_system_1} becomes fluid mechanics formulation of the the Monge--Kantorovich optimal transport problem when  $\xi_k \to 0$ \cite{benamou2000computational}.  What is interesting regarding this formulation is that it relates the solution of the optimal transport problem to the solution of Euler's incompressible fluid equation. 
\end{remark}
Solving \eqref{eq:mfg_system_1} yields the training weights $\theta$. Still, finding the solution is challenging, even for numerical methods and for this reason we cast it as a multisymplectic system with the aim of using multisymplectic integrators. Here we are following the same approach to multisymplectic systems as \cite{doi:10.1098/rspa.2007.1892}, where the stationary variations of the Clebsch variational principle for back-to-label inverse mapping yield a multisymplectic system.
\\

Before applying Pontryagin's Maximum Principle, first, a change of variable is required to carry out the computation and it is done by decomposing the weights $\theta: Q \to TQ$ as a sum of two vector fields $w : Q \to TQ$ and osmotic velocity term, $\frac{\nu^2}{2}\nabla \mathrm{log}(\rho(x,t))$ , i.e.
\begin{align*}
\theta(x,t)= w(x,t) + \frac{\nu^2}{2}\left(\frac{\partial f}{\partial \zeta}\right)^{-1}\nabla \mathrm{log}(\rho(x,t)).
\end{align*}
This decomposition was used in \cite{nelson1967dynamical} to show the relation between stochastic diffusion process and the Schr\"{o}dinger equation. It was further used in \cite{chen2016relation,doi:10.1137/16M1061382,pavon2018data} for Schr\"{o}dinger bridge problem. 
\begin{remark}
The term $\nabla \mathrm{log}(\rho(x,t)) = \frac{\nabla \rho(x, t)}{\rho(x,t)}$ contains the density function in the denominator. In this article, the main assumption is that the probability density consists of a single or a sum of Gaussian functions. Depending on the configuration manifold $Q$, the $\rho(x,t) > 0$ for all $x \in Q$.
\end{remark}
Here, the vector field that describes the residual neural network becomes 
\begin{align}
f(x, \theta) = f \left( w(x,t) +\frac{\nu^2}{2}\left( \frac{\partial f}{\partial \zeta} \right)^{-1} \nabla \mathrm{log}(\rho(x,t)) \right),
\end{align}
where $f$ is the activation function and in this paper, $f$ is a linear function. Substituting it into the Fokker--Planck equation we get
\begin{align}
\partial_t \rho  \, d^nx  + \pounds_{f(w)}( \rho  \, d^nx)    = 0, \quad \rho(x,0)  =\rho_0,
\end{align}
Then the optimal control problem becomes slightly different, as required.
\begin{problem}\label{prob:mfg_deep}
Let $(Q, \mathcal{F}, \mathbb{P})$ be a probability space and the stochastic control system $\Sigma_{\rho} = (\mathrm{Dens}(Q), \nabla , \mathfrak{X}(Q))$. Let $\rho_0 \in \mathrm{Dens}(Q)$ and $c^{(i)} \in \mathcal{C}$ for $i=0, \dots, N$. The cost functional is chosen as
\begin{align}
\mathcal{S}_{\Sigma_{\rho}} = \sum_{i=0}^N \int_{Q} \Phi( \pi(\rho(x,T)), c^{(i)} )d\rho + \int_0^t \ell \left( \rho, w + \frac{\nu^2}{2}\left( \frac{\partial f}{\partial \zeta} \right)^{-1} \nabla \mathrm{log}(\rho) \right)  \, dt,
\end{align}
subject to the following constraint
\begin{align}
\partial_t \rho  \, d^nx  + \pounds_{f(w)}(\rho  \, d^nx)  = 0, \quad \rho(x,0) = \rho_0(x),
\end{align}
where $\ell:  T\mathrm{Dens}(Q) \times \mathfrak{X}(Q)  \to \mathbb{R}$ is the cost Lagrangian defined by
\begin{align}
\ell \left( \rho, w +\frac{\nu^2}{2}\left( \frac{\partial f}{\partial \zeta} \right)^{-1}  \nabla \mathrm{log}( \rho) \right) =\int_{Q} \left\| w + \frac{\nu^2}{2}\left( \frac{\partial f}{\partial \zeta} \right)^{-1} \nabla \mathrm{log}(\rho)  \right\|^2 \, d\rho,
\end{align}
and $\ell: \mathfrak{X}(Q)  \to \mathbb{R}$ is a convex function. Finding the controlled trajectories $(\rho^{\ast}, \theta^{\ast})$ with $\rho^{ \ast}(x,0) = \rho_0 \in Q$ and $\theta \in \mathfrak{X}(Q)$ such that 
$$
\mathcal{S}_{\Sigma_{\rho}}(\rho^{ \ast}, \theta^{\ast}) < \mathcal{S}_{\Sigma_{\rho}}(\rho, \theta), \quad \forall (\rho, \theta) \in \mathrm{Dens}(Q) \times \mathfrak{X}(Q),
$$
solves the training of a deep residual network.
\end{problem}
The evolution equation of the probability density is now the continuity equation, which is a first order partial differential equation, and this is essential to our approach as it allows us to reduce \eqref{eq:mfg_system_1} to Euler--Poincar\'{e} equation. The result now is that the Hamiltonian is affine in the derivatives with respect to independent variables, which allows us to obtain a multisymplectic formulation. With that the Pontryagin's Hamiltonian becomes
\begin{align*}
H_{\Sigma_{\rho}} \quad : \quad  T^{\ast}\mathrm{Dens}(Q) \times \mathfrak{X}(Q)  \mapsto \mathbb{R},
\end{align*}
\begin{align*}
H_{\Sigma_{\rho}}(\rho, \lambda, v) = \ell \left( \rho, w +  \frac{\nu^2}{2}\left( \frac{\partial f}{\partial \zeta} \right)^{-1} \nabla \mathrm{log}( \rho) \right) + \left\langle \lambda,  \pounds_{f(w)} \rho \right\rangle_{ T^{\ast}\mathrm{Dens}(Q)\times  T\mathrm{Dens}(Q) },
\end{align*}
and applying the Legndre transformation, we map the Hamiltonian to a Lagrangian function, allowing us to use Clebsch variational principle
\begin{equation}
\begin{aligned}
    0 = \delta \mathcal{S}_{\Sigma_{\rho}} &=  \delta \int_0^t \ell \left( \rho, w +  \frac{\nu^2}{2}\left( \frac{\partial f}{\partial \zeta} \right)^{-1}\nabla \mathrm{log}( \rho) \right)  + \left\langle \lambda, \partial_t \rho  + \pounds_{f(w)} \rho \right\rangle\, dt \\
    &+\sum_{i=0}^N \delta\int_{Q} \Phi( \pi(\rho(x,T)), c^{(i)} )d\rho, 
\end{aligned}
\end{equation}
As in \eqref{eq:EL_multisymplectic}, the stationary variations yield the Euler-Lagrange system of partial differential equations
\begin{equation}\label{eq:mfg_system2}
\begin{aligned}
\delta \lambda &: \quad \partial_t \rho   \, d^nx   +  \pounds_{f(w)}( \rho  \, d^nx)    = 0, \\
\delta \rho  &: \quad \partial_t \lambda +  \pounds_{f(w)} \lambda  + \frac{\delta \ell}{\delta \rho} = 0, \\
\delta w &: \quad \frac{\delta \ell}{\delta w} + \rho \nabla\lambda \, dx \otimes  \, d^nx    = 0,
\end{aligned}
\end{equation}
with $\rho(x,0) = \rho_0$ and $ \lambda(x,T) = \frac{\delta \Phi}{\delta \rho_T}$ and such system of equations has a multisymplectic structure:
\begin{align}\label{eq:multisymplectic_struct}
\begin{bmatrix}
0 & 0 & 0 \\
0 & 0 & 1\\
0 & -1  & 0
\end{bmatrix} \begin{bmatrix}
\partial_t w^i \\
\partial_t \lambda \\
\partial_t \rho
\end{bmatrix} + \begin{bmatrix}
0 & -\rho\frac{\partial f}{\partial \zeta} & \frac{\nu^2}{2} \\
\rho\frac{\partial f}{\partial \zeta} & 0 &  f(w) \\
-\frac{\nu^2}{2} &  - f(w) & 0
\end{bmatrix} \begin{bmatrix}
\partial_{x^i}w^i \\
\partial_{x^i}\lambda \\
\partial_{x^i}\rho
\end{bmatrix} = \nabla_z S(z),
\end{align}
where $z = (w^i, \lambda, \rho)^T$ and the Hamiltonian is given by
\begin{align}\label{eq:HamiltonianDeep}
S(z) = \frac{\rho}{2} \left\| w \right\|^2  + \frac{ \nu^4 \rho}{8} \left\| \left(\frac{\partial f}{\partial \zeta}\right)^{-1} \nabla \mathrm{log} {\rho} \right\|^2.
\end{align}
The last term in $S(z)$ is the \emph{Fisher information metric}. The structure in \eqref{eq:multisymplectic_struct} is the sum of pre-symplectic structures, each pre-symplectic structure corresponds to an independent variable and they are all skew-symmetric matrices. In this case, the pre-symplectic structure corresponding to the independent variable $t$ and the pre-symplectic structure corresponding to the spatial independent variable $x^i$ are
\begin{align} \label{eq:pre_time}
M = \begin{bmatrix}
0 & 0 & 0 \\
0 & 0 & 1 \\
0 & -1 & 0
\end{bmatrix}, \quad \text{and} \quad K_i = \begin{bmatrix}
0 & -\rho\frac{\partial f}{\partial \zeta} & \frac{\nu^2}{2} \\
\rho\frac{\partial f}{\partial \zeta} & 0 & f(w) \\
-\frac{\nu^2}{2} & -f(w) & 0
\end{bmatrix},
\end{align}
respectively. Writing the system of partial differential equations \eqref{eq:mfg_system2} as \eqref{eq:multisymplectic_struct} using anti-symmetric matrices
\begin{align}\label{eq:multisymplectic_bridges}
M \partial_t z + \sum\limits_i K_i \partial_{x^i} z = \partial_z S(z),
\end{align}
is an alternative approach to multisymplectic partial differential equations \cite{bridges1997multi}.  In fact, according to  \cite{bridges1997multi}, a system is multisymplectic if it can be written in the form of \eqref{eq:multisymplectic_bridges}. In a geometric setting, the equations \eqref{eq:multisymplectic_struct} of can be written intrinsically, like in equation \eqref{eq:EL_multisymplectic}, as 
\begin{align*}
(j^1\phi)^{\ast}\left[ j^1(V) \contrac \Omega_{\mathcal{L}} \right] = 0,
\end{align*}
where $V \in J^{1}(E)$ and the Lagrangian density $\mathcal{L}: J^1(E) \to \Lambda^{n+1}(Q)$,  here is defined by
\begin{align*}
\mathcal{L}:= \ell_c \, d^nx  \wedge dt &=  \frac{\rho}{2} \left\| w \right\|^2 +  \frac{\nu^2}{2} \rho \left\langle w, \left(\frac{\partial f}{\partial \zeta}\right)^{-1} \nabla \mathrm{log} {\rho} \right\rangle  + \frac{ \nu^4 \rho}{8} \left\| \left(\frac{\partial f}{\partial \zeta}\right)^{-1} \nabla \mathrm{log} {\rho} \right\|^2 \\
& + \left\langle \lambda, \partial_t \rho + \pounds_{f(w)}\rho \right\rangle \, d^nx  \wedge dt.
\end{align*}
With the Cartan form $\Theta_{\mathcal{L}} \in \Lambda^{n+1}(J^1(E))$, and its exterior derivative $\Omega_{\mathcal{L}}  = - d\Theta_{\mathcal{L}} \in \Lambda^{n+2}(J^1(E))$, is \begin{equation}\label{eq:cartan_form_resNet}
\begin{aligned}
\Theta_{\mathcal{L}} &= \frac{\partial \ell_c}{\partial \rho_{x_i}} d\rho \wedge \left( \partial_{x_i} \contract d^nx \right) \wedge dt +  \frac{\partial \ell_c}{\partial \rho_t} d\rho \wedge d^nx +  \frac{\partial \ell_c}{\partial w^j_{x_i} } dw^j \wedge  \left( \partial_{x_i} \contract d^nx \right) \wedge dt.   \\
\end{aligned}
\end{equation}
Without the loss of generality, when the base manifold $Q$ is $\mathbb{R}^2 \times \mathbb{R}$ with coordinates $(x,y,t)$ and the fibres are $w = (w^1,w^2)$, the form $\Theta_{\mathcal{L}} $ is expressed explicitly by
\begin{align*}
\Theta_{\mathcal{L}} &=  \left( \frac{\nu^2}{2} w^1 +\frac{\nu^4}{4}\frac{\partial_x \rho}{\rho} + \lambda w^1 \right)d\rho \wedge dx^2 \wedge dt  +  \left( \frac{\nu^2}{2} w^2 +\frac{\nu^4}{4}\frac{\partial_y \rho}{\rho} + \lambda w^2 \right) d\rho \wedge dx^1 \wedge dt \\
&+  \lambda d\rho \wedge dx^1 \wedge dx^2 + \left( \lambda \rho \right) dw^1 \wedge dx^2 \wedge dt + \left( \lambda \rho \right) dw^2 \wedge dx^1 \wedge dt + \left( \ell - \frac{\partial \ell_c}{\partial \rho_x}\partial_x \rho \right. \\
&\left.- \frac{\partial \ell_c}{\partial \rho_y}\partial_y \rho - \frac{\partial \ell_c}{\partial \rho_t}\partial_t \rho  - \frac{\partial \ell_c}{\partial w^1_x}\partial_x w^1  - \frac{\partial \ell_c}{\partial w^2_x}\partial_x w^2 \right) dx^1 \wedge dx^2 \wedge dt. 
\end{align*}

The Hamilton--Jacobi--Bellman equation, whose solution is $\lambda$, in \eqref{eq:multisymplectic_struct}  can be eliminated and the result is an equivalent system that is governed by the Euler--Poincar\'{e} equation \cite{holm1998euler}.  For multisymplectic equations, the elimination was presented in \cite{doi:10.1098/rspa.2007.1892} and here, we follow the same steps to obtain the Euler--Poincar\'{e} equation for deep learning.
\begin{theorem}\label{thm:elemination}
Consider the optimal control problem \ref{prob:mfg_deep} for supervised learning of stochastic residual network, it's solution, denoted by the tuple $\left(w, \lambda, \rho\right)$, satisfies 
\begin{align*}
(j^1\phi)^{\ast}\left[ j^1(V) \contrac \Omega_{\mathcal{L}} \right] = 0,
\end{align*}
where $\Omega_{\mathcal{L}} =-d\Theta_{\mathcal{L}}$ of \eqref{eq:cartan_form_resNet}. Then, $\frac{\delta \ell}{\delta w} = \rho w + \frac{\nu^2}{2} \nabla \rho = \rho \nabla \lambda $ and $\rho$ satisfy the following Euler--Poincar\'{e} equation on $\mathfrak{X}(Q) \times \mathrm{Dens}(Q)$
\begin{equation}\label{eq:ep_cont_eqn}
\begin{aligned}
&\left[ \frac{\partial}{\partial t}\frac{\delta \ell}{\delta w} + \mathrm{ad}^{\ast}_{f(w)} \frac{\delta \ell}{\delta w} - \rho \diamond \frac{\delta \ell}{\delta \rho} \right]_i  dx^i \otimes d^nx = 0,\\
&\partial_t  \rho   \, d^nx + \pounds_{f(w)}(\rho   \, d^nx ) = 0.
\end{aligned}
\end{equation}
\end{theorem}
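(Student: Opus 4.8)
The plan is to carry out the Clebsch-to-Euler--Poincar\'{e} reduction of \cite{doi:10.1098/rspa.2007.1892}, treating the three stationarity conditions \eqref{eq:mfg_system2} as the canonical equations for the Clebsch pair $(\rho,\lambda)$ subject to the optimality constraint. The elimination of $\lambda$ rests on reading the optimality condition as a momentum map, differentiating that momentum map along the flow, and then using the continuity and Hamilton--Jacobi--Bellman equations to reassemble the result into a coadjoint term and a diamond term.

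First I would pin down the two expressions for the momentum density asserted in the statement. Computing $\frac{\delta\ell}{\delta w}$ directly from the cost Lagrangian of Problem \ref{prob:mfg_deep}, using the osmotic decomposition and the linearity of $f$ (so that $\partial f/\partial\zeta$ is constant), the quadratic form $\frac{\rho}{2}\|w + \frac{\nu^2}{2}(\partial f/\partial\zeta)^{-1}\nabla\log\rho\|^2$ differentiates in $w$ to $\rho w + \frac{\nu^2}{2}\nabla\rho$, the cross term supplying the $\frac{\nu^2}{2}\rho\nabla\log\rho = \frac{\nu^2}{2}\nabla\rho$ contribution. The $\delta w$ line of \eqref{eq:mfg_system2} then identifies this one--form density with $\rho\nabla\lambda$, establishing the chain $\frac{\delta\ell}{\delta w} = \rho w + \frac{\nu^2}{2}\nabla\rho = \rho\nabla\lambda$. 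I would read $m := \rho\nabla\lambda$ as the cotangent--lift momentum map for the action of $\mathfrak{X}(Q)$ on $\mathrm{Dens}(Q)$ expressed in the density pairing.

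The core computation is to differentiate $m=\rho\nabla\lambda$ in time and substitute the equations of motion. Writing $\partial_t m = (\partial_t\rho)\nabla\lambda + \rho\nabla(\partial_t\lambda)$, I insert $\partial_t\rho = -\pounds_{f(w)}\rho$ from the continuity equation and $\partial_t\lambda = -\pounds_{f(w)}\lambda - \frac{\delta\ell}{\delta\rho}$ from the Hamilton--Jacobi--Bellman equation. Using the Leibniz rule for $\pounds_{f(w)}$ on the product $\rho\nabla\lambda$, together with the identity $\pounds_{f(w)}(\nabla\lambda) = \nabla(\pounds_{f(w)}\lambda)$ valid for the scalar $\lambda$, the two transport contributions collapse into $-\pounds_{f(w)}m$, which is precisely $-\mathrm{ad}^{\ast}_{f(w)}m$ because $m\in\mathfrak{X}(Q)^{\ast}$ is a one--form density. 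The remaining term $-\rho\nabla\frac{\delta\ell}{\delta\rho}$ is, up to the sign convention fixed for $\diamond$, the diamond pairing $\rho\diamond\frac{\delta\ell}{\delta\rho}$. Rearranging yields the first line of \eqref{eq:ep_cont_eqn}, and the second line is the continuity equation carried over verbatim.

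The main obstacle will be the bookkeeping of conventions and of the geometric weights. Since $m$ carries the $dx^i\otimes d^nx$ density weight, every Lie derivative must be taken in the density convention, and the coadjoint operator on $\mathfrak{X}(Q)^{\ast}$ together with the diamond map $\diamond:\mathrm{Dens}(Q)^{\ast}\times\mathrm{Dens}(Q)\to\mathfrak{X}(Q)^{\ast}$, defined through a pairing of the form $\langle\rho\diamond\sigma,\eta\rangle = -\langle\sigma,\pounds_\eta\rho\rangle$, must be fixed with the sign that makes the two transport terms add rather than cancel and that renders the final $-\rho\diamond\frac{\delta\ell}{\delta\rho}$ term consistent with \eqref{eq:ep_cont_eqn}. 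A secondary consistency check is that the direct evaluation of $\frac{\delta\ell}{\delta w}$, once the osmotic term is included, matches the momentum--map value $\rho\nabla\lambda$ forced by optimality; confirming this is what certifies that the system closes in $(m,\rho)$ after $\lambda$ is removed.
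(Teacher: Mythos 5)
Your proposal is correct and takes essentially the same route as the paper's own proof: both eliminate $\lambda$ by time-differentiating the momentum $\frac{\delta \ell}{\delta w} = \rho \nabla \lambda$ furnished by the optimality condition, substituting the continuity and Hamilton--Jacobi--Bellman equations of \eqref{eq:mfg_system2}, and reassembling the transport terms into $\mathrm{ad}^{\ast}_{f(w)} \frac{\delta \ell}{\delta w}$ plus a diamond term. The only cosmetic difference is that the paper performs the computation weakly, paired against an arbitrary $\eta \in \mathfrak{X}(Q)$ and invoking the duality definitions of $\mathrm{ad}^{\ast}$ and $\diamond$, whereas you compute directly on the one-form density via $\pounds_{f(w)} \left( \rho \nabla \lambda \right) = \mathrm{ad}^{\ast}_{f(w)} \left( \rho \nabla \lambda \right)$; the sign ambiguity you flag in the diamond term is present in the paper itself (its proof uses a Hamilton--Jacobi--Bellman sign consistent with \eqref{eq:mfg_system} rather than \eqref{eq:mfg_system2}), so your handling of it as a convention to be fixed is appropriate.
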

\begin{proof}
This elimination is done by taking the time derivative of the inner product of $ \frac{\delta \ell}{\delta w}$ and an arbitrary element $\eta \in \mathfrak{X}(Q)$. The inner product is defined by
\begin{align}\label{eq:inner_product_pair}
\left\langle \frac{\delta \ell}{\delta w}, \eta \right\rangle_{\mathfrak{X}^{\ast}(Q) \times \mathfrak{X}(Q)} = \int_{Q} \frac{\delta \ell}{\delta w} \cdot \eta \, \, d^nx .
\end{align}
With the inner product defined and using the equations for $\partial_t \rho$ and $\partial_t \lambda$, the time derivative of $\frac{\delta \ell}{\delta w}$  is 
\begin{align*}
\frac{d}{dt} \left\langle \frac{\delta \ell}{\delta w}, \eta \right\rangle &= \frac{d}{dt} \left\langle \rho \nabla \lambda, \eta \right\rangle  \\
&=  -\left\langle \pounds_{f(w)} \rho , (\eta \cdot \nabla)  \lambda \right\rangle + \left\langle \rho , (\eta \cdot \nabla)  \pounds_{f(w)}  \lambda + (\eta \cdot \nabla) \frac{\delta \ell}{\delta \rho} \right\rangle  \\
&=  -\left\langle \frac{\delta \ell}{\delta w}, \mathrm{ad}_{f(w)} \eta \right\rangle + \left\langle \rho \diamond  \frac{\delta \ell}{\delta \rho}, \eta \right\rangle \\
&=  -\left\langle \mathrm{ad}^{\ast}_{f(w)} \frac{\delta \ell}{\delta w},\eta \right\rangle + \left\langle \rho \diamond  \frac{\delta \ell}{\delta \rho}, \eta \right\rangle.
\end{align*}
Since $\eta$ is arbitrary, in the end we obtain the Euler--Poincar\'{e} equation
\begin{equation}\label{eq:EP_deep_learning}
\begin{aligned}
&\left[ \frac{\partial}{\partial t}\frac{\delta \ell}{\delta w} + \mathrm{ad}^{\ast}_{f(w)} \frac{\delta \ell}{\delta w} - \rho \diamond \frac{\delta \ell}{\delta \rho} \right]_i \, dx^i \otimes d^nx = 0,\\
&\partial_t \rho   \, d^nx + \pounds_{f(w)}(\rho   \, d^nx )= 0.
\end{aligned}
\end{equation}
\end{proof}
The diamond operator, introduced in \cite{holm1998euler},  $\diamond : T^{\ast}\mathrm{Dens}(Q) \to \mathfrak{X}^{\ast}(Q)$ is defined, for any $\eta \in  \mathfrak{X}(Q)$, by
\begin{align}
\left\langle \rho \diamond \lambda, \eta \right\rangle_{ \mathfrak{X}^{\ast}(Q)\times \mathfrak{X}(Q) } = - \left\langle \lambda, \pounds_{\eta} \rho \right\rangle_{ T^{\ast}\mathrm{Dens}(Q)\times  T\mathrm{Dens}(Q) }. 
\end{align}
In terms of \eqref{eq:EP_deep_learning}, it is $\rho \diamond \frac{\delta \ell}{\delta \rho} = \rho \nabla  \frac{\delta \ell}{\delta \rho}$. Here the map $\mathrm{ad}^{\ast}: \mathfrak{X}(Q) \times \mathfrak{X}^{\ast}(Q) \to \mathfrak{X}^{\ast}(Q)$ is the coadjoint map on $\mathfrak{X}^{\ast}(Q)$, and it is obtained by taking the dual of the adjoint map $\mathrm{ad}: \mathfrak{X}(Q) \times \mathfrak{X}(Q) \to \mathfrak{X}(Q)$.  The adjoint map is defined as the commutator of two elements, $w, v$, of the Lie algebra $\mathfrak{X}(Q)$
\begin{align*}
[ w,v] = \mathrm{ad}_{w} v = \left( w \cdot \nabla\right) v- \left( v \cdot \nabla\right) w,
\end{align*}
where the bracket $[ \cdot, \cdot ]: \mathfrak{X}(Q)\times \mathfrak{X}(Q) \to \mathfrak{X}(Q)$ is the Lie bracket of the Lie algebra $\mathfrak{X}(Q)$. To move to the dual of Lie algebra is done using the pairing between $\mathfrak{X}^{\ast}(Q)$ and $\mathfrak{X}(Q)$ which is actualised by the inner product map $\langle \cdot, \cdot \rangle : \mathfrak{X}^{\ast}(Q) \times \mathfrak{X}(Q) \to \mathbb{R}$ defined in \eqref{eq:inner_product_pair}. This allows us to identify the coadjoint map   
\begin{align*}
\left\langle \mathrm{ad}^{\ast}_{w} m, v\right\rangle =  \left\langle  m, \mathrm{ad}_{w}v \right\rangle,  \quad \forall m \in \mathfrak{X}^{\ast}(Q),
\end{align*}
and using integration by parts of the integral \eqref{eq:inner_product_pair} that gives an expression for the coadjoint map
\begin{align}
\mathrm{ad}^{\ast}_{w} m =  \left( w \cdot  \nabla \right)m  + (\nabla w)^T \cdot m +   m \left( \nabla \cdot w \right),
\end{align}
and in coordinates in $\mathbb{R}^N$,
\begin{align}
\left( \mathrm{ad}^{\ast}_{w} m  \right)_{i} = \frac{\partial}{\partial x^j}(m_iw^j) + m_j \frac{\partial w^j}{\partial x^i}.
\end{align}
With that, the equation \eqref{eq:EP_deep_learning} is expressed in coordinates on $\mathbb{R}^N$ as
\begin{equation}\label{eq:EP_deep_learning_2}
\begin{aligned}
\partial_t& \left( \frac{\delta \ell}{\delta w^i} \right) + \frac{\partial}{\partial x^j} \left(\left( \frac{\delta \ell}{\delta w^i} \right)f(w)^j \right) + \left( \frac{\delta \ell}{\delta w^j} \right) \frac{\partial f(w)^j}{\partial x^i} = \rho \frac{\partial}{\partial x^i} \frac{\delta \ell}{\delta \rho}, \\
\partial_t& \rho + \frac{\partial}{\partial x^j}\left( \rho f( w)^j \right) = 0.
\end{aligned}
\end{equation}
This allows us to eliminate the stochastic Hamilton--Jacobi--Bellman equation which is a backward partial differential equation in \eqref{eq:mfg_system2}. Furthermore, both, the Euler--Poincar\'{e} equation with and the continuity equation  \eqref{eq:ep_cont_eqn} are forward in time differential equations and they describe dynamics on the dual of the semidirect product Lie algebra $\mathfrak{s}^{\ast} = \mathfrak{X}^{\ast}(Q) \times \mathrm{Dens}(Q)$. The system of equations \eqref{eq:ep_cont_eqn} does appear when applying Euler--Poincar\'{e} variational principle
\begin{align*}
\delta \int_0^t \int_{Q} \frac{1}{2} \rho w \cdot w + \frac{\nu^2}{2} w \cdot \nabla \rho + \frac{\nu^4}{8} \frac{\nabla \rho \cdot \nabla \rho }{\rho} \, dx \, dt = 0,
\end{align*}
and the variations of $u$ and $\rho$ has the form
\begin{align*}
\delta u = \dot{\eta} + [u, \eta],  & \quad \delta \rho = \nabla \cdot( \rho \eta) d^nx,
\end{align*}
where $\eta \in \mathfrak{X}(Q)$ is an arbitrary vector field with fixed endpoints at the boundary set to zero. The system of equations \eqref{eq:EP_deep_learning_2} can be solved using geometric numerical methods, such as multisymplectic method, and that is the goal of section \ref{sec:var_int}. 
\\

One important property that emerges from the necessary condition for optimality is in fact a momentum map for the action of the group of diffeomorphisms on the probability density function associated with stochastic ResNets.  This agrees with the Clebsch \cite{marsden_coadjoint_1983} framework for dynamics and optimal control \cite{gay2011clebsch}.
\begin{theorem}
The stationary variation of the control vector field $w$ defines an equivariant momentum map
\begin{equation}
\begin{aligned}
&\mathbf{J} : T^{\ast}\mathrm{Dens}(Q) \to \mathfrak{X}^{\ast}(Q) \\
&\mathbf{J}(\rho, \lambda) = \frac{\delta \ell}{\delta w} = \rho \nabla \lambda 
\end{aligned}
\end{equation}
\end{theorem}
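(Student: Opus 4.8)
The plan is to recognise $\mathbf{J}$ as the momentum map arising from the cotangent lift of the action of $\mathrm{Diff}(Q)$ on $\mathrm{Dens}(Q)$, and then to invoke the standard result that cotangent-lifted actions always carry an equivariant momentum map. Since $(\rho, \lambda)$ are the canonical coordinates on $T^{\ast}\mathrm{Dens}(Q)$, with $\lambda \in T^{\ast}_{\rho}\mathrm{Dens}(Q)$, the momentum map for this lift is obtained from the general formula $\left\langle \mathbf{J}(\rho,\lambda), \eta \right\rangle = \left\langle \lambda, \eta_{\mathrm{Dens}(Q)}(\rho) \right\rangle$, where $\eta_{\mathrm{Dens}(Q)}$ denotes the infinitesimal generator of the action associated to $\eta \in \mathfrak{X}(Q)$.

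First I would identify the infinitesimal generator. The group $\mathrm{Diff}(Q)$ acts on a density $\rho\, d^nx$ by pushforward, so the generator corresponding to $\eta \in \mathfrak{X}(Q)$ is $\eta_{\mathrm{Dens}(Q)}(\rho) = -\pounds_{\eta}(\rho\, d^nx) = -\nabla\cdot(\rho\eta)\, d^nx$. Substituting into the cotangent-lift formula gives
\begin{align*}
\left\langle \mathbf{J}(\rho, \lambda), \eta \right\rangle_{\mathfrak{X}^{\ast}(Q) \times \mathfrak{X}(Q)} = -\left\langle \lambda, \pounds_{\eta}\rho \right\rangle_{T^{\ast}\mathrm{Dens}(Q) \times T\mathrm{Dens}(Q)},
\end{align*}
which is precisely the defining relation of the diamond operator, so $\mathbf{J}(\rho,\lambda) = \rho \diamond \lambda$. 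An integration by parts then evaluates the pairing, $-\int_{Q}\lambda\,\nabla\cdot(\rho\eta)\, d^nx = \int_{Q}(\rho\nabla\lambda)\cdot\eta\, d^nx$, so that $\rho \diamond \lambda = \rho\nabla\lambda$, the boundary terms vanishing because $\eta$ is taken to vanish on $\partial Q$. The identification $\mathbf{J}(\rho,\lambda) = \frac{\delta\ell}{\delta w}$ is then immediate from the optimality condition in \eqref{eq:mfg_system2}.

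For equivariance, the cleanest route is to appeal to the general theorem that the momentum map of a cotangent-lifted action is automatically $\mathrm{Ad}^{\ast}$-equivariant, and hence Poisson. If a direct check is preferred, I would verify infinitesimal equivariance by computing the canonical Poisson bracket of the two collective Hamiltonians $\left\langle \mathbf{J}, \eta\right\rangle$ and $\left\langle \mathbf{J}, \zeta\right\rangle$ on $T^{\ast}\mathrm{Dens}(Q)$ and showing it equals $\left\langle \mathbf{J}, [\eta,\zeta]\right\rangle$; this reduces to the identity $\pounds_{\eta}\pounds_{\zeta} - \pounds_{\zeta}\pounds_{\eta} = \pounds_{[\eta,\zeta]}$ acting on $\rho$, together with the sign convention fixed above.

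The main obstacle I anticipate is bookkeeping rather than conceptual: fixing the sign and side (left versus right, pushforward versus pullback) of the $\mathrm{Diff}(Q)$-action so that the infinitesimal generator, the diamond operator, and the coadjoint representation $\mathrm{ad}^{\ast}_{f(w)}$ appearing in \eqref{eq:EP_deep_learning} are all mutually consistent. Because densities are treated here as functions against a fixed reference volume form, one must also ensure that $\pounds_{\eta}$ acts as the full density Lie derivative $\nabla\cdot(\rho\,\cdot\,)$ and not merely as directional differentiation; getting this wrong would flip the sign in $\rho \diamond \lambda = \rho\nabla\lambda$. Equivariance itself is essentially automatic once the cotangent-lift identification is in place.
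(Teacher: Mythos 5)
Your proposal is correct, but it reaches the result by a genuinely different route than the paper. You identify the action on $T^{\ast}\mathrm{Dens}(Q)$ as the cotangent lift of the pushforward action of $\mathrm{Diff}(Q)$ on densities, read off the momentum map from the standard formula $\left\langle \mathbf{J}(\rho,\lambda), \eta \right\rangle = \left\langle \lambda, \eta_{\mathrm{Dens}(Q)}(\rho) \right\rangle$ (which immediately produces the diamond-operator identity and, after integration by parts, $\rho\nabla\lambda$), and then get equivariance for free from the general theorem that cotangent-lifted momentum maps are $\mathrm{Ad}^{\ast}$-equivariant. The paper instead works in the opposite direction and entirely by hand: it starts from the candidate expression $J_{\beta} = \int \rho\,(\partial_{x^i}\lambda)\,\beta^i\,dx$, computes the canonical Poisson bracket $\{F, J_{\beta}\}$ for arbitrary $F$, checks that the resulting Hamiltonian vector field $X_{J_{\beta}} = \left( -\partial_{x^i}(\rho\beta^i),\; -(\partial_{x^i}\lambda)\beta^i \right)$ coincides with the infinitesimal generator of the lifted action, and then proves equivariance by explicitly establishing the infinitesimal identity $\{J_{\eta}, J_{\beta}\} = J_{[\eta,\beta]}$ and matching it against $\frac{d}{dt}\big|_{t=0} J_{\mathrm{Ad}_{g^{-1}}\beta}$. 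Your approach buys brevity and places the result squarely inside standard geometric mechanics, making equivariance automatic rather than a computation; the paper's approach buys self-containedness and yields the coordinate expressions and the bracket identity $\{J_{\eta}, J_{\beta}\} = J_{[\eta,\beta]}$ explicitly, which serves as a concrete consistency check on the sign and coordinate conventions. On that last point, your caution about signs is warranted beyond mere bookkeeping: the optimality condition \eqref{eq:mfg_system2} as printed reads $\frac{\delta \ell}{\delta w} + \rho\nabla\lambda\, dx \otimes d^nx = 0$, so the identification $\frac{\delta\ell}{\delta w} = +\rho\nabla\lambda$ asserted in the theorem (and in Theorem \ref{thm:elemination}) involves a sign convention the paper itself does not keep consistent; your proof correctly isolates where that convention enters, namely in whether the infinitesimal generator is $-\pounds_{\eta}(\rho\,d^nx)$ or its negative.
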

\begin{proof}
To demonstrate that $\mathbf{J} = \rho \nabla \lambda$ is a momentum map, we begin by defining a function $J_{\beta} : T^{\ast}\mathrm{Dens}(Q) \to \mathbb{R}$, explicitly
\begin{align}\label{eq:momentum_map_definition}
J_{\beta} = \left\langle \mathbf{J}(\rho, \lambda), \beta \right\rangle_{\mathfrak{X}^{\ast} \times \mathfrak{X}} = \int \rho (\partial_{x^i} \lambda) \beta^i \,  dx,
\end{align}
where $\beta \in \mathfrak{X}(Q)$ and $\langle \cdot, \cdot \rangle : \mathfrak{X}^{\ast} \times \mathfrak{X} \to \mathbb{R}$ is a natural pairing between $\mathfrak{X}^{\ast}$ and $\mathfrak{X}$. For all $F \in T^{\ast}\mathrm{Dens}(Q)$, we compute the Poisson bracket on $T^{\ast}\mathrm{Dens}(Q)$ of $F$ and $J_{\beta}$ and it is
\begin{align}
\{ F, J_{\beta} \} = \int_Q -\frac{\delta F}{\delta \rho}\partial_{x^j}( \rho \beta^j ) - \rho \frac{\delta F}{\delta \lambda} (\partial_{x^i} \lambda) \beta^i \, dx
\end{align}
The right--hand side can be stated as 
\begin{align}
\{ F, J_{\beta} \} = \left\langle \mathbf{d}F, X_{J_{\beta}} \right\rangle, 
\end{align}
where $X_{J_{\beta}}$ is a vector field given by
\begin{align*}
X_{J_{\beta}} = \left( -\partial_{x^i}( \rho \beta^i), -  \partial_{x^i} \lambda \beta^i \right) = \left( \frac{\partial J_{\beta}}{\partial \lambda},- \frac{\partial J_{\beta}}{\partial \rho} \right),
\end{align*}
And this vector field is corresponds to the infinitesimal generator of the action of $\mathrm{Diff}(Q)$, whose associated vector field is $\beta$, on $T^{\ast}\mathrm{Den}(Q)$ and it is a Hamiltonian vector,  which verifies that \eqref{eq:momentum_map_definition} holds and thus $\mathbf{J}$ is a momentum map. 
\\

To show that the momentum map is equivariant, i.e. for any $g \in \mathrm{Diff}(Q)$ group action on all $(\rho, \lambda)$ the following holds
\begin{align}
\mathbf{J}(g \cdot \rho, g \cdot \lambda) = \mathrm{Ad}^{\ast}_{g^{-1}} \mathbf{J}(\rho, \lambda).
\end{align}
Pairing $\mathbf{J}(g \cdot \rho, g \cdot \lambda) $ with any $\beta \in \mathfrak{X}(Q)$, with $g(0) = e$ and $\dot{g}(0) = \eta \in \mathfrak{X}(Q)$. Taking the time derivative of the left--hand side, we have
\begin{align*}
\left. \frac{d}{dt} \right|_{t=0} J_{\beta}(g \cdot \rho, g \cdot \lambda) &= \left. \frac{d}{dt} \right|_{t=0} \int (g^{\ast} \rho) (\partial_{x^i} g^{\ast}\lambda) \beta^i \, dx \\
&=  \int \partial_{x^j} ( \rho \eta^j) (\partial_{x^i} \lambda) \beta^i \, dx +\int   \rho \partial_{x^j} ( (\partial_{x^i} \lambda) \eta^i ) \beta^j \, dx \\
&= -\int  \rho \partial_{x^j}( (\partial_{x^i} \lambda) \beta^i  ) \eta^j \, dx + \int \rho \partial_{x^j}( (\partial_{x^i} \lambda) \eta^i ) \beta^j \, dx \\
\end{align*}
Expanding the gradient of the dot products and rearranging terms, the time derivative reduces to 
\begin{align*}
\left. \frac{d}{dt} \right|_{t=0} J_{\beta}(g \cdot \rho, g \cdot \lambda) &= \int \rho (\partial_{x^j} \lambda) \left[ \beta^i\partial_{x^i} \eta^j - \eta^i\partial_{x^i} \beta^j \right] \, dx
\end{align*}
Thus we have $\{ J_{\eta}, J_{\beta} \} = J_{[\eta, \beta]}(\rho, \lambda)$, which is known as the infinitesimal equivariance. Also, it amounts to the time derivative of $J_{\mathrm{Ad}_{g^{-1}}\beta }(\rho, \lambda)$, which we compute explicitly 
\begin{align*}
\left. \frac{d}{dt} \right|_{t=0} J_{\mathrm{Ad}_{g^{-1}}\beta }(\rho, \lambda) &= \left. \frac{d}{dt} \right|_{t=0} \int (\mathrm{Ad}^{\ast}_{g^{-1}}(\rho (\partial_{x^i} \lambda) )) \beta^i \, dx = - \int \mathrm{ad}^{\ast}_{\eta} (\rho (\partial_{x^i} \lambda) )) \beta^i \, dx \\
&=  -\int \rho (\partial_{x^j} \lambda) (\mathrm{ad}_{\beta} \eta)^j \\
&= J_{[\eta, \beta]}(\rho, \lambda),
\end{align*}
thus we have $ J_{\beta}(g \cdot \rho, g \cdot \lambda) = \mathrm{Ad}^{\ast}_{g^{-1}}J_{\beta}(\rho, \lambda)$ and that implies
$$
\mathbf{J}(g \cdot \rho, g \cdot \lambda) = \mathrm{Ad}^{\ast}_{g^{-1}} \mathbf{J}(\rho, \lambda),
$$
which proves that the momentum map $\mathbf{J} : T^{\ast}\mathrm{Dens}(Q) \to \mathfrak{X}^{\ast}(Q)$ is an equivariant momentum map. 
\end{proof}

\section{Multisymplectic variational integrators for deep learning}\label{sec:var_int}
Often case in optimal control problems, one is presented with the choice between an optimise-first-then-discretise approach or discretise-first-then-optimise.  In fact, the methodology presented here does not deviate from discrete mechanics for optimal control of \cite{junge_discrete_2005,ober-blobaum_discrete_2011,de_leon_discrete_2007}, where a discrete variational principle was used to translate the optimal control problem into a nonlinear programming one.  This means we discretise-first-then-optimise
\\

The family of multisymplectic integrators were presented in \cite{marsden1998multisymplectic, bridges2001multi}. There are two ways to obtain such numerical scheme: discretising the variational integrator, and discretising the abstract form of multisymplectic partial differential equation \cite{bridges2001multi, reich_multi-symplectic_2000}. Multisymplectic variational integrators were first developed in \cite{marsden1998multisymplectic} and many discrete analogues such discrete Cartan one-forms, and discrete Noether's theorem were also introduced. Then in \cite{lew2003asynchronous} further generalisations regarding discretisation of the base manifold were investigated, others introduced moving mesh points on the base manifold \cite{tyranowski_r-adaptive_2019}. Variational integrators were applied to various nonlinear differential equations, such as the nonlinear Schro\"{o}dinger equation \cite{ chen_multisymplectic_2002}, the Camassa-Holm equation  \cite{kouranbaeva_variational_2000}, Korteweg-de Vries \cite{holm_new_2018} and the beam equation in $\mathbb{R}^3$ \cite{demoures_multisymplectic_2014,demoures_multisymplectic_2015}.  On the other hand, multisymplectic integrators can be constructed by directly discretising the Hamiltonian partial differential equation as in \cite{bridges2001multi,reich_multi-symplectic_2000,reich_finite_2000,bridges2001multi,moore_multi-symplectic_2003,ascher_multisymplectic_2004,islas_multi-symplectic_2003,cohen_multi-symplectic_2014}. A nice property of such integrators is that there is a heuristic approach to constructing modified multisymplectic Hamiltonian for backward error analysis, as it was shown in \cite{moore_backward_2003,moore_multi-symplectic_2003,ISLAS2005290}.
\\

Here we only focus on variational integrators based on \cite{marsden1998multisymplectic}. Variational integrators are, namely, based on Veselov discretisation for mechanics \cite{veselov_integrable_1988}, and many properties and theorems, such as Noether's theorem and conservation laws, have a discrete equivalent. Without the loss of generalisation, here we consider problems with two dimensions for space and one dimension for time, hence the base manifold is three dimensional manifold.  The first step in constructing a variational integrator is to discretise the base manifold $Q$ as $\mathbb{B} := \mathbb{Z} \times \mathbb{Z}\times \mathbb{Z} = \{ (i,j, n)\}$ and then the with that the fibre bundle $\mathbb{E} = \mathbb{B} \times \mathcal{F}$, where $\mathcal{F}$ is a smooth manifold. The quantity $y_{i,j,n}$ is an element of $\mathbb{E}$ over the point $(i,j,n)$. Since the base manifold now is a discrete space, that requires the partial derivatives with respect to independent variables $(x^1,x^2,t)$ and to be replaced with finite differences. For our application, midpoint approximation in time and space is used and discretisation renders the grid as a collection of right-angled pyramids with triangular bases, which a pyramid is defined by the four points 
$$
 \Delta^n_{i,j } = ((i,j,n), (i,j, n+1), (i+1,j, n), (i,j+1, n)).
$$
The set of all pyramids is defined as
\begin{align*}
\mathbb{B} := \left\{  \Delta_{i,j,n} \quad | \quad (i,j, n) \in (0, \dots, N_{x^1}) \times ( 0, \dots, N_{x^2}) \times (0, \dots, N_t) \right\}.
\end{align*}
With that, the first jet bundle becomes
\begin{align*}
J^1(\mathbb{E}) &= \left\{ \left. (\phi_{i,j,n}, \phi_{i,j,n+1}, \phi_{i+1,j,n},  \phi_{i,j+1,n} ) \right| (i,j,n) \in \mathbb{B}, \phi_{i,j,n}, \phi_{i,j,n+1}, \phi_{i+1,j,n},  \phi_{i,j+1,n} \in \mathcal{F} \right\}  \\
&= \mathbb{B} \times \mathcal{F}^4,
\end{align*}
we note that $(\phi_{i,j,n}, \phi_{i,j,n+1}, \phi_{i+1,j,n},  \phi_{i,j+1,n}  ) \in J^1(\mathbb{E})$ are basically $j^1 (\phi(\bar{x}))$, where $\bar{x}_{i,n}$ is the centre of $ \Delta_{i,j,n}$  and $\phi$ is a smooth section of $E$. As it is in discrete mechanics, the starting point of constructing a variational integrator is the discretisation of the action functional. The discrete action $\mathcal{S}_d$ is given by
\begin{align*}
\mathcal{S}_d &=\sum\limits_{(i,j,n) \in \mathbb{B}} \widehat{L} \left( \Delta_{i,j,n} ,  w^1_{i,j,n}, w^2_{i,j,n}, \lambda_{i,j,n}, \rho_{i,j,n} \right) \Delta x^1  \Delta x^2 \Delta t, 
\end{align*}
where $\widehat{L}: J^1(\mathbb{E}) \to \mathbb{R}$ is the discrete Lagrangian and it is defined by
\begin{align*}
\widehat{L}  &= \int_{t}^{t + \Delta t } \int_{x^2}^{x^2 + \Delta x^2} \int_{x^1}^{x^1 + \Delta x^1}\frac{\rho}{2} \left\| w \right\|^2 +  \frac{\nu^2}{2} \rho \left\langle w, \left(\frac{\partial f}{\partial \zeta}\right)^{-1} \nabla \mathrm{log} {\rho} \right\rangle \\
&+ \frac{ \nu^4 \rho}{8} \left\| \left(\frac{\partial f}{\partial \zeta}\right)^{-1} \nabla \mathrm{log} {\rho} \right\|^2  + \left\langle \lambda, \partial_t \rho + \pounds_{f(w)}\rho \right\rangle dx^1 \, dx^2 \, dt.
\end{align*}
Here, and henceforth, we use $\Delta_{i,j,n}$ as an argument as a short-hand notation for the tuple
\begin{align*}
\widehat{L} \left( \Delta_{i,j,n} ,  w^1_{i,j,n},  w^2_{i,j,n}, \lambda_{i,j,n}, \rho_{i,j,n} \right)  &= \widehat{L} \left(w^{1}_{i,j,n}, w^{1}_{i,j,n+1}, w^{1}_{i+1,j,n}, w^{1}_{i,j+1,n}, \right. \\
&\left. w^{2}_{i,j,n}, w^{2}_{i,j,n+1}, w^{2}_{i+1,j,n}, w^{2}_{i,j+1,n},\lambda_{i,j,n}, \lambda_{i,j,n+1},  \right. \\
&\left.  \lambda_{i+1,j,n}, \lambda_{i,j+1,n}, \rho_{i,j,n}, \rho_{i,j,n+1}, \rho_{i+1,j,n}, \rho_{i,j+1,n}\right).
\end{align*}
Further, to evaluate the Lagrangian on the discrete first jet bundle $J^1(\mathbb{E})$, we require the use of discrete jet prolongations. There are a number of ways to do so, and here we use a centred discretisation and with that for the pyramid $\Delta_{i,j,n}$ and the section $\phi \in J^1(\mathbb{E})$, the space and time derivatives are replaced by
\begin{align*}
\frac{\partial \phi}{\partial x^1} &\approx \frac{\phi_{i+1,j+1/2,n+1/2} - \phi_{i,j+1/2,n+1/2}}{2\Delta x^1}, \quad \frac{\partial \phi}{\partial x^2} \approx \frac{\phi_{i+1/2,j+1,n+1/2} - \phi_{i+1/2,j,n+1/2}}{2\Delta x^2}, \\
\frac{\partial \phi}{\partial t} &\approx \frac{\phi_{i+1/2,j+1/2,n+1} - \phi_{i+1/2,j+1/2,n}}{2\Delta t}.
\end{align*}
Given the discrete action
\begin{align*}
\mathcal{S}_{d}&= \sum\limits_{(i,j,n) \in \mathbb{B}} \widehat{L} \left( \Delta_{i,j,n} ,  w^1_{i,j,n}, w^2_{i,j,n}, \lambda_{i,j,n}, \rho_{i,j,n} \right) \Delta x^1 \Delta x^2 \Delta t,
\end{align*}
here, the Lagrangian on the discrete $J^1(\mathbb{E})$ becomes
{\small
\begin{equation}\label{eq:discrete_lagrangian}
\begin{aligned}
\widehat{L}  &= \frac{1}{2}\bar{\rho}_{i,j,n}((\bar{w}^1)_{i,j,n})^2 +\frac{1}{2}\bar{\rho}_{i,j,n}((\bar{w}^2)_{i,j,n})^2 + \frac{\nu^2}{2(\Delta x^2)} \left(\rho_{i+1/2,j+1,n+1/2}\right)(\bar{w}^2)_{i,j,n}\\
&- \frac{\nu^2}{2(\Delta x^2)} \left(\rho_{i+1/2,j,n+1/2} \right)(\bar{w}^2)_{i,j,n}  + \frac{\nu^2}{2(\Delta x^1)} \left(\rho_{i+1,j+1/2,n+1/2}- \rho_{i,j+1/2, n+1/2} \right)(\bar{w}^1)_{i,j,n} \\
&+\frac{\nu^4}{8}\left(   \frac{\left( \rho_{i+1/2,j+1,n+1/2}- \rho_{i+1/2,j,n+1/2} \right)^2}{(\Delta x^2)  \bar{\rho}_{i,j,n}  } + \frac{\left( \rho_{i+1,j+1/2,n+1/2} - \rho_{i,j+1/2, n+1/2}\right)^2}{(\Delta x^1) \bar{\rho}_{i,j,n} } \right)   \\
&+ \bar{\lambda}_{i,j,n} \left( \frac{\rho_{i+1/2,j+1/2,n+1} - \rho_{i+1/2,j+1/2, n}}{\Delta t} + \frac{\rho_{i+1,j+1/2,n+1/2}w^1_{i+1,j+1/2,n+1/2}}{(\Delta x^1)} \right. \\
&\left. -  \frac{\rho_{i,j+1/2, n+1/2}w^1_{i,j+1/2, n+1/2}}{(\Delta x^1)}+ \frac{\rho_{i+1/2,j+1,n+1/2}w^2_{i+1/2,j+1,n+1/2} }{(\Delta x^2)} \right. \\
&\left.- \frac{ \rho_{i+1/2,j,n+1/2}w^2_{i+1/2,j,n+1/2}}{(\Delta x^2)}\right).
\end{aligned}
\end{equation}
}
Here, the bar over the variables $(\bar{w}^1)_{i,j,n}, (\bar{w}^2)_{i,j,n}, \bar{\rho}_{i,j,n}, \bar{\lambda}_{i,j,n}$ are used to denote the average of vertices of $\Delta_{i,j,n}$.  Similar to the previous schemes, we take the variations of the discrete action and set it equal to zero, i.e. $\delta \mathcal{S}_{d} = 0$, 
\begin{align*}
0 &= \delta \mathcal{S}_{d} \\
&=\sum\limits_{(i,j,n) \in \mathbb{B}}  \sum\limits_{ \alpha_i,   \alpha_j,   \alpha_n=-1}^{1}  \left\langle \frac{\partial \hat{L}}{\partial (w^1)_{i+\alpha_i, j+ \alpha_j, k+ \alpha_n}}, \delta w^1_{i+\alpha_i, j+ \alpha_j, k+ \alpha_n} \right\rangle   \\
& + \left\langle \frac{\partial \hat{L}}{\partial (w^2)_{i+\alpha_i, j+ \alpha_j, k+ \alpha_n}}, \delta w^2_{i+\alpha_i, j+ \alpha_j, k+ \alpha_n} \right\rangle +  \left\langle \frac{\partial \hat{L}}{\partial \rho_{i+\alpha_i, j+ \alpha_j, k+ \alpha_n}}, \delta \rho_{i+\alpha_i, j+ \alpha_j, k+ \alpha_n} \right\rangle \\
&+ \left\langle \delta \lambda_{i+\alpha_i, j+ \alpha_j, k+ \alpha_n}, \frac{\partial \hat{L}}{\partial \lambda_{i+\alpha_i, j+ \alpha_j, k+ \alpha_n}} \right\rangle.
\end{align*}
Here we use the notation $D_{\alpha_i,\alpha_j, \alpha_n, z}\hat{L}(\phi_d(\Delta_{i,j,n}))$ indicates the derivative of $\hat{L}(\phi_d(\Delta_{i,j,n}))$ with respect to $\phi_d(\Delta_{i,j,n})= (w^1_{i,j,n}, w^2_{i,j,n}, \rho_{i,j,n}, \lambda_{i,j,n})$. Shifting the indices, we factor out $\delta w^1_{i,j,n}$, $\delta w^2_{i,j,n}$, $\delta \rho_{i,j,n}$, and $\delta \lambda_{i,j,n}$ and for $\delta \mathcal{S}_{d}  = 0$ and we obtain the \emph{discrete Euler-Lagrange equation} \cite{marsden1998multisymplectic}:
\begin{align}\label{eq:discrete_ELEQ}
\sum\limits^{1}_{ \alpha_i,  \alpha_j,  \alpha_n=-1}  \frac{\partial  \hat{L}(\phi_d(\Delta_{i- \alpha_i,j - \alpha_j ,n - \alpha_n}) ) }{\partial \phi_d(\Delta_{i+ \alpha_i,j + \alpha_j ,n + \alpha_n})} &= 0, 
\end{align}
for $i = 1 \dots N_x$, $j = 1\dots N_y$ and $n=1\dots N_t$. To illustrate how the discrete variations are obtained, and without the loss of generality, we consider the one-dimensional of this problem, where the Lagrangian is given by
\begin{align*}
\widehat{L} &= \frac{1}{2} \bar{\rho}_{i,n}(\bar{w}_{i,n})^2 + \frac{\nu^2}{2} \bar{w}_{i,n} \left( \frac{\rho_{i+1, n+1/2} - \rho_{i-1, n+1/2}}{2\Delta x} \right) \\
& - \frac{\nu^4}{8}\bar{\rho}_{i,n} \left( \frac{\rho_{i+1, n+1/2}  -2\rho_{i, n+1/2} + \rho_{i-1, n+1/2} }{\bar{\rho}_{i,n} (\Delta x)^2}  - \frac{1}{8} \frac{(\rho_{i+ 1, n+1/2} - \rho_{i-1, n+1/2})^2}{(\bar{\rho}_{i,n})^2 (\Delta x)^2}\right)   \\
&+ \bar{\lambda} \left( \frac{\rho_{i+1/2, n+1} - \rho_{i+1/2, n-1}}{2\Delta t} + \frac{\rho_{i+1, n+1/2}w_{i+1, n+1/2} - \rho_{i-1, n+1/2}w_{i-1, n+1/2} }{2\Delta x}   \right).
\end{align*}
For the $0 = \delta \mathcal{S}_d$ to hold, we require the following equations to be satisfied:
{\small
\begin{equation}\label{eq:DELF_2}
\begin{aligned} 
0 &= \frac{\rho_{i, n-1/2}w_{i, n-1/2}}{2} + \frac{\rho_{i, n+1/2}w_{i, n+1/2}}{2} + \frac{\nu^2}{4}\left( \frac{\rho_{i+1, n-1/2} + \rho_{i+1, n+1/2} }{2\Delta x} \right. \\
&\left.- \frac{\rho_{i-1, n-1/2} + \rho_{i-1, n+1/2}}{2\Delta x}    \right) - \frac{1}{4}\left( \frac{\lambda_{i+1, n-1/2}\rho_{i, n-1/2} +\lambda_{i+1, n+1/2}\rho_{i, n+1/2} }{\Delta x} \right. \\
& \left.- \frac{\lambda_{i-1, n-1/2}\rho_{i, n-1/2} +\lambda_{i-1, n+1/2}\rho_{i, n+1/2} }{\Delta x}   \right), \\
0 &= \frac{\rho_{i+1/2, n+1} -\rho_{i+1/2, n-1}}{2\Delta t} + \frac{\rho_{i+1, n-1/2}w_{i+1, n-1/2}  - \rho_{i-1, n-1/2}w_{i-1, n-1/2} }{4\Delta x}  \\
&+ \frac{\rho_{i+1, n+1/2}w_{i+1, n+1/2}  - \rho_{i-1, n+1/2}w_{i-1, n+1/2} }{4\Delta x} , \\
0 &= -\frac{\lambda_{i, n+1} - \lambda_{i, n-1} }{2\Delta t} - \frac{\nu^2}{4}\frac{ w_{i+1, k-1/2} + w_{i+1, k+1/2} - w_{i-1, k-1/2} - w_{i-1, k+1/2} }{\Delta x} \\
&-\frac{(\lambda_{i+1, n-1/2} - \lambda_{i-1, n-1/2}  ) w_{i, n-1/2}   }{4\Delta x}-\frac{(\lambda_{i+1, n+1/2} - \lambda_{i+1, n-1/2}  ) w_{i, n+1/2}   }{4\Delta x} \\
&+\frac{(w_{i, n-1/2})^2}{4} +\frac{(w_{i, n+1/2})^2}{4} -\frac{\nu^4}{8}\left( - \frac{\rho_{i, n-1/2} - \rho_{i-2, k-1/2} }{8 (\rho_{i-1, n-1/2})(\Delta x)^2}    - \frac{\rho_{i, n+1/2} - \rho_{i-2, k+1/2} }{8 (\rho_{i-1, n+1/2})(\Delta x)^2}       \right) \\
&-\frac{\nu^4}{8}\left( - \frac{\rho_{i, n-1/2} - \rho_{i+2, k-1/2} }{8 (\rho_{i+1, n-1/2})(\Delta x)^2}   - \frac{\rho_{i, n+1/2} - \rho_{i+2, k+1/2} }{8 (\rho_{i+1, n+1/2})(\Delta x)^2}     \right) \\
& -\frac{\nu^4}{16}\left( - \frac{1}{8}  \frac{  \left(\rho_{i+1, n-1/2} - \rho_{i-1, n-1/2} \right)^2}{  (\rho_{i, n-1/2})^2 (\Delta x)^2}   - \frac{\rho_{i+1, n-1/2} - \rho_{i-1, n-1/2}}{ \rho_{i, k-1/2} (\Delta x)^2}    \right) \\
&-\frac{\nu^4}{8}\left( \frac{1}{2} \frac{\rho_{i+1, n-1/2} - \rho_{i-1, n-1/2} }{(\rho_{i, n-1/2})(\Delta x)^2}  + \frac{1}{8} \frac{(\rho_{i+1, n-1/2} - \rho_{i-1, n-1/2})^2}{(\rho_{i, n-1/2})^2 (\Delta x)^2}  \right) \\
&-\frac{\nu^4}{16}\left( - \frac{1}{8}  \frac{  \left(\rho_{i+1, n+1/2} - \rho_{i-1, n+1/2} \right)^2}{  (\rho_{i, n+1/2})^2 (\Delta x)^2}   - \frac{\rho_{i+1, n+1/2} - \rho_{i-1, n+1/2}}{ \rho_{i, k+1/2} (\Delta x)^2}    \right) \\
&-\frac{\nu^4}{8}\left( \frac{1}{2} \frac{\rho_{i+1, n+1/2} - \rho_{i-1, n+1/2} }{(\rho_{i, n+1/2})(\Delta x)^2}  + \frac{1}{8} \frac{(\rho_{i+1, n+1/2} - \rho_{i-1, n+1/2})^2}{(\rho_{i, n+1/2})^2 (\Delta x)^2}  \right),
\end{aligned}
\end{equation}
}
for $i = 1..N_x$, $j = 1..N_y$ and $n=1...N_t$.  The discretisation above is multisymplectic, in the sense that it preserves the discrete counterpart of the multisymplectic form formula \eqref{eq:multisymplectic_form_formula}.  To do so, first, we need to compute \emph{the discrete Cartan one-forms} which are the result of the contributions of the exterior derivative $d\mathcal{S}_{d}$ from the boundaries. The discrete Cartan one-forms are computed according to the formula
\begin{equation}\label{eq:discrete_one_forms}
\begin{aligned}
\Theta_{d}^{w,\alpha} &= \frac{\partial \widehat{L}}{\partial (w)_{i + \alpha_1, n+ \alpha_2}}dw_{i + \alpha_1, n+ \alpha_2}, \\
   \Theta_{d}^{\rho,\alpha} &= \frac{\partial \widehat{L}}{\partial \rho_{i + \alpha_1, n+ \alpha_2}}d\rho_{i + \alpha_1, n+ \alpha_2}, \\
    \text{and}\quad  \Theta_{d}^{\lambda,\alpha} &= \frac{\partial \widehat{L}}{\partial \lambda_{i + \alpha_1, n+ \alpha_2}}d\lambda_{i + \alpha_1, n+ \alpha_2},
\end{aligned}
\end{equation}
where $\alpha$ is a multi-index $\alpha = (\alpha_1,\dots , \alpha_{N+1})$. The one-forms in \eqref{eq:discrete_one_forms} are for a single $\Delta_{i,n}$ and for the contribution of the boundaries of the whole action $d\mathcal{S}_d$, we need to sum all these one-forms for all $ \Delta_{i,n}$, $i=1,\dots, N_x$ and $n=1, \dots, N_t$.  As with taking variations of an action functional, as seen in \ref{sec:mul_geo}, the boundary terms contribute to $d\mathcal{S}_{d}$ and when restricted to the vector field $V$ are
\begin{align*}
    \frac{\partial \widehat{L}}{\partial \phi_{i,n}} \cdot (V_i^n) = (j^1(\phi_d))^{\ast} \left(j^1(V) \contract \Theta_d^{z,\alpha} \right),
\end{align*}
and with that, the total contribution of the one-forms to $d\mathcal{S}_{d}$ is 
\begin{align*}
    \mathrm{d}\mathcal{S}_{d} \cdot V = \sum\limits_{\mathbb{B}} \sum\limits_{\alpha} (j^1(\phi_d))^{\ast}\left(j^1(V) \contract \Theta_d^{z,\alpha} \right).
\end{align*}
Taking the exterior derivative of the of $d\mathcal{S}_{d} \cdot V$  and restricting it to vector field $W$ yields $d^2\mathcal{S}_d = 0$ and that tells us the sum is
\begin{align}\label{eq:discrete_multisymplectic_form_formula}
    \sum\limits_{\mathbb{B}} \sum\limits_{\alpha} (j^1(\phi_d))^{\ast} \left(j^1(V) \contract j^1(W) \contract \Omega_d^{z,\alpha} \right)= 0,
\end{align}
where $\Omega_d^{z,\alpha}= -d \Theta_d^{z,\alpha}$.  In fact, \eqref{eq:discrete_multisymplectic_form_formula} is analogous to \eqref{eq:multisymplectic_form_formula} and it is known as \emph{the discrete multisymplectic form formula} \cite{marsden1998multisymplectic}. The importance of this is that when the discrete multisymplectic form formula is satisfied it means the discretisation scheme is multisymplectic. With the multisymplectic integrator at hand, the training problem \eqref{prob:mfg_deep} can now be stated as:

\begin{problem}\label{prob:mfg_deep_discrete}
Let $(Q, \mathcal{F}, \mathbb{P})$ be a probability space and the stochastic control system $\Sigma_{\rho} = (\mathrm{Dens}(Q), \nabla , \mathfrak{X}(Q))$. Let $\{(\rho^{(k)}_{i,j,0}, c^{(k)}) \}$ be a training set with $\rho^{(k)}_{i,j,0} \in \mathrm{Dens}(Q)$ and $c^{(k)} \in \mathcal{C}$ for $k=0, \dots, N$ be . The training of the stochastic ResNet \eqref{eq:resNet_eq} is done by finding a set $( w^1_{i,j,n}, w^2_{i,j,n},\lambda_{i,j,n}, \rho_{i,j,n} ) \in J^1(E)$, such that 
$$
\mathcal{S}_{d,\Sigma_{\rho}}((w^{1})^{\ast}_{i,j,n},(w^{2})^{\ast}_{i,j,n},  \rho^{\ast}_{i,j,n} ) < \mathcal{S}_{d, \Sigma_{\rho}}(w^1_{i,j,n},w^2_{i,j,n},  \rho_{i,j,n} ), \quad \forall (w^1_{i,j,n},w^2_{i,j,n},  \rho_{i,j,n} ) \in J^1(E),
$$
where the cost functional is defined by
\begin{equation}
\begin{aligned}
\mathcal{S}_{d,\Sigma_{\rho}}&= \sum\limits_{(i,j,n) \in \mathbb{B}} \widehat{L} \left( \Delta_{i,j,n} ,  w^{1}_{i,j,n},w^{2}_{i,j,n}, \lambda_{i,j,n}, \rho_{i,j,n} \right) \Delta x^{1}\Delta x^{2} \Delta t \\
&+ \sum\limits_{k=0}^N \Phi( \pi( \rho_{i,j,N_t} ), c^{(k)} ),
\end{aligned}
\end{equation}
where $\widehat{L}$ is the cost Lagrangian defined in \eqref{eq:discrete_lagrangian} and also satisfies the discrete Euler--Lagrange equation \eqref{eq:discrete_ELEQ}.
\end{problem}

\subsection{Applications of multisymplectic integrators}\label{sec:application}
To demonstrate the application of the fluid model for machine learning, here we apply our hydrodynamic formulation and multisymplectic variational integrator to solve machine learning problems: density matching and generating new samples using autoencoder. The direct involvement of probability density function in the hydrodynamic framework for deep learning naturally lends itself to a branch of machine learning known as generative models. The main aim of such models is to learn the joint probability distribution from data and generate new data points, e.g., generate new samples of a dataset. Study of such models has gained a considerable interest in recent years with the introduction of generative adversarial networks (GAN) \cite{goodfellow2014}, normalizing flow based generative model \cite{rezende15,grathwohl_ffjord:_2018, yang2019pointflow,ping_waveflow_2020}, variational autoencoders \cite{an2015variational}, and restricted Boltzmann machines \cite{nair2010rectified} to name a few. Each method implements generative models in a different way. The closest approach to the one presented here is the normalizing flow approach, especially its variant, deep diffeomorphic normalizing flows \cite{salman2018deep}.
\\

\subsubsection{Density matching}\label{sec:density}
The first problem we consider is approximating posterior distributions, which need not to be Gaussian. One solution approach that gained traction was initiated in \cite{rezende15}, where a discrete dynamical system was used to push-forward data sampled from a Gaussian distribution to a target distribution. The push-forward, implemented by a neural network, is applied successively $K$ times.  Here, we begin with the continuous dynamical system
\begin{align}
(\dot{x}^1, \dot{x}^2) = \left( w^1(x^1,x^2, t), w^2(x^1,x^2, t) \right),
\end{align}
where $(x^1,x^2)$ are data samples, $(w^1,w^2): Q \times \mathbb{R} \to TQ$ is the vector field whose flow, $F_{(w^1,w^2)} $, is the transformation we are aiming to find. In other words, the flow of $(w^1,w^2)$ maps the data $(x^1,x^2) \in Q$ to  $(z^1, z^2) \in \widetilde{Q}$
\begin{align}
(z^1, z^2) = \left( F_{(w^1,w^2)}\right)_{\ast}((x^1,x^2)),
\end{align} 
where both $Q, \widetilde{Q} \subset \mathbb{R}^2$ are smooth manifolds. There are many variations on the theme of normalizing flows, where $(w^1,w^2)$ can be fixed to be a specific type, for example, it can be a constant function, whose flow is a linear function and this type is known as the planar flow,  as in the case of the original paper \cite{rezende15}. Another is to impose $(w^1,w^2)$ a diffeomorphism, namely a deep diffeomorphic normalizing flows in \cite{salman2018deep}. Another interesting choice of $(w^1,w^2)$ is choosing it as a potential function, which was introduced by \cite{zhang2018monge}, thus the flow also solves of the Monge--Amp\`{e}re equation. The latter is connected to the approach of \cite{benamou2000computational}, where they showed that Monge--Amp\`{e}re is also the solution of optimal transport problem \cite{benamou2000computational}. The main difference between the approach here and the aforementioned ones, is that here the flow $F_{(w^1,w^2)} $ is computed directly on a fixed Eulerian grid, whereas in \cite{rezende15, zhang2018monge} deep neural networks have been employed to approximate flow using Lagrangian coordinates.
\\

The idea behind finding $F_{(w^1,w^2)} $ is to enable us to learn the joint probability density function from sampled data and once the transformation is obtained, new data samples can be generated by, first, generating samples from distributions on $\mathrm{Dens}(\widetilde{Q})$ and then transforming them to new samples from  $\mathrm{Dens}({Q})$. This is the idea of normalizing flows and it helps in implementing generative models.  In the probability density formulation, the problem becomes what is known as the \emph{Monge--Kantorovich optimal transport problem}. Given two density functions $\rho_0 \geq 0$ and $\rho_T \geq 0$, and optimal transport concerns finding a smooth map, $\eta$, such that $(\eta)_{\ast} \rho_t = \rho_0$, and at the same time the map $g$ minimises the distance between the two prescribed densities. The distance in this case is the $p$-Wasserstein distance and it is defined by
\begin{align}
W_p(\rho_0, \rho_T) = \inf\limits_{g} \int | g(x^1,x^2) - (x^1,x^2) |^p \rho_0(x^1,x^2) dx^1dx^2, 
\end{align}
where $| \cdot |^p$ is the $L^p$ norm on $\mathbb{R}^d$. In \cite{benamou2000computational}, it was shown that this problem is equivalent to the following fluid dynamics problem
\begin{equation}
\begin{aligned}
\mathcal{S} &= \int_0^T \int_{\mathbb{R}^d} \rho(t, x^1,x^2) | (w^1, w^2)|^2  + \lambda \left(\partial_t \rho + \nabla \cdot(\rho (w^1, w^2))  \right) dx^1dx^2\, dt \\
&+ \int_0^T \Pi \left(\int \rho(t,x^1,x^2) dx^1dx^2 - 1\right),
\end{aligned}
\end{equation}
such that $\rho(0,x^1,x^2) = \rho_0(x^1,x^2)$ and $\rho(T, x^1,x^2) = \rho_T(x^1,x^2)$. The term $\Pi$ is the Lagrange multiplier that enforces the incompressibility condition, and in hydrodynamics terms it acts as the pressure force. The optimality condition gives $(w^1, w^2) = \nabla \lambda$ and the Lagrange multiplier satisfies the Hamilton--Jacobi equation. Taking gradient of the Hamilton--Jacobi equation gives an equation for $u$ and it is \emph{incompressible Euler's fluid equation}
\begin{align}
\partial_t (w^1, w^2) + \left( (w^1, w^2) \cdot \nabla \right) (w^1, w^2) = - \nabla \Pi, \quad \nabla \cdot (w^1, w^2) = 0.
\end{align}   
As pointed out in remark \ref{rem:monge_ampere}, the system \ref{prob:mfg_deep}, when $\sigma \to 0$ we recover the optimal transport of Benamou and Brenier \cite{benamou2000computational} and it can be viewed as its regularisation. Given the initial probability density $\rho_0(x^1,x^2) = \mathcal{N}(0, \mathbf{I}) \in \mathrm{Dens} \left(\mathbb{R}^2\right)$ and the target density $\rho_T = e^{-U(x^1,x^2)}$. To test our method, $U(x^1,x^2)$ is chosen as  
\begin{equation}\label{eq:density_cases}
 U(x^1,x^2) = \begin{cases}
    \frac{1}{2} \left( \frac{\| x^1 \| - 2}{0.4}\right)^2 - \mathrm{ln}\left( e^{-\frac{1}{2} \left(\frac{x^1 - 2}{0.4} \right)^2}  + e^{-\frac{1}{2} \left(\frac{x^1 + 2}{0.6} \right)^2} \right),   \\
    \frac{1}{2} \left( \frac{x^2 - \mathrm{sin}\left( \frac{2 \pi x^1}{4}\right)}{0.4} \right)^2,  \\
    - \mathrm{ln}\left( e^{-\frac{1}{2} \left(\frac{x^2 - \mathrm{sin}\left( \frac{2 \pi x^1}{4}\right)}{0.35} \right)^2}  + e^{-\frac{1}{2} \left(\frac{x^2 - \mathrm{sin}\left( \frac{2 \pi x^1}{4}\right) + 3e^{-\frac{1}{2} \left( \frac{(x^1 - 1)}{0.6}\right)^2}}{0.35} \right)^2} \right),   \\
    - \mathrm{ln}\left( e^{-\frac{1}{2} \left(\frac{x^2 - \mathrm{sin}\left( \frac{2 \pi x^1}{4}\right)}{0.4} \right)^2}  + e^{-\frac{1}{2} \left(\frac{x^2 - \mathrm{sin}\left( \frac{2 \pi x^1}{4}\right)+ 3\left( 1 + e^{-\frac{x^1 - 1}{0.3}}\right)^{-1}}{0.35} \right)^2} \right),  
  \end{cases}
\end{equation}
The aim is to find a vector field $\theta$ that would steer $\rho$ form $\rho_0$ to $\rho_T$ while minimising the cost functional 
\begin{align}
\mathcal{S} =  \Phi( \rho(x^1,x^2,T), \rho_T(x^1,x^2) )  + \frac{1}{2}\int_0^T \int_{\mathbb{R}^2}  \rho(x^1,x^2,t) \|\theta(x^1,x^2,t) \|^2 \, dx^1 dx^2 dt.
\end{align}

This test was introduced in \cite{rezende15} and since then it has become the de facto benchmark. Here, density matching is done using $\Sigma_{\rho}$ control system and the training optimal control problem is solved using a variational integrator described in section \ref{sec:var_int}. For this particular problem, centred difference discretisation, whose Lagrangian is given by \eqref{eq:discrete_lagrangian}, is used. The end time $T$ was chosen to be $1$ and $16$ discretisation points in space are used. For $x^1$ and $x^2$-spaces, have $32$ grid points for each space. As the optimality condition is a system of algebraic equations \eqref{eq:discrete_ELEQ},it is solved using a newton-type solver at each time step. The result is shown in figure \ref{fig:results}.
\begin{figure}[t]
\includegraphics[width=0.75\textwidth]{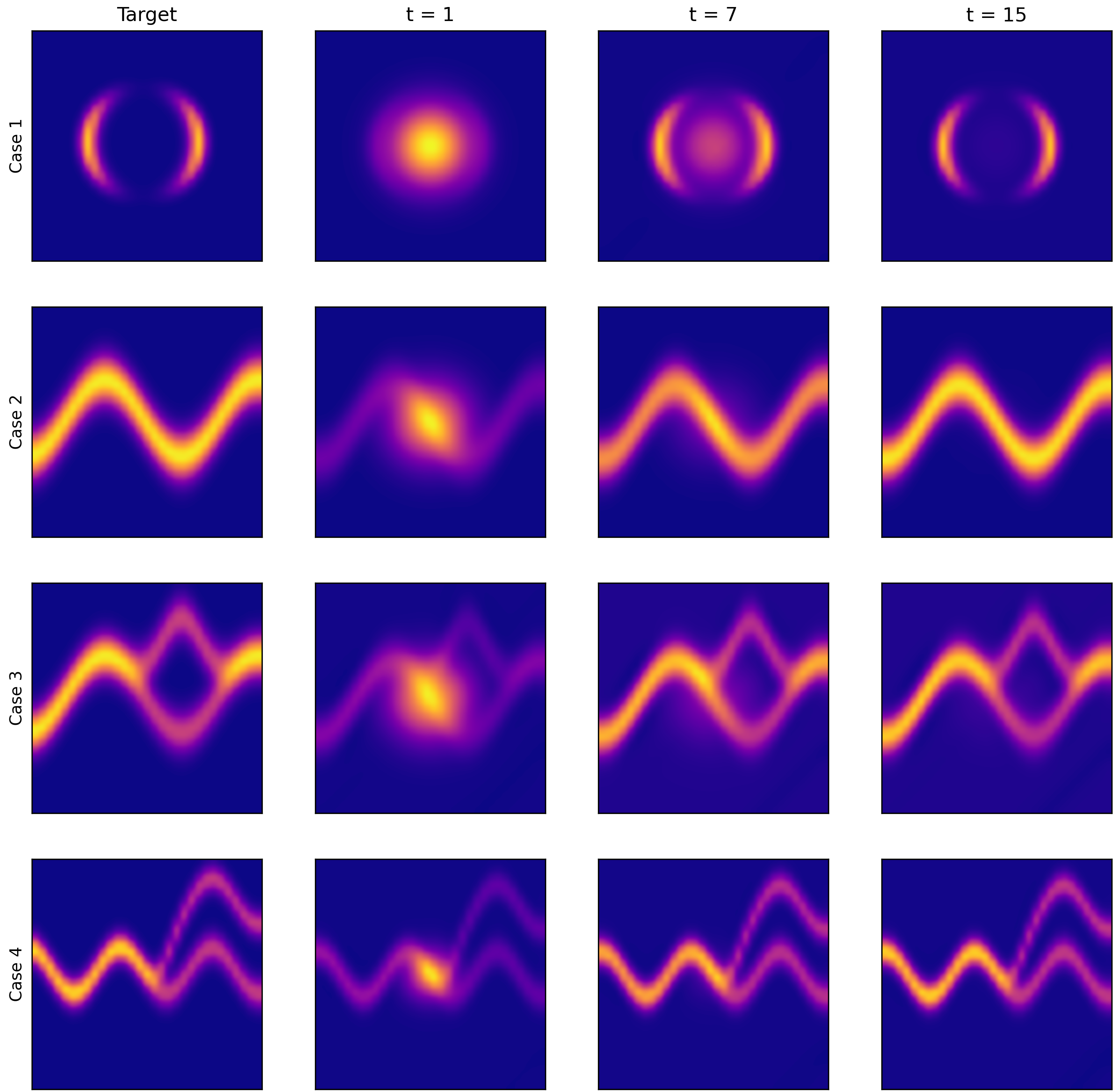}
\centering
\caption{ The problem \ref{prob:mfg_deep_discrete} is solved numerically, with potentials in \eqref{eq:density_cases} are chosen for target densities $\rho_T$ and in each case, the initial density is specified as a Gaussian normal distribution whose mean is zero and its covariance matrix is the identity matrix.  Each row corresponds to a density in \eqref{eq:density_cases}. The left most column is a contour plot of the target density. The second column from the left represents the numerical solution of problem \ref{prob:mfg_deep_discrete} after one time step. The third column from the left is the solution after seven time iterations. Finally, the rightmost column is the solution at the final iteration. }
 \label{fig:results}
\end{figure}

\subsubsection{Generative model based on multisymplectic integrators}\label{sec:autoencoder}
The concept of autoencoders is an interesting one, which are artificial neural networks \cite{kramer_nonlinear_1991, Goodfellow-et-al-2016}. The training for such networks is done in an unsupervised manner by performing input data mimicry, that is learn coded representation, in terms of latent state/features, of the data and from it reconstruct the inputs.  Let $\pi_{Q\mathrm{Z}}:Q \to \mathrm{Z}$ be a fibre bundle and we also define the map $\Gamma_{\mathrm{Z}Q}: \mathrm{Z} \to Q$. Here, an autoencoder consists of an encoder $\pi_{Q\mathrm{Z}}$ and a decoder $\Gamma_{\mathrm{Z}Q}$, where the base manifold $\mathrm{Z}$ is the space of latent state/features. This renders autoencoders suitable for data compression and dimensionality reduction \cite{hinton_reducing_2006} applications. 
\\

Looking at autoencoders from a probabilistic view point, where the latent states are sampled from a probability distribution, it allows us to use them as a generative model. Such models learn the representation of the dataset and allow the generation of new data. Such a class of autoencoders is known as \emph{variational autoencoders} \cite{kingma_auto-encoding_2014}. To generate new data, a random variable is sampled from the distribution $\rho_0 \in \mathrm{Dens}(\mathsf{Z})$ and then feeding into the decoder, $\Gamma_{\mathrm{Z}Q}(z)$, to obtain the reconstructed data. Variational autoencoders are often trained such that latent variables $z$ are sampled from $\mathrm{Dens}(\mathsf{Z})$ would have either a normal or uniform probability distribution to simplify sampling from a distribution. This is done by introducing Kullback-–Leibler regularising term to the cost function.  The downside of forcing the distribution of the parameter space to be an elementary one renders reconstruction less accurate.  In recent years, there has been an interest in the using optimal transport in conjunction with autoencoders to develop an architecture as a substitute or a complement to generative adversarial networks \cite{tanaka_discriminator_2019,salimans_improving_2018,sim_optimal_2020,luise_generalization_2020}. For example, in  \cite{bousquet_optimal_2017} the problem of generative modelling was cast as an optimal transport problem and in \cite{pmlr-v84-genevay18a} the measure and cost function of optimal transport, solved using Sinkhorn algorithm, are approximated by neural networks. Whereas in \cite{yang_predicting_2020}, single-cell images are mapped to a latent space and then the cell trajectories are predicted using optimal transport.  The use of normalizing flows as a generative model is has been studied in works such as \cite{10.5555/3524938.3525652, NEURIPS2018_d139db6a} in what is known as \emph{flow-based generative models}, however such models do not incorporate an autoencoder structure. The use of autoencoder with normalizing flows, replacing optimal transport,  is used for the sampling step generative models and this  have been studied in \cite{ jawahar_improving_2022,morrow2020variational}.
\\

Inspired by \cite{an_ae-ot_2020} and flow-based generative models, the idea is to not to force the variational autoencoder to have the distribution of the latent states as an elementary distribution, instead train the autoencoder to solely perform dimensionality reduction.  Here, a control system $\Sigma_{\rho}$ is used to map $\rho_0 \in \mathrm{Dens}(\mathsf{Z})$, where it is the probability density function of the latent states,  to a normal distribution, $\rho_e \simeq \mathcal{N}(0, \mathbf{I})$. In \cite{an_ae-ot_2020} that was done by solving a Monge--Kantorovich problem. Random variables are sampled from a normal distribution and then pushed-forward by the flow of solution of the Monge--Kantorovich  and finally they are fed into the decoder. However, solving optimal transport problems is challenging and instead we use the multisymplectic variational integrator for that. The density of the space of latent states $\rho_0 \in \mathrm{Dens}(\mathsf{Z})$ is estimated using the presented method here. In the second step of the reconstruction, a latent variable $z$ is sampled from a normal distribution and then transported using diffeomorphisms to a random variable $\widetilde{z}$ whose probability distribution has the density $\rho_0$. The diffeomorphism here is an integral curve of the vector field $u \in \mathfrak{X}(Q)$ and it is learned by solving problem \ref{prob:mfg_deep} using a multisymplectic integrator. Once training is over,  new data can be generated by sampling from $\mathcal{N}(0, \mathbb{I})$ and then mapping it to $\mathsf{Z}$. The transformed data is then fed into the decoder, and the result of one experiment can be seen in figure \ref{fig:results_auto}. The construction presented here can be viewed as an instance of pullback bundle. 
\begin{figure}[t] 
\includegraphics[width=0.6\textwidth]{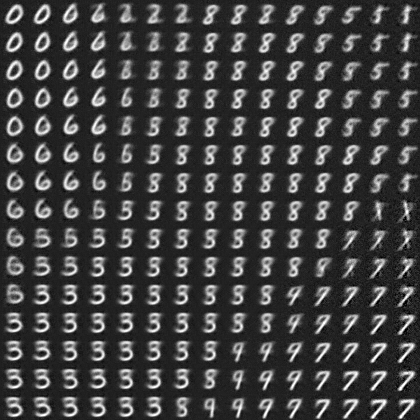}
\centering
\caption{ New handwritten digits are generated using the autoencoder with a continuous ResNet trained using multisymplectic optimal control.  Here, an autoencoder is trained to reconstruct images from MNIST dataset \cite{lecun2010mnist}, where $Q$ is $\mathbb{R}^{28 \times 28}$ and each element is an image of a handwritten digit. The space of latent variable is chosen as $\mathsf{Z} \in \mathbb{R}^2$. A set of points that form a square lattice defined on $\mathbb{R}^2$ are mapped to $\mathsf{Z}$ using the flow of the deep neural network \eqref{eq:resNet_eq}, whose parameters, $\theta = u$, are obtained by solving problem \ref{prob:mfg_deep_discrete}. The points in the transformed lattice are then mapped to $Q$ using the decoder.  }
\label{fig:results_auto}
\end{figure}

\section{Backward error analysis of multisymplectic integrator}\label{sec:bea}
Different numerical schemes can be applied to solve the optimal control problem for training, but as mentioned, schemes based on multisymplectic geometry are chosen in order to obtain an approximation that is closely related to the exact system.  In numerical analysis, there are mainly two ways of analysing the qualitative performance of the scheme; forward analysis, and backward error analysis. In forward error analysis, the solution being a curve is compared to a known integral curve of the system. Many error measurements in machine learning are based on this and it requires knowing the solution or availability of the dataset. Backward error analysis is concerned computing a differential equation whose exact solution is the numerical solution and this equation is known as \emph{modified equation} and it offers a paradigm to conduct a study that enables direct comparison between a continuous multisymplectic system and its discrete counterpart \cite{reich_backward_1999}.
\\

One of the main favourable properties of multisymplectic integrators is that their modified counterparts are also multisymplectic, as demonstrated in \cite{moore_backward_2003, islas_backward_2005}. Once the modified equation is known, we obtain time evolution of the network that explicitly includes parameters such as the number of hidden layers and nodes per layer. This is valuable, as it provides a way to study the qualitative behaviour of the solution in response to the change in the network's parameters. 
\\

In order to compute the modified equations, the solution $ \phi = (w^1, w^2, \lambda, \rho)  \in J^1(E)$ is assumed to be a $4$-tuple of smooth functions. As the multisymplectic integrator uses points in $J^1(\mathbb{E})$ to approximate the solution, their Taylor expansion  around the centre of $\Delta_{i,j}^n$, $\bar{x}$, on the base manifold $Q$ allows to move from discrete points on $\mathbb{E}$ to a section on $\mathbb{E}$. To illustrate that, the Taylor expansion of $\phi_{i+1/2, j+1/2,n+1} $ and $\phi_{i+1/2, j+1/2,n}$ yields:
\begin{equation}
\begin{aligned}
\phi_{i+1/2, j+1/2,n+1} &= \phi(\bar{x}) + \frac{(\Delta t)}{2} \partial_t  \phi(\bar{x})  + \frac{(\Delta t)^2}{8} \partial^2_t\phi(\bar{x})  + \frac{(\Delta t)^3}{48}\partial_{ttt}\phi(\bar{x}) + \mathcal{O}((\Delta t) ^4), \nonumber \\ 
\phi_{i+1/2, j+1/2,n} &= \phi(\bar{x}) - \frac{(\Delta t)}{2} \partial_t  \phi(\bar{x})  + \frac{(\Delta t)^2}{8} \partial^2_t\phi(\bar{x})  - \frac{(\Delta t)^3}{48}\partial_{ttt}\phi(\bar{x}) + \mathcal{O}((\Delta t) ^4). \\  
\end{aligned}
\end{equation}
In addition, the approximation of the section over the $\bar{x}$, which is 
\begin{equation}
\begin{aligned}
\bar{\phi} &= \frac{1}{8} \left( \phi_{i+1, j+1,n+1} +  \phi_{i, j+1,n+1}  +  \phi_{i+1, j,n+1}   +  \phi_{i, j,n+1} \right. \\
&\left. + \phi_{i+1, j+1,n} +  \phi_{i, j+1,n}  +  \phi_{i+1, j,n}   +  \phi_{i, j,n} \right) \\
&= \phi(\bar{x}) + \frac{1}{8} \left( (\Delta t)^2\partial^2_{t} \phi +  (\Delta x^1)^2\partial^2_{x^1} \phi + (\Delta x^2)^2\partial^2_{x^2} \phi \right) \\
&+ \mathcal{O} \left( |(\Delta t)^3|+  |(\Delta x^1)^3| + |(\Delta x^2)^3| \right).
\end{aligned}
\end{equation}
The time discretisation using scheme in \eqref{eq:discrete_lagrangian} is 
\begin{equation}
\begin{aligned}
    \frac{\phi_{i+1/2,j+1/2,n + 1} - \phi_{i+1/2,j+1/2,n}}{(\Delta t)} &= \partial_t \phi  + \frac{(\Delta t)^2}{24} \partial_{ttt} \phi+  \mathcal{O}((\Delta t)^4).  
\end{aligned}
\end{equation}
Likewise, the Taylor expansion of spatial discretisations are 
\begin{equation}\label{eq:midpoint_space}
\begin{aligned}
    \frac{\phi_{i+1,j+1/2,n +1/2} - \phi_{i,j+1/2,n+1/2}}{(\Delta x^1)} &= \partial_{x^1} \phi  + \frac{(\Delta x^1)^2}{24} \partial_{x^1x^1x^1} \phi+  \mathcal{O}((\Delta x^1)^4), \\
    \frac{\phi_{i+1/2,j+1,n+1/2 } - \phi_{i+1/2,j,n+1/2}}{(\Delta x^2)} &= \partial_{x^2} \phi + \frac{(\Delta x^2)^2}{24} \partial_{x^2x^2x^2} \phi+  \mathcal{O}((\Delta x^2)^4).
\end{aligned}
\end{equation}
Substituting the Taylor expansion of the finite difference relations into the stationary variations of the action whose Lagrangian is give by \eqref{eq:discrete_lagrangian} and rearranging the terms of the discretised Fokker--Planck equation, the system of algebraic equations are then become the following system of partial differential equations:
\begin{equation}\label{eq:modified_system}
{\scriptscriptstyle
\begin{aligned}
\delta \lambda &: \quad 0 = \partial_t \rho    + \partial_{x^1} (\rho \phi( w^1)) +\partial_{x^2} (\rho \phi( w^2)) +\frac{(\Delta x^1)^2}{24} \partial_{x^1x^1x^1} (\rho \phi( w^1))    \\
&+\frac{(\Delta x^2)^2}{24} \partial_{x^2x^2x^2} (\rho \phi( w^2))-\frac{(\Delta t)^2}{24}\partial_{ttx^1} (\rho \phi( w^1)) -\frac{(\Delta t)^2}{24}\partial_{ttx^2} (\rho \phi( w^2)), \\
\delta \rho  &: \quad 0 = \partial_t \lambda  +  \phi(w^1)\partial_{x^1} \lambda + \phi(w^2)\partial_{x^2} \lambda  +\frac{(\Delta x^1)^2}{24} \phi(w^1)\partial_{x^1x^1x^1} \lambda   \\
&+\frac{(\Delta x^2)^2}{24} \phi(w^2)\partial_{x^2x^2x^2} \lambda  -\frac{(\Delta x^1)^2}{24} \phi(w^1)\partial_{x^1tt} \lambda  -\frac{(\Delta x^1)^2}{24} \phi(w^2)\partial_{x^2tt} \lambda - \frac{\delta \ell_m}{\delta \rho}, \\
\delta w^1 &: \quad 0 = \frac{\delta \ell_m}{\delta w^1} -\rho \partial_{x^1} \lambda  -\frac{(\Delta x^1)^2}{24}  \rho \partial_{x^1x^1x^1} \lambda +\frac{(\Delta t)^2}{24}  \rho \partial_{x^1tt} \lambda,  \\
\delta w^2 &: \quad 0 = \frac{\delta \ell_m}{\delta w^2} -\rho \partial_{x^2} \lambda   -\frac{(\Delta x^2)^2}{24}  \rho \partial_{x^2x^2x^2} \lambda +\frac{(\Delta t)^2}{24}  \rho \partial_{x^2tt} \lambda.
\end{aligned} }
\end{equation}   
The above system of partial differential equations satisfies the stationary variations of the the action functional 
\begin{align}
\mathcal{S} = \int_0^t \int_{Q} L_m \, d^nx\, dt,
\end{align}
i.e.  $0 = \delta \mathcal{S}$, and with a modified Lagrangian given by
\begin{equation}\label{eq:modified_lagrangian}
\begin{aligned}
L_m &= L + \frac{(\Delta x^1)^2}{24} L_1 +  \frac{(\Delta x^2)^2}{24} L_2 +  \frac{(\Delta t)^2}{24} L_3,
\end{aligned}
\end{equation}
where the terms of the Lagrangian are given by
{\footnotesize
\begin{align*}
L &= + \frac{1}{2} \rho (w^1)^2 + \frac{1}{2} \rho (w^2)^2 + \frac{\nu^{2} }{2}w^1 \partial_{x^1} \rho + \frac{\nu^{2} }{2}w^2 \partial_{x^2} \rho    +\frac{\nu^4 \left(\partial_{x^1} \rho\right)^{2}}{8\rho +  (\Delta x^1)^2 \partial_{x^1x^1} \rho + (\Delta x^2)^2 \partial_{x^2x^2} \rho + (\Delta t)^2  \partial_{tt} \rho}\\
& + \frac{\nu^4 \left(\partial_{x^2} \rho\right)^{2}}{8\rho +  (\Delta x^1)^2 \partial_{x^1x^1} \rho + (\Delta x^2)^2 \partial_{x^2x^2} \rho + (\Delta t)^2  \partial_{tt} \rho } + \lambda \left(\partial_{t} \rho  +  \partial_{x^1}  (\rho (w^1))+   \partial_{x^2}  (\rho (w^2)) \right),  
\end{align*}
}
{\footnotesize
\begin{align*}
L_1 &= \frac{(\Delta x^1)^2}{4} \rho w^1 \partial_{x^1x^1} w^1 + \frac{(\Delta x^1)^2}{4} \rho w^2 \partial_{x^1x^1} w^2 + \frac{(\Delta x^1)^2}{8} \rho \left(\partial_{x^1} w^1\right)^{2}+ \frac{(\Delta x^1)^2}{8} \rho \left(\partial_{x^1} w^2\right)^{2}  \\
& + \frac{\nu^2 (\Delta x^1)^2}{48} w^1 \partial_{x^1x^1x^1} \rho + \frac{\nu^2 (\Delta x^1)^2}{16} (\partial_{x^1} \rho) \partial_{x^1x^1} w^1  + \frac{\nu^2 (\Delta x^1)^2}{16} (\partial_{x^2} \rho) \partial_{x^1x^1} w^2 + \lambda \left(  \partial_{x^1x^1x^1} (w^1 \rho)  \right) 
,  
\end{align*}
}
{\footnotesize
\begin{align*}
L_2 &= \frac{(\Delta x^2)^2}{4} \rho w^1 \partial_{x^2x^2} w^1 + \frac{(\Delta x^2)^2}{4} \rho w^2 \partial_{x^2x^2} w^2 + \frac{(\Delta x^2)^2}{8} \rho \left(\partial_{x^2} w^1\right)^{2} + \frac{(\Delta x^2)^2}{8} \rho \left(\partial_{x^2} w^2\right)^{2} \\
&+ \frac{\nu^2 (\Delta x^2)^2}{48} w^2 \partial_{x^2x^2x^2} \rho + \frac{\nu^2 (\Delta x^2)^2}{16} (\partial_{x^1} \rho) \partial_{x^2x^2} w^1 + \frac{\nu^2 (\Delta x^2)^2}{16} (\partial_{x^2} \rho) \partial_{x^2x^2} w^2 + \lambda \left(   \partial_{x^2x^2x^2}( \rho (w^2) )  \right), 
\end{align*}
}
{\footnotesize
\begin{align*} 
L_3 &=  \frac{(\Delta t)^2}{4} \rho w^1 \partial_{tt} w^1 + \frac{(\Delta t)^2}{4} \rho w^2 \partial_{tt} w^2 + \frac{(\Delta t)^2}{8} \rho \left(\partial_{t} w^1\right)^{2} + \frac{(\Delta t)^2}{8} \rho \left(\partial_{t} w^2\right)^{2}- \frac{\nu^2 (\Delta t)^2}{48} w^1 \partial_{x^1tt} \rho \\
&  - \frac{\nu^2 (\Delta t)^2}{48} w^2 \partial_{x^2tt} \rho  + \frac{\nu^2 (\Delta t)^2}{16} (\partial_{x^1} \rho) \partial_{tt} w^1   + \frac{\nu^2 (\Delta t)^2}{16} (\partial_{x^2} \rho) \partial_{tt} w^2 - \lambda \left( \partial_{x^1tt} (\rho (w^1)) + \partial_{x^2tt} (\rho (w^2))\right)  
.
\end{align*}
}

For the modified equation, the cost Lagrangian $\ell_m$, that appears in \eqref{eq:modified_system}, is 
\begin{equation}
{\small
\begin{aligned}
\ell_m &=+ \frac{1}{2} \rho (w^1)^2 + \frac{1}{2} \rho (w^2)^2 + \frac{\nu^2 w^1 \partial_{x^1} \rho}{2} + \frac{\nu^2 w^2 \partial_{x^2} \rho}{2} \\
&+ \frac{\nu^{4} \left(\partial_{x^1} \rho\right)^{2}}{8\rho +  (\Delta x^1)^2 \partial_{x^1x^1} \rho + (\Delta x^2)^2 \partial_{x^2x^2} \rho + (\Delta t)^2  \partial_{tt} \rho}  + \frac{(\Delta x^1)^2}{4} \rho w^1 \partial_{x^1x^1} w^1\\
&+ \frac{\nu^{4} \left(\partial_{x^2} \rho\right)^{2}}{8\rho +  (\Delta x^1)^2 \partial_{x^1x^1} \rho + (\Delta x^2)^2 \partial_{x^2x^2} \rho + (\Delta t)^2  \partial_{tt} \rho} + \frac{(\Delta x^1)^2}{4} \rho w^2 \partial_{x^1x^1} w^2\\
 & + \frac{(\Delta x^1)^2}{8} \rho \left(\partial_{x^1} w^1\right)^{2} + \frac{(\Delta x^1)^2}{8} \rho \left(\partial_{x^1} w^2\right)^{2}+ \frac{\nu^2 (\Delta x^1)^2}{48} w^1 \partial_{x^1x^1x^1} \rho \\
&+ \frac{\nu^2 (\Delta x^1)^2}{16} (\partial_{x^1} \rho) \partial_{x^1x^1} w^1  + \frac{\nu^2 (\Delta x^1)^2}{16} (\partial_{x^2} \rho) \partial_{x^1x^1} w^2 + \frac{(\Delta x^2)^2}{4} \rho w^1 \partial_{x^2x^2} w^1  \\
& + \frac{(\Delta x^2)^2}{4} \rho w^2 \partial_{x^2x^2} w^2+ \frac{(\Delta x^2)^2}{8} \rho \left(\partial_{x^2} w^1\right)^{2}+ \frac{(\Delta x^2)^2}{8} \rho \left(\partial_{x^2} w^2\right)^{2}   \\
&+ \frac{\nu^2 (\Delta x^2)^2}{48} w^2 \partial_{x^2x^2x^2} \rho + \frac{\nu^2 (\Delta x^2)^2}{16} (\partial_{x^1} \rho) \partial_{x^2x^2} w^1+ \frac{\nu^2 (\Delta x^2)^2}{16} (\partial_{x^2} \rho) \partial_{x^2x^2} w^2 \\
&+ \frac{(\Delta t)^2}{4} \rho w^1 \partial_{tt} w^1 + \frac{(\Delta t)^2}{4} \rho w^2 \partial_{tt} w^2 + \frac{(\Delta t)^2}{8} \rho \left(\partial_{t} w^1\right)^{2} \\
&+ \frac{(\Delta t)^2}{8} \rho \left(\partial_{t} w^2\right)^{2} - \frac{\nu^2 (\Delta t)^2}{48} w^1 \partial_{x^1tt} \rho - \frac{\nu^2 (\Delta t)^2}{48} w^2 \partial_{x^2tt} \rho  + \frac{\nu^2 (\Delta t)^2}{16} (\partial_{x^1} \rho) \partial_{tt} w^1 \\
&+ \frac{\nu^2 (\Delta t)^2}{16} (\partial_{x^2} \rho) \partial_{tt} w^2 \\
\end{aligned}
}
\end{equation}
Unlike the problem \ref{prob:mfg_deep}, the modified system belongs to the third jet prolongation of the Pontryagin's bundle $E := T^{\ast} \mathrm{Dens}(Q) \times \mathfrak{X}(Q)$, i.e $j^3(E)$ which is a manifold with local coordinates
\begin{align*}
j^3(z) &= \left(  x^1,x^2,t , z, z_{x^1x^1}, z_{x^1x^2}, z_t, z_{x^2x^1}, z_{tt},  z_{x^2x^2}, z_{x^1t}, z_{tx^1},  z_{x^2x^1},  z_{x^1x^2},  z_{x^2t},  z_{tx^2}, \right. \\
&\left. z_{x^1x^1x^1},  z_{x^1x^1x^2}, z_{x^1x^1t}, z_{x^1x^2x^1}, z_{x^1x^2x^2}, z_{x^1x^2t},z_{x^1tx^1}, z_{x^1tx^2}, z_{x^1tt}, z_{x^2x^1x^1}, z_{x^2x^1x^2}, \right. \\
&\left. z_{x^2x^1t}, z_{x^2x^2x^1}, z_{x^2x^2x^2}, z_{x^2x^2t}, z_{x^2tx^1}, z_{x^2tx^2}, z_{x^2tt}, z_{tx^1x^1}, z_{tx^1x^2}, z_{tx^1t},  z_{tx^2x^1},  z_{tx^2x^2},\right. \\
&\left.  z_{tx^2t},  z_{ttx^1}, z_{ttx^2}, z_{ttt}   \right),
\end{align*}
where the section $z(x^1,t)$ is a shorthand notation for the tuple
\begin{align*}
(w^1(x^1,x^2,t), w^2(x^1,x^2,t), \lambda(x^1,x^2,t), \rho(x^1,x^2,t)).
\end{align*}
With that the modified Lagrangian density $\widetilde{\mathcal{L}}: j^3(z) \to \Lambda^{n+1}(E)$ described by
\begin{align*}
\widetilde{\mathcal{L}}\left( j^3(z) \right) =  L_m \, dx^1 \wedge dx^2 \wedge dt,
\end{align*}
where $L_m$ is given by \eqref{eq:modified_lagrangian}. 
\\

The introduction of higher order terms in the modified Lagrangian suggests that the solution of the modified equations is a section of the third jet bundle $J^3(E)$. The construction of a third jet bundle whose sections are third jet prolongation of a section $\phi: B \to E$ and they are done by iterating the first jet prolongation $j^1(j^1(j^1(\phi)))$. In coordinates, the third jet bundle $j^3(\phi)$ is given by
\begin{align*}
x^{\mu} \mapsto \left(x^{\mu}, \phi^A, \frac{\partial \phi^A}{\partial x^{\mu} }, \frac{\partial^2 \phi^A}{\partial x^{\mu_1}\partial x^{\mu_2} } , \frac{\partial^3 \phi^A }{\partial x^{\mu_1}\partial x^{\mu_2} \partial x^{\mu_3} } \right).
\end{align*}
With that, we consider a modified Lagrangian density $\mathcal{L}_m: J^3(E) \to \Lambda^{n+1}(B)$, and in coordinates it is written as
\begin{align*}
\mathcal{L}_m = L_m\left(x^{\mu}, \phi^A, \frac{\partial \phi^A}{\partial x^{\mu} }, \frac{\partial^2 \phi^A}{\partial x^{\mu_1}\partial x^{\mu_2} } , \frac{\partial^3 \phi^A }{\partial x^{\mu_1}\partial x^{\mu_2} \partial x^{\mu_3} } \right) dx^{n+1}.
\end{align*}
One quantity that changes with the jet prolongation is the Cartan form on $\Theta_{\mathcal{L}_m}$, which now given by
\begin{equation}
\begin{aligned}
\Theta_{\mathcal{L}_m} &= \left( \frac{\partial L}{\partial \phi^A_{\mu_1} } - D_{\mu_2}\left(\frac{\partial L}{\partial \phi^A_{\mu_1, \mu_2} }  \right)  + D_{\mu_2}D_{\mu_3} \left( \frac{\partial L}{\partial \phi^A_{\mu_1, \mu_2 \mu_3} } \right) \right)d\phi^A \wedge (\frac{\partial}{\partial \mu_1} \contract dx^{n+1})  \\
&+ \left(\frac{\partial L}{\partial \phi^A_{\mu_1, \mu_2} }  - D_{\mu_3}\left( \frac{\partial L}{\partial \phi^A_{\mu_1, \mu_2, \mu_3} } \right) 
\right)d\phi_{\mu_1}^A \wedge \left(\frac{\partial}{\partial \mu_2} \contract dx^{n+1} \right)  \\
&+  \frac{\partial L}{\partial \phi^A_{\mu_1, \mu_2, \mu_3} } d\phi_{\mu_1, \mu_2}^A \wedge \left(\frac{\partial}{\partial \mu_3} \contract dx^{n+1} \right) + \left( L - \frac{\partial L}{\partial \phi^A_{\mu_1} }\phi^A_{\mu_1}  \right. \\
&\left.  + D_{\mu_2}\left(\frac{\partial L}{\partial \phi^A_{\mu_1, \mu_2} }\right) \phi^A_{\mu_1} - D_{\mu_2}D_{\mu_3} \left( \frac{\partial L}{\partial \phi^A_{\mu_1, \mu_2 \mu_3} }\right)\phi^A_{\mu_1}  -   \frac{\partial L}{\partial \phi^A_{\mu_1, \mu_2} } \phi^A_{\mu_1, \mu_2} \right. \\
 &\left. + D_{\mu_3}\left( \frac{\partial L}{\partial \phi^A_{\mu_1, \mu_2, \mu_3} } \right)\phi^A_{\mu_1, \mu_2}  -   \frac{\partial L}{\partial \phi^A_{\mu_1, \mu_2, \mu_3} }  \phi^A_{\mu_1, \mu_2, \mu_3}  \right)dx^{n+1}.
\end{aligned}
\end{equation}
Here, $D_{\mu}$  is the directional derivative of a function $L \in C^{\infty}(J^k(E))$ in the direction $x^{\mu}$ and it is given by
\begin{align}
D_{\mu} \phi = \partial_{\mu} \phi + \frac{\partial L}{\partial \phi_{\mu_1}^A}\phi_{\mu}^A + \dots +  \frac{\partial L}{\partial \phi_{\mu_1, \dots, \mu_k}^A}\phi_{\mu_1, \dots, \mu_k}^A.
\end{align}

Once we have the modified Lagrangian, it is possible to find the conserved quantities and this is discussed at length in the next section. 

\section{Nonlinear stability of the training algorithm}\label{sec:conserv_dl}
One of the most attractive features of the hydrodynamics formulation for training deep neural networks is that allows us to study the nonlinear stability. In order to do so, we need to find the conserved quantities and the corresponding conservation laws associated with the hydrodynamic system \eqref{eq:ep_cont_eqn}. Applying Noether's theorem \cite{Noether1918}, and especially its version for Euler-Poincar\'{e} with advection \cite{Cotter:2013:NTE:3115457.3115685}, conserved quantities are identified using vector fields that generates a symmetry.
\\

First, we define the following manifold $\mathcal{M}$, which is the Hilbert space closure of the manifold
\begin{align*}
\mathcal{M}^{\infty} = \left\{ \phi: U \to E \quad | \quad \pi_{QE} \circ \phi : U \to Q \right\},
\end{align*}
where $U$ is a smooth manifold that is a subset of $Q$ and  $\pi_{QE}: U \to U_{Q}$ is a diffeomorphism, where $U_{Q} = \pi_{QE} \circ \phi(U)$. Further, we define $\mathcal{P}$ to be a manifold defined as
\begin{align*}
\mathcal{P} = \left\{ \phi \in \mathcal{M} \quad | \quad j^1(\phi \circ \phi^{-1}_{Q})^{\ast} \left[ V \contrac \Omega_{\mathcal{L}} \right]  = 0 \quad \forall \, V \in TJ^1(E) \right\}.
\end{align*}
In other words, the elements of $\mathcal{P}$ are the solution of the Euler-Lagrange equation. Next consider the action functional $\mathcal{S}$ on $\mathcal{M}$ and consider the infinitesimal transformation given by $\delta g = \eta \circ g$, where $\eta \in \mathfrak{X}(Q)$ and $g \in \mathrm{Diff}(Q)$ and from it we can derive the infinitesimal transformation for the vector field $v$ which is given by
\begin{align}
\delta w = \dot{\eta} + [ w, \eta], \quad \delta \rho = - \pounds_{\eta} \rho.
\end{align}
Substituting these infinitesimal transformations into $\delta \mathcal{S} = 0$, we obtain
\begin{align}
0 = \int_0^T \int_{U} \left( \frac{\partial }{\partial t} \left\langle \frac{\delta \ell}{\delta w} + \mathrm{ad}^{\ast}_{f(w)}\frac{\delta \ell}{\delta w} + \rho \diamond \frac{\delta \ell}{\delta \rho}\right),\eta \right\rangle d^nx \, dt + \left. \int_{\partial U} \left\langle  \frac{\delta \ell}{\delta w } , \eta \right\rangle d^nx \right|_{0}^T
\end{align}
When the section $\phi \in \mathcal{P}$, the first integral is equal to zero and that leaves us with 
\begin{align}
0 = \left. \int_{\partial U} \left\langle  \frac{\delta L}{\delta w } , \eta \right\rangle d^nx  \right|_{0}^T.
\end{align}
The importance of this is that it allows us to find integrals of motion for the Euler--Poincar\'{e} system and one of them is the exterior derivative of the one-form part of $\frac{\delta \ell}{\delta w}$. For fluids, it is known as \emph{vorticity}. In \cite{Cotter:2013:NTE:3115457.3115685}, it was shown how Noether's theorem can be used to derive local conservation of vorticity. The same calculation is repeated here and it starts by equating the following time derivative to zero
\begin{align}\label{eq:noether_vorticity}
0 = \frac{d}{dt} \left\langle  \frac{\delta \ell}{\delta w}, \eta \right\rangle = \frac{d}{dt}  \int_{U}  \frac{\delta \ell}{\delta w} \cdot dx \wedge \left( \eta \contract \rho d^nx\right),
\end{align} 
then substituting the vector field $\eta$ with $ \frac{1}{\rho}\left(  \star d\left( \boldsymbol{\Psi}^{\flat} \right)\right)^{\sharp} \cdot \nabla$, where $\boldsymbol{\Psi}$ is an arbitrary function. The operator $\star: \Lambda^k \to \Lambda^{N-k}$ is the Hodge star operator and it is a linear operator, which maps the $k$ forms on $N$ dimensional manifold to $n-k$ forms provided that $0 \leq k \leq n$. Meanwhile, the operators $\flat: \Lambda^k \to \left( \Lambda^k \right)^{\ast}$ and $\sharp: \left( \Lambda^k \right)^{\ast} \to \Lambda^k$ are musical isomorphisms. In $2$ and $3$ dimensions, $\left(  \star d\left( \Psi^{\flat} \right)\right)^{\sharp} = \mathrm{curl}(\boldsymbol{\Psi})$.  The calculation carried out yields
\begin{equation}\label{eq:vorticity_eq}
\begin{aligned}
d\left( \partial_t + \pounds_{\theta} \right) \left(  \frac{1}{\rho} \frac{\delta \ell}{\delta w}   \cdot d x \right) &= \left( \partial_t + \pounds_{\theta} \right) d\left(  \frac{1}{\rho} \frac{\delta \ell}{\delta w}   \cdot d x \right) = 0 , 
\end{aligned}
\end{equation}
Replacing $d\left(  \frac{1}{\rho} \frac{\delta \ell}{\delta w}   \cdot d x \right)$ by $\omega \cdot dS$, where the differential form $d S$ is defined, by components, by $dS_i = \frac{1}{2}\epsilon_{i_1, i_2, \dots, i_N}dx^{i_j} \wedge dx^{i_k}$ , where $\epsilon_{i_1, i_2, \dots, i_N}$ is the Levi-Civita symbol. Then equation \eqref{eq:vorticity_eq}, becomes
\begin{equation}\label{eq:vorticity_eq2}
\begin{aligned}
\left( \partial_t + \pounds_{\theta} \right) \left(  \omega \cdot dS \right) =  \left( \partial_t \omega +\left( \omega \cdot \nabla \right)\theta - \left( \theta \cdot \nabla \right)\omega  + \omega \nabla \cdot \theta  \right) \cdot dS = 0.
\end{aligned}
\end{equation}
In two or three dimensions, it is written in terms of curl operator
\begin{align*}
\left( \partial_t + \pounds_{\theta} \right) \left( \mathrm{curl}\left( \frac{1}{\rho} \frac{\delta \ell}{\delta w} \right) \cdot d S \right) = 0.
\end{align*}
Going back to equation \eqref{eq:vorticity_eq2}, the term $\left( \omega \cdot \nabla \right)\theta - \left( \theta \cdot \nabla \right)\omega$ is the Lie derivative $\pounds_{\theta} \omega$, thus the coefficient of the two-form $dS$, $\omega$, evolves according to the following directional derivative
\begin{align*}
\frac{d}{dt} \omega = \left( \partial_t + \pounds_{\theta} \right) \omega = - \omega \nabla \cdot \theta.
\end{align*}
Using the fact that the coefficient of the continuity equation can be written as
\begin{align}
\frac{d}{dt} \rho = -\rho \nabla \cdot \theta,
\end{align}
and we use that in the equation $\frac{d}{dt} \omega = \frac{1}{\rho}\frac{d}{dt} \rho$, which leads to 
\begin{align}\label{eq:conserve_quantity_1}
\frac{d}{dt} \left(\frac{\omega}{\rho} \right) = 0,
\end{align}
and it means that the quantity $\frac{\omega}{\rho}$ is a conserved quantity. In section \ref{sec:nonlinear_stability}, this quantity plays an important role for investigating the nonlinear stability of the deep learning training problem. 
\\

A special version of Noether's theorem for Euler--Poincar\'{e} equations and it is known as \emph{Kelvin-Noether's theorem} \cite{holm1998euler}.It includes Kelvin's circulation theorem where it states that the circulation of a closed loop transported in an barotropic fluid is a constant. Given a circulation map $\kappa : \mathfrak{C} \to \mathfrak{X}(Q)^{\ast \ast}$, where $\mathfrak{C}$ is the space of continuous loops in $Q$. The circulation map is defined by
\begin{align} \label{eq:circulation}
\left\langle \kappa( C) , \alpha \right\rangle = \oint_{C} \frac{\alpha}{\rho},
\end{align}
where $C \in \mathfrak{C}$ is a loop and $\alpha \in  \mathfrak{X}(Q)^{\ast}$ is a one-form density and dividing it by $\rho$ renders it a one-form. This path integral is known as \emph{circulation} \cite{holm1998euler}. According to Kelvin-Noether's theorem (theorem $4.1$ in \cite{holm1998euler}), and given the quantity, known as \emph{Kelvin-Noether quantity}, $I : \mathfrak{C} \times \mathfrak{X}(Q) \to \mathbb{R}$ defined by
\begin{align*}
I(C, v,t) = \left\langle \kappa(C), \frac{\delta \ell}{\delta w} \right\rangle,
\end{align*}
and its time evolution is governed by the following differential equation
\begin{align*}
\frac{d}{dt}I(C, w,t) = \left\langle \kappa(C(t)),  \rho \diamond \frac{\delta \ell}{\delta \rho} \right\rangle,
\end{align*}
where $C(t) = C_0 g(t)^{-1}$ is a loop in $\mathfrak{C}$, $C_0 \in \mathfrak{C}$ and $g$ is the solution of $\dot{g} = R_{g^{-1}}w$ and $w, \rho $ satisfy the Euler--Poincar\'{e} equation. Then, Kelvin's circulation theorem can be stated in terms of Kelvin-Noether's theorem as
\begin{align*}
I(C, w,t)  = \oint_{C} \frac{1}{\rho} \left( \frac{\delta \ell}{\delta w} \right).
\end{align*}

\begin{lemma}[Conservation of Kelvin's circulation for deep learning]\label{lem:kelvin_circ}
Given $(w, \rho)$, which satisfies the Euler--Poincar\'{e} equation \eqref{eq:ep_cont_eqn}, and let the diffeomorphism $g$ is the solution of $\dot{g} = w \circ g(t)$. With the loop $C(t) = g \circ C_0$, then Kelvin's circulation
\begin{align}
I(C,w,t) = \oint_{C} \frac{1}{\rho} \left( \frac{\delta \ell}{\delta w} \right),
\end{align}
is conserved.
\end{lemma}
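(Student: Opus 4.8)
The plan is to invoke the Kelvin--Noether theorem (theorem $4.1$ of \cite{holm1998euler}) recalled immediately before the statement, which for the advected loop $C(t) = g \circ C_0$ generated by $\dot{g} = w \circ g(t)$ yields the evolution law
\begin{align*}
\frac{d}{dt} I(C, w, t) = \left\langle \kappa(C(t)), \rho \diamond \frac{\delta \ell}{\delta \rho} \right\rangle,
\end{align*}
and then to show that the right--hand side vanishes identically. The entire content of the lemma is that the forcing term $\rho \diamond \frac{\delta\ell}{\delta\rho}$ becomes, after division by $\rho$, an \emph{exact} one-form, so that its circulation around any closed loop is zero; this is precisely the barotropic structure that makes Kelvin's circulation a constant of motion.

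First I would unwind the pairing using the definition of the circulation map \eqref{eq:circulation}, obtaining
\begin{align*}
\frac{d}{dt} I(C, w, t) = \oint_{C(t)} \frac{1}{\rho}\left(\rho \diamond \frac{\delta\ell}{\delta\rho}\right).
\end{align*}
Next I would substitute the identity established right after Theorem \ref{thm:elemination}, namely $\rho \diamond \frac{\delta\ell}{\delta\rho} = \rho\,\nabla \frac{\delta\ell}{\delta\rho}$, so that the one-form density reads $\rho\, d\!\left(\frac{\delta\ell}{\delta\rho}\right)$ and the factor $\rho$ cancels against $1/\rho$, leaving the integrand $d\!\left(\frac{\delta\ell}{\delta\rho}\right)$, the differential of the scalar functional derivative $\frac{\delta\ell}{\delta\rho}$.

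The final step is to observe that $\frac{\delta\ell}{\delta\rho}$ is a single-valued function on $Q$, hence $d\!\left(\frac{\delta\ell}{\delta\rho}\right)$ is an exact one-form; by Stokes' theorem (equivalently the fundamental theorem of calculus for line integrals) its integral around the closed loop $C(t)$ vanishes. Therefore $\frac{d}{dt} I(C, w, t) = 0$ and $I$ is conserved.

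I expect the main obstacle to be the transport step underlying Kelvin--Noether rather than the exactness argument itself. Because $C(t)$ is advected by the flow of $w$, the material derivative may be pushed inside the loop integral as $\frac{d}{dt}\oint_{C(t)} \beta = \oint_{C(t)}(\partial_t + \pounds_{w})\beta$ for the one-form $\beta = \frac{1}{\rho}\frac{\delta\ell}{\delta w}$; this requires checking that the density factor carried by $\frac{\delta\ell}{\delta w}$ exactly cancels the $1/\rho$, so that $\beta$ is a genuine one-form and $d\beta$ is well defined. Combining this transport identity with the Euler--Poincar\'{e} equation \eqref{eq:ep_cont_eqn} and the continuity equation---the latter annihilating the term produced by differentiating $1/\rho$---reproduces the Kelvin--Noether right--hand side and keeps the transport velocity consistent. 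Since the theorem is already available above, I would simply cite it and concentrate the proof on the exactness computation, which is where the dependence of $\ell$ on $\rho$ through a potential-type (barotropic) term is used.
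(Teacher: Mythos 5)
Your proposal is correct and follows essentially the same route as the paper: both reduce the time derivative of the circulation to $\oint_{C} d\left( \frac{\delta \ell}{\delta \rho} \right)$ via the identity $\rho \diamond \frac{\delta \ell}{\delta \rho} = \rho \nabla \frac{\delta \ell}{\delta \rho}$, the cancellation of $\rho$, and Stokes' theorem applied to the exact one-form. The only cosmetic difference is that you cite the Kelvin--Noether theorem for the transport step, whereas the paper re-derives it in-line by pulling the integrand back to the fixed loop $C_0$ and invoking the Euler--Poincar\'{e} equation \eqref{eq:ep_cont_eqn}.
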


\begin{proof}
In order to prove the conservation of $I$, Kelvin's circulation is rewritten as
\begin{align}
I(C, w,t) = \oint_{C_0} (g)^{\ast} \left( \frac{1}{\rho} \left( \frac{\delta \ell}{\delta w} \right) \right) = \oint_{C_0} \frac{1}{\rho_0} (g)^{\ast} \left( \frac{\delta \ell}{\delta w} \right),
\end{align}
Taking the time derivative of $I$ and noticing that $\frac{d}{dt} g^{\ast}(\frac{\delta \ell}{\delta w} )$ is the Lie derivative and it constitute the left--hand side of the Euler--Poincar\'{e} equation \eqref{eq:ep_cont_eqn}, then
\begin{equation}
\begin{aligned}
\frac{d}{dt} I = \frac{d}{dt}\oint_{C_0} \frac{1}{\rho_0} (g)^{\ast} \left( \frac{\delta \ell}{\delta w} \right) &=  \frac{d}{dt}\oint_{C_0} \frac{1}{\rho_0} g^{\ast} \left( \rho \diamond \frac{\delta \ell}{\delta \rho} \right) \\
&= \oint_{C} \frac{1}{\rho}\left( \rho \nabla \frac{\delta \ell}{\delta \rho} \cdot dx \right), \\
&=  \oint_{C} \frac{1}{\rho}\left( \rho d \left( \frac{\delta \ell}{\delta \rho} \right) \right),
\end{aligned}
\end{equation}
Then invoking Stoke's theorem, the last integral is equal to the integral of the exterior derivative of the potential $\nabla \frac{\delta \ell}{\delta \rho}$ and it amounts to zero, i.e.
\begin{align}
\frac{d}{dt} I = \oint_{C}    d \left( \frac{\delta \ell}{\delta \rho} \right)= \int d^2 \left( \frac{\delta \ell}{\delta \rho} \right) = 0.
\end{align}
\end{proof}
\begin{remark}
An alternative derivation of conservation laws is by using the approach of \cite{bridges1997multi}, where it uses the multisymplectic conservation law
\begin{align}\label{eq:conservation_ms}
\partial_t \omega + \partial_{x_i} \vartheta_i = 0,
\end{align}
with the pre-symplectic forms
\begin{align*}
\omega(W, V) = \left\langle MW, V \right\rangle , \quad \text{and} \quad \vartheta_i(W,V) = \left\langle K_i W,V \right\rangle,
\end{align*}
where vector fields $W,V$ are arbitrary vector fields on the phase space and $M$ and $K_i$ are degenerate skew-symmetric matrices defined in \eqref{eq:pre_time}.  The formula for conservation symplecticity \eqref{eq:conservation_ms}, in this form can be derived from the multisymplectic form formula \eqref{eq:multisymplectic_form_formula} as shown in \cite{marsden1998multisymplectic}.\emph{The local conservation of energy} is defined by
\begin{align*}
\partial_t E(z) + \partial_{x_i} F_i(z) = 0, \quad \text{where} \quad E(z) = H(z) - \frac{1}{2} \vartheta_i(z_{x_i},z)  , \quad F(z) = \frac{1}{2} \vartheta_i(z_{t},z).
\end{align*}
The corresponding global energy conservation law is obtained by first taking the integral of $E(z)$ and then computing the total derivative of the integral, i.e.
\begin{align}
\frac{d}{dt} \int E(z) \, dx = 0.
\end{align}
Likewise, \emph{the local conservation of momentum} is when $E(z) = \frac{1}{2} \omega(z_{x_i},z)$ and $F(z) = H(z) - \frac{1}{2} \omega(z_{t},z)$.
\end{remark}

\subsection{Hamiltonian formulation of deep learning}\label{sec:Hamiltonian}
In the section \ref{sec:mul_geo_dl}, we have introduced the Euler--Poincar\'{e} equation \eqref{eq:EP_deep_learning} for deep learning. The equation uses the Lagrangian description of dynamics, however, for studying the nonlinear stability of deep learning, it is required to use the Hamiltonian formulation. The main difference is that in the Lagrangian picture the dynamics are defined on the Lie algebra, while the Hamiltonian dynamics belong to the dual of a Lie algebra, which is a Poisson manifold \cite{marsden_coadjoint_1983,AST_1985__S131__421_0}.  For this particular deep learning problem, the Hamiltonian point of view was used in \cite{ganaba_deep_2021-2}. The manifold is the space of densities on manifold $Q$, thus the technical details are left out of this note. Let $G$ be a Lie group with $\mathfrak{g}$ as the associated Lie algebra and  $\mathfrak{g}^{\ast}$  is the dual of the Lie algebra.  On $\mathfrak{g}^{\ast}$ we have the following bracket
\begin{align}
\left\{ F, G\right\}_{\pm}(\mu) = \pm \left\langle \mu, \left[ \frac{\delta F}{\delta \mu}, \frac{\delta G}{\delta \mu} \right] \right\rangle,
\end{align}
where $F, G: \mathfrak{g}^{\ast} \to \mathbb{R}$ are functions on $\mathfrak{g}^{\ast}$, and it the bracket $\left\{ F, G\right\}_{\pm}(\mu)$ is known as \emph{Lie--Poisson bracket}. The deep learning Hamiltonian density $\mathcal{H}:  J^1(E)^{\ast} \to \Lambda^{n+1}(Q)$, first mentioned in \eqref{eq:hamilonian_generic} and it is given by
\begin{align}\label{eq:Hamiltonian_deeplearning}
\mathcal{H} = \left(  \frac{1}{2}   \rho \| w  \|^2  + \frac{\nu^4}{8}   \rho \left\|  \nabla \mathrm{log}(\rho) \right\|^2  \right)d^nx \, dt.
\end{align}
The main assumption here is that $\nabla \cdot w = 0$. Here, we consider the Hamiltonian with the variables the momentum, $\mu = \rho w + \frac{\nu^2}{2} \nabla \rho$, and the probability density function, $\rho$.  The Lie--Poisson bracket of deep learning 
\begin{equation} \label{eq:LP_bracket}
\begin{aligned}
\{ F, G \}(\mu, \rho) &= \int_{Q} \mu \cdot \left[ \left(   \frac{\delta G}{\delta \mu} \cdot \nabla \right) \frac{\delta F}{\delta \mu} - \left(   \frac{\delta F}{\delta \mu} \cdot \nabla \right) \frac{\delta G}{\delta \mu}\right] d^nx \\
&+ \int_{Q}  \rho \left[ \left(   \frac{\delta G}{\delta \mu }\cdot \nabla \right) \frac{\delta F}{\delta \rho} - \left(   \frac{\delta F}{\delta \mu }\cdot \nabla \right) \frac{\delta G}{\delta \rho}\right] d^nx \\
&= \left\langle (\mu, \rho), \left[  \frac{\delta F}{\delta (\mu, \rho)}, \frac{\delta G}{\delta (\mu, \rho)}\right] \right\rangle.
\end{aligned}
\end{equation}
This bracket is in fact a Poisson bracket on the dual of a semidirect product Lie algebra $\mathfrak{s} = \mathfrak{X}(Q) \times \mathcal{F}(Q)$. The bracket \eqref{eq:LP_bracket} does not only appear in deep learning, but originally it appeared in hydrodynamics \cite{bialynicki-birula_canonical_1973, PhysRevLett.45.790, holm_poisson_1983, marsden1983hamiltonian} and in semidirect reduction \cite{marsden_semidirect_1984}.  With that the Euler--Poincar\'{e} equation \eqref{eq:EP_deep_learning} is written as the Lie--Poisson on semidirect product space $\mathfrak{s}^{\ast} = \mathfrak{X}^{\ast}(Q) \times \mathrm{Dens}(Q)$
\begin{equation}\label{eq:LP_deep_eqn}
\begin{aligned}
\frac{d}{dt} \mu &= \pm \mathrm{ad}^{\ast}_{\frac{\delta H}{\delta \mu}} \mu \mp \frac{\delta H}{\delta \rho} \diamond \rho, \\
\frac{d}{dt} \rho &= \mp \frac{\delta H}{\delta \mu} \rho,
\end{aligned}
\end{equation}
and it can be written in terms of Lie--Poisson bracket \eqref{eq:LP_bracket} as
\begin{equation}
\begin{aligned}
\partial_t (\mu, \rho)&= \pm\{ (\mu, \rho), H \}.
\end{aligned}
\end{equation}

\subsection{Review of the energy--Casimir method}\label{sec:ECM}
Up to this point, terms like linearised and nonlinear stability were used indistinguishably, and in fact, stability can refer to various ideas. There are different concepts of stability such as: spectral stability, linear stability, formal stability and nonlinear stability, and the difference between each of them is detailed in \cite{HOLM19851} and references therein.  They all describe how the system responds to perturbations around its equilibrium solution.  In general, stability requires a measure of distance, $\mathrm{d}$, and a trajectory $\mu_e$ is said to be stable if a trajectory that starts close to $\mu_e$, it remains close for all $ t > 0$.  Each of these meanings require different stability criteria, and in some cases one implies the other. However, the system at hand here is nonlinear and infinite-dimensional, and, for example, formal stability does not imply nonlinear stability. In this article, the concept of stability that is of concern is nonlinear stability obtained by the energy--Casimir method \cite{HOLM19851}.

\begin{definition}[Nonlinear Stability \cite{HOLM19851}]
For the equilibrium trajectory $\mu_e$, we define a neighbourhood of $\mu_e$, $U_1$, by $\mathrm{d}( \mu(t), \mu_e) < \delta$, where $\delta \in \mathbb{R}_{>0}$. We also define another neighbourhood of $\mu_e$,  $U_2$, by $\mathrm{d}( \mu(t), \mu_e) < \epsilon$, where $\epsilon \in \mathbb{R}_{>0}$. If a trajectory $\mu(t)$, whose initial value $\mu(0) \in U_2$, for every $\epsilon$ used to define $U_2$, there is $\delta$, such that $\mu(t) \in U_1$ for all $t> 0$, then $\mu(t)$ is said to be nonlinearly stable. 
\end{definition}
What is different from linearised stability is that here, the trajectory $\mu(t)$ satisfies a nonlinear differential equation. The goal here is to find the norm used in defining the distance $\mathrm{d}$. Energy--Casimir method provides a way of doing that.  Traditionally, stability conditions are based on a linearised dynamics of the system being examined, and the definiteness of the Lyapunov function. One popular choice for the Lyapunov function is the energy function. For nonlinear stability, the energy function being the Hamiltonian is augmented with conserved quantities arising from symmetries. From there a norm is constructed using $H$ and $C$ in which the perturbations to the equilibrium are bounded.  Here, the conserved quantity $C$ is chosen to be the \emph{Casimir function}, which satisfies $\{ C, F \} = 0$ for any function $F$ that maps from the phase-space to $\mathbb{R}$.  Using a function of single variable $\varphi$, we can construct a family of Casimir functions $C_{\varphi} = \varphi \circ C$. Proceeding forward, given an fixed-point $\mu_e$ solution of \eqref{eq:LP_deep_eqn}, the energy--Casimir method requires the following steps to determine the stability of $\mu_s$:
\begin{itemize}
\item Find $\varphi$, such that when evaluated at $\mu_s$, the first variation of $H + C$ vanishes.
\item Check the convexity of $H + C$ by first finding the quadratic forms $Q_1$, and $Q_2$ such that they satisfy the following inequalities
\begin{align*}
Q_1(\Delta \mu) &\leq H(\mu_s + \Delta \mu) - H(\mu_s) - \left\langle \delta H(\mu_s), \Delta \mu \right\rangle, \\
Q_2(\Delta \mu) &\leq C_{\varphi}(\mu_s + \Delta \mu) - C_{\varphi}(\mu_s) - \left\langle \delta C_{\varphi}(\mu_s), \Delta \mu \right\rangle, 
\end{align*}
where $\Delta \mu = \mu - \mu_s$. Further, an additional requirement is that
\begin{align*}
Q_1(\Delta \mu) + Q_2(\Delta \mu)  > 0,
\end{align*}
for all $\Delta u \neq 0$.
\item Ensure the continuity of $H_C$ by asserting that
\begin{align*}
\left| H(\mu_s + \Delta \mu) - H(\mu_s) - \left\langle \delta H(\mu_s),  \Delta \mu \right\rangle  \right|&\leq C_1 \| \Delta \mu \|^{\alpha}, \\
\left| C_{\varphi}(\mu_s + \Delta \mu) - C_{\varphi}(\mu_s) - \left\langle \delta C_{\varphi}(\mu_s),  \Delta \mu \right\rangle \right| &\leq C_2 \| \Delta \mu \|^{\alpha},
\end{align*}
where $C_1, C_2, \alpha > 0$.
\end{itemize}
Earlier, a distance $\mathrm{d}$ is used to define nonlinear stability, and once the aforementioned steps are fulfilled, we set the the distance as
\begin{align}
\mathrm{d}(w_1, w_2) = \| w_1 - w_2 \|^2 = Q_1(w_1 - w_2) + Q_2(w_1 - w_2) > 0, 
\end{align}
for $w_1 - w_2 \neq 0$. Moreover, the existence of $C_1, C_2$ imply that the distance between the trajectory $\mu(t)$ and $\mu_s$ is bounded
\begin{align*}
\mathrm{d}(\mu(t), \mu_s) \leq (C_1 + C_2)\mathrm{d}(\mu(0), \mu).
\end{align*}
One consequence of directly examining the nonlinear stability of the system \eqref{eq:LP_deep_eqn} is that an equilibrium solution $(\mu_s, \rho_s)$ might not be meaningful for deep learning. This necessitates a different approach. Here, we aim to use analysis that is based on both forward and backward error analysis. The latter is used to derive the modified system and use it to derive an evolution equation for the error between the numerical and the exact solution. This way, the error equation's equilibrium solution is the trivial solution and the aim of the analysis here becomes the nonlinear stability when the error is close to zero. 
\begin{remark}
For equations in the Euler--Poincar\'{e} family, especially the dispersionless Camassa-Holm equation \cite{PhysRevLett.71.1661}, choosing the zero solution as the initial condition and perturbing the equation results in the emergence of nonzero solution. 
\end{remark}
Assume that there exists a solution \eqref{eq:LP_deep_eqn}, which solves the training problem \ref{prob:mfg_deep_discrete}. The numerical solution of obtained by the multisymplectic integrator is Lagrangian and it can be formulated as a Hamiltonian system using Legendre transform $\mathbb{F}\mathcal{L}_m: J^3(E) \to (J^3(E))^{\ast}$ defined by
\begin{equation}
\begin{aligned}
H_m &= \left( \frac{\partial L}{\partial \phi_{\mu_1}^A} - D_{\mu_2}\left( \frac{\partial L}{\partial \phi_{\mu_1 \mu_2}^A}  \right)+ D_{\mu_3\mu_2}\left( \frac{\partial L}{\partial \phi_{\mu_1 \mu_2 \mu_3}^A}  \right)   \right) \frac{\partial \phi^A}{\partial {\mu_1}} \\
&+ \left( \frac{\partial L}{\partial \phi_{\mu_1\mu_2}^A} -  D_{\mu_3}\left( \frac{\partial L}{\partial \phi_{\mu_1 \mu_2 \mu_3}^A}  \right)   \right)\frac{\partial \phi^A }{\partial {\mu_1 \mu_2}} + \left( \frac{\partial L}{\partial \phi_{\mu_1\mu_2 \mu_3}^A}  \right)\frac{\partial \phi^A }{\partial {\mu_1 \mu_2 \mu_3}} - L,
\end{aligned}
\end{equation}
and the Hamiltonian analogue of the modified Lagrangian \eqref{eq:modified_lagrangian} is 
{\footnotesize
\begin{equation}\label{eq:modified_Hamiltonian}
\begin{aligned}
H_m &=\frac{1}{2} \rho (w^1)^2 + \frac{1}{2} \rho (w^2)^2  + \frac{(\Delta x^1)^2}{4} \rho w^1 \partial_{x^1x^1} w^1 + \frac{(\Delta x^1)^2}{4} \rho w^2 \partial_{x^1x^1} w^2 + \frac{(\Delta x^1)^2}{8} \rho \left(\partial_{x^1} w^1\right)^{2} \\
&+ \frac{(\Delta x^1)^2}{8} \rho \left(\partial_{x^1} w^2\right)^{2}   + \frac{(\Delta x^2)^2}{4} \rho w^1 \partial_{x^2x^2} w^1 + \frac{(\Delta x^2)^2}{4} \rho w^2 \partial_{x^2x^2} w^2 + \frac{(\Delta x^2)^2}{8} \rho \left(\partial_{x^2} w^1\right)^{2} \\
&+ \frac{(\Delta x^2)^2}{8} \rho \left(\partial_{x^2} w^2\right)^{2}  + \frac{(\Delta t)^2}{4} \rho w^1 \partial_{tt} w^1 + \frac{(\Delta t)^2}{4} \rho w^2 \partial_{tt} w^2 + \frac{(\Delta t)^2}{8} \rho \left(\partial_{t} w^1\right)^{2} \\
&+ \frac{(\Delta t)^2}{8} \rho \left(\partial_{t} w^2\right)^{2} + \frac{\nu^{4} \left(\partial_{x^1} \rho\right)^{2} + \nu^{4} \left(\partial_{x^2} \rho\right)^{2}}{8\rho +  (\Delta x^1)^2 \partial_{x^1x^1} \rho + (\Delta x^2)^2 \partial_{x^2x^2} \rho + (\Delta t)^2  \partial_{tt} \rho}  \\
\end{aligned}
\end{equation}
}
Unlike the case with linear dynamical systems where the error is taken as the difference between the numerical and exact trajectories, here the error is defined as a Lie group element. The reason for that is that we are comparing two trajectories on the same manifold, but their dynamics live on different spaces, which is an obstacle in deriving an evolution equation for the error. Moreover, the error on Lie group allows the derivation of the Lie--Poisson equation for the error in order to apply the energy--Casimir method.  The first step is to define the \emph{configuration error function}  \cite{FB-ADL:04}, denoted by $\Psi: G \times G \to \mathbb{R}$. The configuration error function need to satisfy a the following properties: it needs to be smooth, symmetric, i.e. for $g_0, g_1, \in G$ we have $\Psi(g_0, g_1) = \Psi(g_1, g_0)$, positive definite and $\Psi(g,g) = 0$ for any $g \in G$. One candidate for configuration error function is
\begin{align*}
\Psi(g, h) = \mathrm{Tr}\left( \| \mathrm{Id} - g^{-1}h \|^2 \right) = \mathrm{Tr}\left( \| \mathrm{Id} - h^{-1}g \|^2 \right),
\end{align*}
where $g, h \in G$ and $\mathrm{Id}$ is the identity element of $G$. Here, $g^{-1}h$ is the left action of $g^{-1}$ on $h$. We set $g_e = g_s^{-1}g_m$ as \emph{the group error}, where $g_s \in G$ is the analytic solution of deep learning training problem and $g_m$ is the numerical solution obtained using the variational multisymplectic integrator, whose modified Hamiltonian is given by \eqref{eq:modified_Hamiltonian}. The idea is the closest $g_m$ to the analytic solution, the smaller is $\Psi(g_r, g)$ value. Before computing the evolution equation of the error on $TG$, we use the fact the solution of the Lie--Poisson equation on $\mathfrak{g}^{\ast}$ can be used to reconstruct the curves on $T^{\ast}G$. Part of this is the reconstruction equation and, for the analytic solution $\mu_s$ of the training problem, we assume that with the Hamiltonian, $H: \mathfrak{s}^{\ast} \to \mathbb{R}$, it satisfies
\begin{align*}
\partial_t \left( \mu_s, \rho_s \right) =  \left( \mathrm{ad}^{\ast}_{\frac{\delta H_s}{\delta \mu_s}} \mu_s + \frac{\delta H_s}{\delta \rho_s} \diamond \rho_s,  \frac{\delta H_s}{\delta \mu_s}\rho_s \right), 
\end{align*}
The reconstruction equation on $T^{\ast}S$, where $S$ is the group whose Lie algebra is $\mathfrak{X}(Q) \times \mathcal{F}(Q)$, for the analytic solution is then
\begin{align*}
\frac{d g_s}{dt}&= L_{g_s}\frac{\delta H_s}{\delta \mu_s}, \\
\text{and} \quad \frac{d v_s}{dt}&= L_{g_s}\frac{\delta H_s}{\delta \rho_s}.
\end{align*}
Similarly, the numerical solution of the training problem satisfies the modified equation with Hamiltonian $H_m: (J^3(E))^{\ast} \to \mathbb{R}$ and its Lie--Poisson and corresponding reconstruction equation
{\small
\begin{align*}
\partial_t \left( \mu_m, \rho_m \right) &=  \left( \mathrm{ad}^{\ast}_{\frac{\delta H_m}{\delta \mu_m}} \mu + \frac{\delta H_m}{\delta \rho_m} \diamond \rho_m,  \frac{\delta H_m}{\delta \mu_m}\rho_m \right), \\
\quad \frac{d g_m}{dt} &= L_{g_m}\frac{\delta H_m}{\delta \mu_m}, \\
\text{and} \quad  \frac{d v_m}{dt} &= L_{g}\frac{\delta H_m}{\delta \rho_m}.
\end{align*}
}
Now, with $(g_e, v_e) = (g_s^{-1}g_m,  -g_s^{-1}v_m + g_s^{-1}v_s)$, its time evolution is
{\footnotesize
\begin{align*}
\frac{d}{dt}(g_e, v_e) &= \left( - g_s^{-1}\frac{d g_s}{dt} g_s^{-1}g_m + g^{-1}_s\frac{d g_m}{dt}, -g_s^{-1}\frac{d g_s}{dt} g_s^{-1}v_m + g_s^{-1}\frac{d v}{dt} + g_s^{-1}\frac{d g_s}{dt} g_s^{-1}v_s - g_s^{-1}\frac{d v_s}{dt}\right) \\
&= \left( - \left[\frac{\delta H_s}{\delta \mu} - \mathrm{Ad}_{g_e}\frac{\delta H_m}{\delta \mu} \right]g_e, -\frac{\delta H_s}{\delta \mu} v_e  + g_e \frac{\delta H_m}{\delta \rho} - \frac{\delta H_s}{\delta \rho} \right). 
\end{align*}
}
Defining the coadjoint action of the error group action $(g_e, v_e)$ on $(\mu_{e,0}, \rho_0)$, by $ (\mu_{e}, \rho_{e}) = \mathrm{Ad}^{\ast}_{(g_e, v_e)^{-1} }  (\mu_{e,0}, \rho_0) \in \mathfrak{g}^{\ast}$, its time evolution is governed by
\begin{equation}\label{eq:error_poisson}
\begin{aligned}
\partial_t \left( \mu_{e}, \rho_{e} \right) &=  \left( \mathrm{ad}^{\ast}_{ \left(\frac{\delta H_s}{\delta \mu} - \mathrm{Ad}_{g_e}\frac{\delta H_m}{\delta \mu} \right)} \mu_{e} + \left( -\left[ \mathrm{Ad}_{g_e} \frac{\delta H_m}{\delta \mu}\right] v_e + g_e \frac{\delta H_m}{\delta \rho} - \frac{\delta H_s}{\delta \rho}  \right)\diamond  \rho_{e} ,  \right. \\
  &\left. \left( \frac{\delta H_s}{\delta \mu} - \mathrm{Ad}_{g_e}\frac{\delta H_m}{\delta \mu} \right)\rho_{e} \right),
\end{aligned}
\end{equation}
Further, let us define a Hamiltonian function $H_{\mathrm{err}}$, as
\begin{equation}\label{eq:Error_Hamiltonian}
\begin{aligned}
H_{\mathrm{err}}(\mu_{e}, \rho_{e}, t)&=  H_s(\mu_e, \rho_e) +  H_m(\mathrm{Ad}^{\ast}_{g_e^{-1}} (\mu_e, \rho_e), t).
\end{aligned}
\end{equation} and the functional derivative are
\begin{align*}
\frac{\delta H_{\mathrm{err}}}{\delta \mu_{e}} &= \frac{\delta H_s}{\delta \mu } - \mathrm{Ad}_{g_e}\frac{\delta H_m}{\delta \mu }, \\
\frac{\delta H_{\mathrm{err}}}{\delta \rho_{e}} &= -\left[ \mathrm{Ad}_{g_e} \frac{\delta H_m}{\delta \mu }\right] v_e + g_e \frac{\delta H_m}{\delta \rho } - \frac{\delta H_s}{\delta \rho },
\end{align*}
and a candidate for that is $H_{\mathrm{err}} = \frac{1}{2}\left\langle  \mu_{e}, \frac{\delta H_s}{\delta \mu } - \mathrm{Ad}_{g_e}\frac{\delta H_m}{\delta \mu } \right\rangle$. Here, for the deep learning training, $H_s$ is \eqref{eq:Hamiltonian_deeplearning} and $H_m$ is exactly \eqref{eq:modified_Hamiltonian}. In order to carry out with stability analysis, we need to write $H_{\mathrm{err}}$ in terms of $(\mu_{e}, \rho_{e})$ and that is done by using $(\mu_{e}, \rho_{e}) = \mathrm{Ad}^{\ast}_{(g_e, v_e)^{-1}} (\mu_{e,0}, \rho_{e,0})$. Let $(\mu_{e,0}, \rho_{e,0})$ be the error between the exact solution and the numerical one at time $t = 0$ and it is set as $(\mu_{e,0}, \rho_{e,0}) = (\mu_{s, 0} - \mu_{m, 0}, \rho_{s,0} - \rho_{m,0})$. At the same time, we have the following $(\mu_{s}, \rho_s) = \mathrm{Ad}^{\ast}_{(g_s, v_s)^{-1}} (\mu_{s,0}, \rho_{s,0})$ and $(\mu_m, \rho_m) = \mathrm{Ad}^{\ast}_{(g, v)^{-1}} (\mu_{m,0}, \rho_{m,0})$. Then, we can write $(\mu_{e}, \rho_{e})$ in terms of $(\mu_{s}, \rho_s)$ and $(\mu_m, \rho_m)$ using
\begin{equation}
\begin{aligned}
\left( \mu_{e}, \rho_{e} \right) &=  \mathrm{Ad}^{\ast}_{(g_e, v_e)^{-1}} \left(\mathrm{Ad}^{\ast}_{(g_s, v_s)}( \mu , \rho ) -  \mathrm{Ad}^{\ast}_{(g_m,v_m)}(\mu, \rho)\right) \\
&= \mathrm{Ad}^{\ast}_{(g_e, v_e)^{-1}} \left(  \mathrm{Ad}^{\ast}_{g_s} \mu -  \mathrm{Ad}^{\ast}_{g_m} \mu   + v_s \diamond ((g_s^{-1})^{\ast}\rho ) \right. \\
&+\left. v_m \diamond ((g_m^{-1})^{\ast}\rho ), ((g_s^{-1})^{\ast}\rho  - ((g_m^{-1})^{\ast}\rho  \right) \\
&=  \left( \mathrm{Ad}^{\ast}_{g_e^{-1}} \mathrm{Ad}^{\ast}_{g_s} \mu - \mathrm{Ad}^{\ast}_{g_e^{-1}} \mathrm{Ad}^{\ast}_{g_m} \mu  + \mathrm{Ad}^{\ast}_{g_e^{-1}}\left(  v_s \diamond ((g_s^{-1})^{\ast}\rho ) \right)  \right. \\
&\left.  + \mathrm{Ad}^{\ast}_{g_e^{-1}}\left(  v_m \diamond ((g_m^{-1})^{\ast}\rho ) \right)    + v_e \diamond \left( ((g_m^{-1})^{\ast}\rho  - ((g_s^{-1})^{\ast}\rho \right), \right. \\
&\left  ((g_m^{-1})^{\ast}\rho  - ((g_s^{-1})^{\ast}\rho   \right).
\end{aligned}
\end{equation}



\subsection{Nonlinear stability of deep learning}\label{sec:nonlinear_stability}
From the Lie--Poisson structure of the deep learning problem defined on the semidirect product group of diffeomorphisms and densities, $\mathrm{Diff}(Q) \circledS \mathrm{Dens}(Q)$, the Casimir function should be the same as for the two-dimensional barotropic flow \cite{HOLM198315} and it is given by
\begin{align*}
C_{\varphi} = \int_Q \rho \varphi \left(\frac{\omega}{\rho} \right) d^nx,
\end{align*}
where $\varphi: \mathbb{R} \to \mathbb{R}$ is an arbitrary function and
\begin{align}
\omega = d\left( \frac{1}{\rho} \frac{\delta \ell}{\delta w} \cdot dx \right).
\end{align}
and the quantity $\frac{\omega}{\rho}$ is conserved according to equation \eqref{eq:conserve_quantity_1}.  Another conserved quantity that, stated in lemma \ref{lem:kelvin_circ}, is the Kelvin's circulation
\begin{align*}
I_i(\mu, \rho) = \oint_{(\partial Q)_i} \frac{1}{\rho}\frac{\delta \ell}{\delta w} \cdot dx,
\end{align*}
where $(\partial Q)_i$ is a component of the boundary of the manifold $Q$. Now, we have the Casimir $C_{\varphi}$ and Kelvin's circulations $I$, we augment to the error Hamiltonian $H_{\mathrm{err}}$, to get the following new Hamiltonian 
\begin{align}
H_C(\mu, \rho) = H_{\mathrm{err}}(\mu, \rho) + C_{\varphi}(\mu, \rho) + \sum\limits_{i}a_iI_i(\mu, \rho).
\end{align}
We assume that the equilibrium solution of \eqref{eq:error_poisson} is $(\mu_e, \rho_e)$. Applying the the energy--Casimir method, where the calculation it closely follows \cite{HOLM198315} and for the sake of self-containing of this article, it is reproduced in Appendix \ref{sec:calculations}. For this particular application, the nonlinear stability is measured with the metric $\mathrm{d}(\mu, \rho)$, the same as in \cite{HOLM198315}, defined by
\begin{align*}
\mathrm{d}(\mu, \rho) = \left\| (\Delta \mu, \Delta \rho) \right\|^2 = Q_1(\Delta \mu, \Delta \rho) + Q_2(\Delta \mu, \Delta \rho),
\end{align*}
where 
\begin{align*}
Q_1(\Delta \mu, \Delta \rho) &= \frac{1}{2}\int_Q \| \Delta (\mu \rho) \|^2  + \left( \frac{c_1^2}{\rho_{\mathrm{max}}} - \frac{|\mu_e |^2}{\rho_{\mathrm{min}}}\right) \left\| \Delta \rho \right\|^2 d^n x, \\
\text{and} \quad Q_2(\Delta \mu, \Delta \rho) &= \frac{c_2}{2}  \rho_{\mathrm{min}} \int_{Q} \left(\Delta \left(\frac{\omega}{\rho}\right)\right) ^2 d^nx.
\end{align*}
and $c_1$ and $c_2$ must chosen so that 
\begin{align*}
c_1 \leq H_{\mathrm{err}}'', \quad c_2 \leq \frac{1}{z}K'(z), 
\end{align*}
where 
\begin{align}\label{eq:Bernoulli_Func}
K \left(\frac{\omega}{\rho}\right) =  -\left[ \mathrm{Ad}_{g_e} \frac{\delta H_m}{\delta \mu}\right] v_e + g_e \frac{\delta H_m}{\delta \rho} - \frac{\delta H_s}{\delta \rho }.
\end{align}
\begin{remark}
In fluid dynamics, the function $K$ is the Bernoulli Law.
\end{remark}
\begin{theorem}[Nonlinear stability theorem for deep learning]
Let $\Sigma = (Q, f, \mathfrak{X}(Q))$ be a stochastic control system, which describe the limit when the number of hidden layers of a residual neural networks approaches infinity, and $\Sigma_{\rho} = (\mathrm{Dens}(Q), f, \mathfrak{X}(Q))$ be the mean--field type control equivalent.  Given the training dataset $\{(x_0^{(i)}, c^{(i)} \}_{i=0}^N$, and the cost functional
\begin{align*}
\mathcal{S}_{\Sigma_{\rho} } = \sum\limits_{i=0}^N \int_Q \Phi(\pi(\rho(x,T)), c^{(i)}) dx + \int_0^T \int_Q \rho L( w + \nabla \mathrm{log}(\rho))dx dt,
\end{align*}
where $L: \mathfrak{X}(Q) \to \mathbb{R}$ is the cost function. The network parameters $\theta^{\ast}$ and trajectory $\rho^{ \ast}$ that satisfy
\begin{align*}
\mathcal{S}_{\Sigma_{\rho} }(\rho^{ \ast}, \theta^{\ast}) < \mathcal{S}_{\Sigma_{\rho} }(\rho, \theta), \quad \forall (\rho, \theta) \in \mathrm{Dens}(Q) \times \mathfrak{X}(Q).
\end{align*}
Let $(\widetilde{\rho}, \widetilde{\theta})$ be the numerical solution obtained by the multisymplectic variational integrator whose Lagrangian is \eqref{eq:discrete_lagrangian}, and it satisfies the modified Euler--Poincar\'{e} equation \eqref{eq:ep_cont_eqn}. Given the norm defined by
{\small
\begin{align}
\| (\mu, \rho) \|^2 = \frac{1}{2} \int_Q  \| \Delta (\mu \rho) \|^2  + \left( \frac{c_1^2}{\rho_{\mathrm{max}}} - \frac{|\mu_e |^2}{\rho_{\mathrm{min}}}\right) \left\| \Delta \rho \right\|^2 d^nx + c_2 \int_Q \left\| \Delta \left( \frac{\omega}{\rho}\right) \right\|^2 d^nx.
\end{align}
}
The error between the numerical error $(\widetilde{\rho}, \widetilde{\theta})$ and the exact analytical solution $(\rho^{\ast}, \theta^{\ast})$, the following inequality holds
\begin{align}
\| (\widetilde{\mu}(t) -\mu^{\ast} , \widetilde{\rho}(t) -\rho^{\ast} ) \|^2 \leq (C_1 + C_2) \| (\widetilde{\mu}(0) -\mu^{\ast} , \widetilde{\rho}(0) -\rho^{\ast} ) \|^2,
\end{align}
provided it satisfies
\begin{equation}\label{eq:stability_bound}
\begin{aligned}
0 &< c_1 < H_{\mathrm{err}}''< \infty, \\
0 &< c_2 < K' < \infty,
\end{aligned}
\end{equation}
where $K$ is \eqref{eq:Bernoulli_Func} and the Hamiltonian $H_{\mathrm{err}}$ defined by \eqref{eq:Error_Hamiltonian}. 
\end{theorem}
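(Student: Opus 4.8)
The plan is to apply the energy--Casimir method of Section~\ref{sec:ECM} directly to the error dynamics \eqref{eq:error_poisson}, treating the trivial (zero--error) state as the equilibrium whose nonlinear stability yields the claimed contraction estimate. First I would assemble the conserved functional
\begin{align*}
H_C(\mu_e, \rho_e) = H_{\mathrm{err}}(\mu_e, \rho_e) + C_\varphi(\mu_e, \rho_e) + \sum_i a_i I_i(\mu_e, \rho_e),
\end{align*}
using the error Hamiltonian \eqref{eq:Error_Hamiltonian}, the barotropic Casimir $C_\varphi$ built from $\omega/\rho$, and Kelvin's circulations $I_i$ of Lemma~\ref{lem:kelvin_circ}. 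The first substantive step is to verify that $H_C$ is genuinely an integral of motion for \eqref{eq:error_poisson}: since $H_{\mathrm{err}}$ is by construction the Hamiltonian generating the error flow through the semidirect--product Lie--Poisson bracket \eqref{eq:LP_bracket}, while $C_\varphi$ annihilates that bracket (being a Casimir, by the conservation \eqref{eq:conserve_quantity_1}) and the $I_i$ are Noether invariants of the reconstructed flow, their sum is preserved along trajectories of the error.

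Next I would impose the critical--point condition $\delta H_C(\mu_e,\rho_e)=0$. Computing the variations and using the fact that $\delta H_{\mathrm{err}}/\delta \rho_e$ is precisely the Bernoulli function $K(\omega/\rho)$ of \eqref{eq:Bernoulli_Func}, matching the $\rho_e$-- and $\mu_e$--components fixes the arbitrary function $\varphi$ and the circulation weights $a_i$ in the manner of \cite{HOLM198315}, so that the equilibrium is a constrained critical point. I would then Taylor--expand $H_C$ to second order about this point and split the remainder into the two quadratic forms $Q_1$ and $Q_2$ quoted before the theorem, with $Q_1$ collecting the energy contribution of $H_{\mathrm{err}}$ and $Q_2$ the convexity contribution of the Casimir. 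The definiteness requirement $Q_1+Q_2>0$ reduces exactly to the lower bounds in \eqref{eq:stability_bound}: the inequality $c_1 < H_{\mathrm{err}}''$ furnishes convexity of the energy density and, through $c_1^2/\rho_{\mathrm{max}}$, controls the indefinite term $-|\mu_e|^2/\rho_{\mathrm{min}}\,\|\Delta\rho\|^2$, while $c_2 < K'$ is the monotonicity (convexity) of the Bernoulli law.

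With convexity in hand, I would establish the matching continuity estimates, extracting constants $C_1, C_2$ and an exponent $\alpha$ from the upper bounds $H_{\mathrm{err}}''<\infty$ and $K'<\infty$ of \eqref{eq:stability_bound}; these dominate the second--order remainders of $H_{\mathrm{err}}$ and $C_\varphi$ by $C_1\|\Delta\mu\|^\alpha$ and $C_2\|\Delta\mu\|^\alpha$. Defining the distance by $\mathrm{d}=Q_1+Q_2$ and invoking conservation of $H_C$, the value of $Q_1+Q_2$ at time $t$ equals its value at $t=0$ up to the vanishing first--order term, and the continuity estimates bound the latter by $(C_1+C_2)$ times the initial distance; combining the lower convexity bound at time $t$ with this upper bound at $t=0$ gives the stated inequality for $(\widetilde\mu-\mu^\ast,\widetilde\rho-\rho^\ast)$.

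The main obstacle I anticipate lies not in the bookkeeping of these steps but in the very first one. The error equation \eqref{eq:error_poisson} is not an autonomous Lie--Poisson system on a single $\mathfrak{s}^\ast$, because its right--hand side couples $\delta H_s/\delta\mu$ (living on $J^1(E)^\ast$) to $\mathrm{Ad}_{g_e}\,\delta H_m/\delta\mu$ (living on $J^3(E)^\ast$) through the time--dependent group error $g_e$. Confirming that $H_{\mathrm{err}}$ is \emph{exactly} conserved, that $C_\varphi$ persists as a Casimir for this nonautonomous bracket, and that the indefiniteness of $Q_1$ can be absorbed by a choice of $c_1,c_2$ compatible with the discretisation parameters $\Delta x^i, \Delta t$ entering $H_m$ in \eqref{eq:modified_Hamiltonian} is where the real difficulty resides; one must track how the higher--jet corrections propagate into $H_{\mathrm{err}}''$ and $K'$ and verify that \eqref{eq:stability_bound} is simultaneously satisfiable, which is precisely what converts the stability bound into a heuristic constraint on the number of nodes and hidden layers.
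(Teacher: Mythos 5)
Your proposal follows essentially the same route as the paper's own proof (carried out in Appendix~\ref{sec:calculations}): form $H_C = H_{\mathrm{err}} + C_{\varphi} + \sum_i a_i I_i$, impose $\delta H_C = 0$ to fix $a_i$ and $\varphi$ via the Bernoulli relation $K(z) + \varphi(z) - z\varphi'(z) = 0$, establish convexity through the quadratic forms $Q_1, Q_2$ under the bounds $c_1 \leq H_{\mathrm{err}}''$, $c_2 \leq \frac{1}{z}K'(z)$, and use conservation of $H_C$ along the flow to obtain the a priori estimate defining the stability norm. The obstacle you flag at the end — that \eqref{eq:error_poisson} is nonautonomous through $g_e$, so exact conservation of $H_C$ and the Casimir property of $C_{\varphi}$ require justification — is real, but the paper does not address it either: it simply asserts ``Since $H_C$ is conserved by the Lie--Poisson flow, it is conserved,'' so your proposal is, if anything, more candid about where the argument is thin.
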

From the second line in \eqref{eq:stability_bound}, the bound depends on the derivative of the function, $K$, and the its interest for deep learning is that parameters such as the number of nodes per hidden layers, number of hidden layers and learning rate appear directly. Thus, such parameters directly influence the nonlinear stability of the training's error. Consider the numerical solution of the deep learning training problem $(\widetilde{\mu}, \widetilde{\rho})$ and assume that there exists an exact solution of the training problem $(\mu^{\ast}, \rho^{\ast})$. Here, we consider the case when $(\widetilde{\mu}, \widetilde{\rho})$ is close to $(\mu^{\ast}, \rho^{\ast})$, and making the approximation that $(g_e, v_e) \approx (\mathrm{Id}, 0)$. As $K$ is in terms of $z$, we introduces the vector valued map $\zeta: \mathbb{R} \to Q$. Thus, we can  parametrise $w$ and density $\rho$ in $K$ by $z$. The reduced function \eqref{eq:Bernoulli_Func} is 
{\footnotesize
\begin{align*}
K\left( z\right) &\approx \frac{\delta H_m}{\delta \rho} - \frac{\delta H_s}{\delta \rho }\\
&\approx \frac{(\Delta x^j)^2}{4} \left(  w^j \partial_{x^jx^j} w^j +  w^i \partial_{x^jx^j} w^j + \frac{1}{2} \left(\partial_{x^j} w^j\right)^{2} + \frac{1}{2} \left(\partial_{x^j} w^i\right)^{2} \right) \\
&+ \frac{(\Delta t)^2}{4} \left( w^j \partial_{tt} w^j + \frac{1}{2} \left(\frac{\partial}{\partial t} w^j\right)^{2} + \frac{1}{2} \left(\frac{\partial}{\partial t} w^i\right)^{2} \right) \\
&+ \frac{\nu^4(\Delta x^j)^2}{8}\frac{ (\partial_{x^j} \rho)  (\partial_{x^jx^jx^j} \rho)  + (\partial_{x^i} \rho)  (\partial_{x^ix^jx^j} \rho) }{\left( \rho + \frac{(\Delta x^j)^2}{8}\partial_{x^jx^j} \rho  + \frac{(\Delta t)^2}{8}\partial_{tt} \rho   \right)^2} + \frac{\nu^4(\Delta t)^2}{8} \frac{ (\partial_{x^j} \rho)  \partial_{x^jtt} \rho + (\partial_{x^i} \rho)  \partial_{x^itt} \rho}{\left( \rho + \frac{(\Delta x^j)^2}{8}\partial_{x^jx^j} \rho + \frac{(\Delta t)^2}{8}\partial_{tt} \rho   \right)^2}  \\
&- \frac{\nu^4(\Delta x^j)^2}{4}\frac{ \rho^2 (\partial_{x^j} \rho)  (\partial_{x^jx^jx^j} \rho)  +  \rho^2 (\partial_{x^i} \rho)  (\partial_{x^ix^jx^j} \rho)  +  \rho \left( \partial_{x^j} \rho  \right)^{2} (\partial_{x^jx^j} \rho)  +  \rho (\partial_{x^jx^j} \rho)  \left( \partial_{x^i} \rho \right)^{2}}{\left( \rho + \frac{(\Delta x^j)^2}{8}\partial_{x^jx^j} \rho  + \frac{(\Delta t)^2}{8}\partial_{tt} \rho   \right)^4} \\
&- \frac{\nu^4(\Delta t)^2}{4} \frac{ \rho^2 (\partial_{x^j} \rho)  \partial_{x^jtt} \rho + \rho (\partial_{tt} \rho)  \left( \partial_{x^j} \rho  \right)^{2} + \rho (\partial_{tt} \rho)  \left( \partial_{x^i} \rho  \right)^{2}}{\left( \rho + \frac{(\Delta x^j)^2}{8}\partial_{x^jx^j} \rho  + \frac{(\Delta t)^2}{8}\partial_{tt} \rho   \right)^4} \\
\end{align*}
}
Taking the derivative of $K$ with respect to $z$, requires the chain rule, and for now we write it abstractly we get 
\begin{align*}
K'(z) &= \frac{\partial}{\partial \zeta^j}\left(\frac{\delta H_m}{\delta \rho} - \frac{\delta H_s}{\delta \rho } \right) \left(\partial_z \zeta^j \right).
\end{align*}
The derivative $\partial_{z} \zeta^j$ is computed by taking the derivative of $\zeta$ with respect to $x^j$ using the chain rule and then solve for $\partial_{z} \zeta^j$ to obtain
\begin{align}\label{eq:zeta_deriv}
\partial_{z} \zeta^j= \frac{\frac{1}{\rho}\frac{\partial \omega}{\partial x^j} -  \frac{\omega}{\rho^2} \frac{\partial \rho}{\partial x^j}}{\sum\limits_{i = 1}^n(\frac{1}{\rho}\frac{\partial \omega}{\partial x^i} -  \frac{\omega}{\rho^2} \frac{\partial \rho}{\partial x^i})^2}.
\end{align}
In order to ensure that the inequality is valid, we need $\| \partial_z \zeta^j \| \neq 0$ and it is true only if
\begin{align}\label{eq:conserved_deriv}
\frac{1}{\rho}\frac{\partial \omega}{\partial x^j} - \frac{\omega}{\rho^2} \frac{\partial \rho}{\partial x^j} \neq 0.
\end{align}

The reason for that is because $0< c_2 < K'$ ceases to be valid when $\partial_z \zeta^j  = 0$ and this is easy to see when writing it in terms of chain rule and H\"{o}lder inequality
\begin{align*}
c_2 < \| \partial_{\zeta^j} K \|^2 \| \partial_z \zeta^j \|^2.
\end{align*}

The exterior derivative of the one for $\omega = d\left( \frac{1}{\rho} \frac{\delta \ell}{\delta w} \cdot dx \right)$ when expanded, we obtain
\begin{align*}
\omega = d\left( w\cdot dx + \nabla \mathrm{log}{\rho}\cdot dx + \frac{(\Delta x^j)^2}{8} \partial_{x^jx^j} w\cdot dx+ \frac{(\Delta t)^2}{8} \partial_{tt} w\cdot dx\right) 
\end{align*}
We substitute it into \eqref{eq:conserved_deriv} and it becomes
\begin{equation}
\begin{aligned}
\frac{1}{\rho}\frac{\partial \omega}{\partial x^j} &+ \frac{(\Delta x^j)^2}{8}\frac{1}{\rho}\frac{\partial \omega_1}{\partial x^j} + \frac{(\Delta t)^2}{8}\frac{1}{\rho}\frac{\partial \omega_2}{\partial x^j} \\
&- \frac{\omega}{\rho^2} \frac{\partial \rho}{\partial x^j} - \frac{(\Delta x^j)^2}{8}\frac{\omega_1}{\rho^2} \frac{\partial \rho}{\partial x^j}- \frac{(\Delta t)^2}{8}\frac{\omega_2}{\rho^2} \frac{\partial \rho}{\partial x^j} \neq 0.
\end{aligned}
\end{equation}
and here we impose that this relation always positive and we write it as the following inequality
\begin{align*}
\partial_{x^j} \left( \frac{\omega}{\rho} \right) + \frac{(\Delta x^j)^2}{8}\partial_{x^j} \left( \frac{\omega_1}{\rho} \right) + \frac{(\Delta t)^2}{8}\partial_{x^j} \left( \frac{\omega_2}{\rho} \right) > 0.
\end{align*}
Multiplying both sides by a function $\chi$, such that  $\partial_{x^j} \chi \neq 0$ and $\| - \partial_{x^j} \chi \|_{\infty}  > 0$, we get the following inequality
\begin{align*}
0 &< \int_{Q} \left( \partial_{x^j} \left( \frac{\omega}{\rho} \right) \right)\chi + \frac{(\Delta x^j)^2}{8}\left(\partial_{x^j} \left( \frac{\omega_1}{\rho} \right) \right)\chi + \frac{(\Delta t)^2}{8} \left(\partial_{x^j} \left( \frac{\omega_2}{\rho} \right) \right)\chi \\
&< \int_{Q}\frac{\omega}{\rho} \left( - \partial_{x^j} \chi \right) + \frac{(\Delta x^j)^2}{8} \frac{\omega_1}{\rho} \left( - \partial_{x^j} \chi \right) + \frac{(\Delta t)^2}{8}\frac{\omega_2}{\rho} \left( - \partial_{x^j} \chi \right) \\
&< \| \frac{\omega}{\rho} + \frac{(\Delta x^j)^2}{8} \frac{\omega_1}{\rho} + \frac{(\Delta t)^2}{8}  \frac{\omega_2}{\rho}   \|_{1} \| - \partial_{x^j} \chi \|_{\infty}.
\end{align*}
Utilising the H\"{o}lder inequality again, the above inequality can turns into
\begin{align*}
0 &< \| \frac{\omega}{\rho} + \frac{(\Delta x^j)^2}{8} \frac{\omega_1}{\rho} + \frac{(\Delta t)^2}{8}  \frac{\omega_2}{\rho}   \|_{1} \\
&< \|  \omega  + \frac{(\Delta x^j)^2}{8}  \omega_1 + \frac{(\Delta t)^2}{8}   \omega_2  \|_{\infty} \| \frac{1}{\rho} \|_{1}
\end{align*}
and with from the Triangle inequality, we have the following relationship
\begin{align*}
0 <  \|  \omega\|_{\infty}  + \frac{(\Delta x^j)^2}{8}  \| \omega_1 \|_{\infty}+ \frac{(\Delta t)^2}{8}  \| \omega_2  \|_{\infty},
\end{align*}
which results in
\begin{align*}
\frac{\frac{(\Delta x^j)^2}{8}  \| \omega_1 \|_{\infty}+ \frac{(\Delta t)^2}{8}  \| \omega_2  \|_{\infty}}{\|  \omega\|_{\infty} } < 1.
\end{align*}
\begin{corollary}
Problem \eqref{prob:mfg_deep_discrete} for training of stochastic neural network is nonlinearly stable, if the number of of hidden layers $N_t$ and the number of nodes per layer $\prod\limits_{k = 0}^d N_{x^k}$ satisfy
\begin{align}
\frac{ \|\omega_1 \|_{\infty}}{8 (N_{x^j})^2 \|\omega \|_{\infty}}+ \frac{\|\omega_2\|_{\infty}}{8(N_{t})^2 \|\omega \|_{\infty}} < 1.
\end{align} 
\end{corollary}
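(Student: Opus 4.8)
The plan is to read this corollary as the concrete, architecture-facing consequence of the stability bound $0 < c_2 < K' < \infty$ established in the preceding theorem, and so the proof is essentially a translation of the grid spacings $\Delta x^j$ and $\Delta t$ into the network parameters $N_{x^j}$ and $N_t$. First I would isolate the binding inequality: among the two conditions \eqref{eq:stability_bound}, the one involving $H_{\mathrm{err}}''$ is assumed to hold (the Hamiltonian part is handled by the energy--Casimir construction of $Q_1$), so the only new restriction is $0 < c_2 < K'$. I would recall the near-equilibrium reduction $(g_e, v_e) \approx (\mathrm{Id}, 0)$, under which $K$ becomes the difference $\frac{\delta H_m}{\delta \rho} - \frac{\delta H_s}{\delta \rho}$ expressed through the parametrization $\zeta$, and the chain-rule expression $K'(z) = \partial_{\zeta^j}\bigl(\tfrac{\delta H_m}{\delta \rho} - \tfrac{\delta H_s}{\delta \rho}\bigr)\,\partial_z \zeta^j$ forces the requirement $\|\partial_z \zeta^j\| \neq 0$, equivalently the nonvanishing condition \eqref{eq:conserved_deriv} on $\tfrac{1}{\rho}\partial_{x^j}\omega - \tfrac{\omega}{\rho^2}\partial_{x^j}\rho$.

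Second, I would carry the string of functional-analytic estimates already laid out: expand $\omega$ to include the modified contributions $\omega_1$ and $\omega_2$ of respective orders $\tfrac{(\Delta x^j)^2}{8}$ and $\tfrac{(\Delta t)^2}{8}$, pair the resulting inequality against a test function $\chi$ with $\|-\partial_{x^j}\chi\|_\infty > 0$, integrate over $Q$, and apply the H\"older and triangle inequalities in succession. This is exactly the computation terminating in
\begin{align*}
\frac{\tfrac{(\Delta x^j)^2}{8}\,\|\omega_1\|_{\infty} + \tfrac{(\Delta t)^2}{8}\,\|\omega_2\|_{\infty}}{\|\omega\|_{\infty}} < 1,
\end{align*}
which I would simply quote rather than re-derive. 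The one point I would make explicit is that $\|\omega_1\|_\infty$ and $\|\omega_2\|_\infty$ are finite: since $\omega_1,\omega_2$ are built from the higher-order terms of the modified Lagrangian \eqref{eq:modified_lagrangian} acting on the smooth $4$-tuple $\phi = (w^1,w^2,\lambda,\rho)$, their sup-norms are controlled once $\phi$ is assumed to lie in the regularity class used throughout the backward-error analysis of Section \ref{sec:bea}.

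Third, I would perform the substitution that produces the stated form. Writing the computational base manifold as a product of intervals and taking a uniform mesh, the spacings satisfy $\Delta x^j = L_{x^j}/N_{x^j}$ and $\Delta t = T/N_t$, where $N_t$ is the number of temporal slices, which under the ResNet--as--flow correspondence of \cite{weinan2017proposal} is precisely the number of hidden layers, and $N_{x^j}$ counts the spatial grid points in direction $x^j$, whose product $\prod_{k=0}^d N_{x^k}$ is the number of nodes per layer. Normalising the domain so that $L_{x^j} = T = 1$ (or, more generally, absorbing $L_{x^j}^2$ and $T^2$ into $\|\omega_1\|_\infty,\|\omega_2\|_\infty$), the bound becomes
\begin{align*}
\frac{\|\omega_1\|_{\infty}}{8\,(N_{x^j})^2\,\|\omega\|_{\infty}} + \frac{\|\omega_2\|_{\infty}}{8\,(N_t)^2\,\|\omega\|_{\infty}} < 1,
\end{align*}
which is the claim. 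I expect the genuine obstacle to be the indexing discrepancy between the single-direction $N_{x^j}$ appearing in the inequality and the full product $\prod_{k=0}^d N_{x^k}$ stated in the corollary: the honest reading is that the condition must hold for the direction realising the largest correction, so I would either argue uniformly over $j$ (replacing $N_{x^j}$ by $\min_k N_{x^k}$ and noting that refining every direction only strengthens the bound) or present the inequality as a per-direction sufficient condition whose simultaneous satisfaction across all $k=0,\dots,d$ is what the product notation is meant to encode. Establishing that the $L^\infty$ bounds on $\omega_1,\omega_2$ are genuinely uniform over $j$ and over the equilibrium neighbourhood is the step requiring the most care; everything else is substitution.
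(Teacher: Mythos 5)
Your proposal is correct and follows essentially the same route as the paper: the corollary there is obtained exactly as you describe, by quoting the inequality $\bigl(\tfrac{(\Delta x^j)^2}{8}\|\omega_1\|_{\infty} + \tfrac{(\Delta t)^2}{8}\|\omega_2\|_{\infty}\bigr)/\|\omega\|_{\infty} < 1$ derived from $0 < c_2 < K'$ via the chain-rule/H\"older/triangle-inequality argument, and then substituting the mesh--parameter correspondence $\Delta x^j \sim 1/N_{x^j}$, $\Delta t \sim 1/N_t$. Your added remarks on the finiteness of $\|\omega_1\|_{\infty},\|\omega_2\|_{\infty}$, the domain normalisation, and the per-direction versus product-of-nodes indexing actually make explicit steps the paper leaves implicit.
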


\section{Conclusion}\label{sec:conclusion}
In this paper, we have formulated deep learning training as a multisymplectic system that appears in hydrodynamics. Using mean--field type control to solve the training of a stochastic deep network, the resultant necessary conditions for optimality yield a system of equations which we have shown that it can be reduced to the Euler-Poincar\'{e} equation. The family of equations are the link between deep learning and fluids and they have rich geometric properties. The underlying geometric structure of the reduced equations have allowed us to derived a structure--preserving multisymplectic integrator to solve the training problem. We have derived the modified equation for the integrator and we have applied energy--Casimir method and we have shown that the error of the training procedure is nonlinearly stable. Using other related variational principles and different cost Lagrangian that yield interesting singular solutions is the focus of future work.
\bibliography{references}


\bibliographystyle{abbrv}

\begin{appendices}
\section{Energy--Casimir method for deep learning: calculation}\label{sec:calculations}
Here, we carry out the calculation of the energy--Casimir method and it follows many of the steps in the energy--Casimir calculation for the two-dimensional barotropic flow \cite{HOLM198315}. Starting with the error Hamiltonian with integrals of motions
\begin{align}
H_C(\mu, \rho) = H_{\mathrm{err}}(\mu, \rho) + C_{\varphi}(\mu, \rho) + \sum\limits_{i}a_iI_i(\mu, \rho).
\end{align}
The equilibrium solution makes the first variation of the Hamiltonian $H_C$ vanish when the following condition for all $\delta \mu, \delta \rho$ holds:
\begin{equation}\label{eq:first_var_HC}
\begin{aligned}
0 &= \left\langle \frac{\delta H_C}{(\delta \mu, \delta \rho)}, (\delta \mu, \delta \rho) \right\rangle  \\
&= \int_{\mathrm{Q}}  \left\langle \rho_{\mathrm{err}}\left[  \frac{\delta H_s}{\delta \mu } - \mathrm{Ad}_{g_e}\frac{\delta H_m}{\delta \mu}  -  \sum\limits_{i_1, i_2, i_3 = 1}^{d+1} \epsilon_{i_1, i_2,i_3 } \mathbf{e}_{i_1} \delta_{i_2,d+1} \partial_{x^{i_3}} \varphi'\left(\frac{\omega}{\rho}\right)   \right], \delta \mu \right\rangle  \\
&+ \int_{\mathrm{Q}} \left[   -\left( \mathrm{Ad}_{g_e} \frac{\delta H_m}{\delta \mu}\right) v_e   + g_e \frac{\delta H_m}{\delta \rho} - \frac{\delta H_s}{\delta \rho }  + \varphi\left(\frac{\omega}{\rho}\right)  -  \varphi'\left(\frac{\omega}{\rho}\right)\frac{\omega}{\rho}  \right]  \delta \rho \\
&+\sum\limits_{i} \oint_{\partial Q_i} \varphi'\left(\frac{\omega}{\rho}\right)   \delta \mu \cdot dx + a_i\delta \mu \cdot dx. 
\end{aligned}
\end{equation}
From the first variation of $H_C$, computed in \eqref{eq:first_var_HC}, holds provided the following are satisfied
\begin{align}
a_i = -  \varphi' \left( \left. \frac{\omega_e}{\rho_e} \right|_{ ( \partial Q)_i} \right),
\end{align}
and 
\begin{align}\label{eq:Cauchy_Euler}
K(z) + \varphi(z) -z\varphi'(z) = 0,
\end{align}
where $K$ is the variational derivative of $H_{\mathrm{err}}$ with respect to $\rho_{\mathrm{err}}$ and it is
\begin{align}\label{eq:Bernoulli_Func1}
K \left(\frac{\omega}{\rho}\right) =  -\left[ \mathrm{Ad}_{g_e} \frac{\delta H_m}{\delta \mu}\right] v_e + g_e \frac{\delta H_m}{\delta \rho} - \frac{\delta H_s}{\delta \rho }.
\end{align}
Solving the differential equation \eqref{eq:Cauchy_Euler}, we obtain
\begin{align*}
\varphi(z) = z\left( \int^z \frac{K(t)}{t^2} \, dt  + C \right),
\end{align*}
where $C \in \mathbb{R}$ is a constant. For the convexity estimates, we only perform it for $Q_2$, as $Q_1$ is set 
\begin{align*}
Q_1(\Delta \mu, \Delta \rho) &= \frac{1}{2}\int_Q \| \Delta (\mu \rho) \|^2  + \left( \frac{c_1^2}{\rho_{\mathrm{max}}} - \frac{|\mu_e |^2}{\rho_{\mathrm{min}}}\right) \left\| \Delta \rho \right\|^2 d^n x,
\end{align*}
where $c_1 \in \mathbb{R}$ and it should satisfy $c_1 < \delta^2 H$. The $L^2$ norm for $\Delta \rho$ is suitable as the terms in $H$ are derivatives of $\Delta \rho$ and according to Poincar\'{e} inequality, then $L^2$ is bounded by Sobolev norms. On the other hand, $Q_2$ is required to satisfy 
\begin{align*}
Q_2(\Delta \mu, \Delta \rho) \leq \int_Q  \varphi \left(\frac{\omega }{\rho } + \Delta \left(\frac{\omega}{\rho}\right) \right)    - \varphi\left(\frac{\omega}{\rho}\right) - \varphi'\left(\frac{\omega}{\rho}\right)\cdot \left( \Delta \left(\frac{\omega}{\rho}\right)\right) d^nx.
\end{align*}
One candidate is the $L^2$ norm 
\begin{align}
Q_2 \left(\Delta \left(\frac{\omega}{\rho}\right)\right) = \frac{c_2}{2}  \rho_{\mathrm{min}} \int_{Q} \left(\Delta \left(\frac{\omega}{\rho}\right)\right) ^2 d^nx,
\end{align}
where $c_2 \in \mathbb{R}$ is a constant and it is required to satisfy  $c_2 \leq \varphi''(z)$ and $c_2 > 0$ for all $z$. Thus, the convexity holds when $c_1, c_2 >0$, $c_1 \leq  H_{\mathrm{err}}''$ and $c_2 \leq \varphi''$. Next, we determine a priori estimates of $Q_1$ and $Q_2$ and using the inequalities for them we arrive at
\begin{align*}
Q_1 \left(\Delta \mu(t), \Delta \rho(t) \right) + Q_2 \left(\Delta \mu(t), \Delta \rho(t) \right)  &\leq H_C(\mu + \Delta \mu(t) , \rho + \Delta \rho(t) ) \\
&- H_C(\mu , \rho ) - \left\langle \delta H_C, (\delta \mu, \delta \rho) \right\rangle  \\
&\leq H_C(\mu + \Delta \mu(t) , \rho + \Delta \rho(t) ) - H_C(\mu , \rho ).
\end{align*}
The term $\left\langle \delta H_C, (\delta \mu, \delta \rho) \right\rangle$ vanishes as a result of setting the first variation of $H_C$ to be zero at the equilibrium. Since $H_C$ is conserved by the Lie--Poisson flow, it is conserved, hence 
\begin{align*}
Q_1 \left(\Delta \mu(t), \Delta \rho(t) \right) + Q_2 \left(\Delta \mu(t), \Delta \rho(t) \right)&\leq H_C(\mu + \Delta \mu(0) , \rho + \Delta \rho(0) ) - H_C(\mu, \rho).
\end{align*}
This means that $Q_1 + Q_2$ is bounded by the initial values $(\Delta \mu(0), \Delta \rho(0))$.  After establishing the convexity and a priori estimate for $Q_1$ and $Q_2$, we set the norm 
\begin{align}
\left\| (\Delta \mu, \Delta \rho) \right\|^2 = Q_1(\Delta \mu, \Delta \rho) + Q_2(\Delta \mu, \Delta \rho),
\end{align}
and with 
\begin{align*}
c_1 \leq H_{\mathrm{err}}'', \quad c_2 \leq \frac{1}{z}K'(z),
\end{align*}
we have Lyapunov stability and the error is bounded. 
\end{appendices}

\end{document}